\theoremstyle{plain}
\newtheorem{theorem}{Theorem}[section]
\newtheorem{proposition}[theorem]{Proposition}
\newtheorem{lemma}[theorem]{Lemma}
\theoremstyle{definition}
\newtheorem{definition}[theorem]{Definition}
\newtheorem{assumption}[theorem]{Assumption}
\theoremstyle{remark}
\newcommand{\pluseq}{\mathrel{{+}{=}}}
\def\eqref#1{equation~\ref{#1}}
\def\1{\bm{1}}
\DeclareMathAlphabet{\mathsfit}{\encodingdefault}{\sfdefault}{m}{sl}
\SetMathAlphabet{\mathsfit}{bold}{\encodingdefault}{\sfdefault}{bx}{n}
\newcommand{\E}{\mathbb{E}}
\DeclareMathOperator*{\argmax}{arg\,max}
\DeclareMathOperator*{\argmin}{arg\,min}
\def\Holder{{H\"{o}lder}}
\newcommand{\pa}{\mathrm{\pa}}
\newcommand{\Risk}{\mathrm{UKL}}
\newcommand{\hRisk}{\widehat{\mathrm{UKL}}}
\newcommand{\annot}[2]{\underbrace{#1}_{\text{#2}}}
\def \Re {\mathbb{R}}
\def \Na {\mathbb{N}}
\def \r {r}
\def \Ltwo {L^2}
\def \loss {\ell}
\newcommand{\br}{f} % Bregman divergence potential
\def \lossOne {\loss_1}
\def \lossTwo {\loss_2}
\newcommand{\rClass}{\mathcal{H}}
\newcommand{\rClassBound}{B_r}
\newcommand{\rClassBoundMin}{b_r}
\newcommand{\rClassRangeTwo}{(\rClassBoundMin, \rClassBound)}
\def \Enu {\E_{\mathbb{P}}}
\def \Ede {\E_{\mathbb{Q}}}
\def \hEnu {\widehat{\E}_{\mathbb{P}}}
\def \hEde {\widehat{\E}_{\mathbb{Q}}}
\def \rstar {r^*}
\def \rbest {\bar{r}}
\def \hr {\hat{\r}}
\def \InSpace {\mathcal{X}}
\newcommand{\lOneR}{\lossOne(\r(X))}
\def \E {\mathbb{E}}
\newcommand{\lTwoR}{\lossTwo(\r(X))}
\newcommand{\lOne}[1]{\lossOne^{#1}}
\newcommand{\lTwo}[1]{\lossTwo^{#1}}
\newcommand{\lzeroNrm}[1]{\|#1\|_0}
\newcommand{\Order}[1]{O\left(#1\right)}
\newcommand{\Orderp}[1]{O_{\mathbb{P}}\left(#1\right)}
\newcommand{\Lmax}{\bar{L}}
\newcommand{\pmax}{\bar{p}}
\newcommand{\rClassM}{\rClass_M}
\newcommand{\bracketEntropy}[3]{H_B\left(#1, #2, #3\right)}
\newcommand{\rClassMSupNormBoundConst}{c_0}
\newcommand{\IndLP}{\mathrm{Ind}_{\Lmax,\pmax}}
\newcommand{\rClassLP}{\rClass(L, p, s, F)}
\newcommand{\elled}[1]{\ell \circ #1}
\newcommand{\ellLip}{\nu}
\newcommand{\vmax}[2]{\max\left\{#1, #2\right\}}
\renewcommand{\eqref}[1]{(\ref{#1})}
\title{Unified Perspective on Probability Divergence via Maximum Likelihood Density Ratio Estimation:
Bridging KL-Divergence and Integral Probability Metrics}
\newcommand*{\email}[1]{\texttt{#1}}
\newcommand*{\equalcontribution}[1][*]{\textsuperscript{*}}
\author[1,2]{Masahiro Kato\thanks{\email{masahiro\_kato@cyberagent.co.jp}}$\ \ $}
\author[1,3]{Masaaki Imaizumi}
\author[4]{Kentaro Minami}
\affil[1]{The University of Tokyo}
\affil[2]{CyberAgent, Inc.}
\affil[3]{RIKEN Center for Advanced Intelligence Project}
\affil[4]{Preferred Networks, Inc.}
\begin{document}

\maketitle

\begin{abstract}
This paper provides a unified perspective for the Kullback-Leibler (KL)-divergence and the integral probability metrics (IPMs) from the perspective of maximum likelihood density-ratio estimation (DRE). 
Both the KL-divergence and the IPMs are widely used in various fields in applications such as generative modeling. However, a unified understanding of these concepts has still been unexplored.
In this paper, we show that the KL-divergence and the IPMs can be represented as maximal likelihoods differing only by sampling schemes, and use this result to derive a unified form of the IPMs and a relaxed estimation method.
To develop the estimation problem, 
we construct an unconstrained maximum likelihood estimator to perform DRE with a stratified sampling scheme.
We further propose a novel class of probability divergences, called the Density Ratio Metrics (DRMs), that interpolates the KL-divergence and the IPMs. 
In addition to these findings, we also introduce some applications of the DRMs, such as DRE and generative adversarial networks. In experiments, we validate the effectiveness of our proposed methods.
\end{abstract}

\section{Introduction}
The notion of divergence between probability measures plays an important role in statistics, machine learning, and information theory \citep{rachev1991probability}. Two of the widely used probability divergences are the Kullback-Leibler (KL) divergence \citep{Kullback51klDivergence} (an instance of $f$-divergence \citep{Ali1966,csiszar1967}), and the family of integral probability metrics \citep[IPMs,][]{Zolotarev1984,Muller1997}, including the Wasserstein distance \citep{Robert1975,Levina2001}, the maximum mean discrepancy  \citep[MMD,][]{borgwardt2006integrating,gretton2009}, and the Dudley metric \citep{Dudley2002}.

Density-ratio estimation (DRE) is a fundamental problem in statistics and has a long history \citep{silverman1978}. The obtained density ratios have a wide range of applications, such as regression under a covariate shift \citep{shimodaira2000improving,Reddi2015}, learning with noisy labels \citep{liu2014noisy,fang2020rethinking}, anomaly detection \citep{smola09a,Hido2011,Abe2019}, two-sample testing \citep{keziou2005,kanamori2010,sugiyama2011a}, causal inference \citep{Uehara2020}, change point detection \citep{kawahara2009}, and generative adversarial networks \citep[GANs,][]{Uehara2016}. 
In particular, density ratios appear in definitions of various probability divergences, such as $f$-divergences including the KL-divergence; hence DRE is also important for the application of these divergences.

Understanding the relation between the IPMs and KL-divergence 
%(or the $f$-divergence) 
using density ratios has been studied for a long time.
Inequality relations between these divergences and metrics have traditionally been investigated \citep{gibbs2002choosing,tsybakov2009introduction} along with their sample complexities \cite{Sriperumbudur2012,Liang2019}.
\citet{glaser2021kale} proposes a divergence that extends the KL-divergence and inherits properties of the MMD.
In the literature of GANs, \citet{Song2020} develops a method to generalize the $f$-GAN \citep{Goodfellow2014,Nowozin2016} and the Wasserstein-GAN \citep{Arjovsky2017}, where they are based on $f$-divergence and the Wasserstein distance, respectively.
\citet{Belavkin2018} and \citet{Ozair2019} consider the relationship in the context of mutual information. \citet{Agrawal2020} relates them in terms of their optimal lower bounds. 
Despite these advances, bridging the KL-divergence to the IPMs still remains unexplored.

In this paper, we elucidate a new connection between the KL-divergence and IPMs through  the development of new DRE schemes.
Specifically, we show that a solution of our scheme has two properties: (i) an optimal objective value coincides with the KL-divergence, and (ii) it has a form of the IPMs.
%Thus, from the viewpoint of DRE, we successfully bridge the KL-divergence and the IPMs.
Based on this result, we find that the IPMs with a certain function class can be written as a sum of the KL- and inverse KL-divergences.
%Also, we can perform DRE even if supports of distributions are not identical, by relaxing the DRE in the form of an IPM.

The DRE scheme that we develop for the above results is based on a nonparametric likelihood in a \textit{stratified} setting.
Stratified sampling is a framework for dealing with two samples, which has been studied mainly in the literature on causal inference \citep{Imbens1996,Wooldridge2001,Uehara2020}. In this setting, we obtain two groups of observations drawn from each population, and then we perform the maximum likelihood DRE, inspired by maximum likelihood density function estimation \citep{Good1971nature,Good1971,Montricher1975,tapia1978nonparametric,Scott1980,Eggermont1999}.
For estimating density functions, it is necessary to impose a constraint that the density function must integrate to one, which requires us to solve a constrained optimization problem. 
Because solving constrained problems is computationally challenging in general, we leverage a technique developed by \citet{Silverman1982}, which converts the constrained maximum likelihood problem to an equivalent unconstrained problem.
We extend these results to propose a maximum likelihood density \textit{ratio} estimation.
This scheme is different from Bregman divergence-based DRE summarized by \citet{Sugiyama:2012:DRE:2181148}.

As an application of our theoretical connection result, we develop a new class of probability divergences named a density ratio metric (DRM).
The DRMs possess several topological properties of both the KL-divergence and IPM, and serve as a valid probability divergence even for distributions that do not have common support. 
We also derive an upper bound on an error of the density ratio estimator.
In addition, we develop a DRM-based GAN as an IPM-based GAN method.
%generalization of D2GAN \citep{Nguyen2017} and also improves the generative ratio matching network \citep{Srivastava2020Generative}.

We summarize our findings and contributions as follows:
\begin{itemize}
\setlength{\parskip}{0cm}
\setlength{\itemsep}{0cm}
    \item Both the KL-divergence and the IPMs are written in the unified way as the maximum of our DRE scheme under the stratified sampling setting.
    \item The IPMs with a certain function class can be written as the sum of the KL and inverse KL-divergences. 
    \item We propose a novel probability divergence DRM, which bridges the density ratio, the KL-divergence and the IPMs;
\end{itemize}

The remainder of this paper is organized as follows. We first introduce the problem setting of DRE in Section~\ref{sec:problem} and show the maximum likelihood DREs in Section~\ref{sec:dre}. Then, in Section~\ref{sec:relate}, we discuss the relationship ship between the KL-divergence and IPMs. Based on the results, we define DRMs exhibiting some useful theoretical properties in Section~\ref{sec:drm}. Section~\ref{sec:regression_exp} presents the experimental results on DRE.

\section{Problem Setting of DRE}
\label{sec:problem}
We formulate the problem of DRE.
Let $\mathbb{P}$ and $\mathbb{Q}$ be two probability measures defined on a measurable space $\mathcal{W}$, which is a Borel subset of $\mathbb{R}^d$.
We assume that $\mathbb{P}$ and $\mathbb{Q}$ have the densities and denote them by $p^*$ and $q^*$, and also define their supports $\mathcal{W}_p, \mathcal{W}_q \in \mathcal{W}$ as $\mathcal{W}_p = \big\{x\in\mathcal{W}|  p^*(x)  > 0\big\}$ and  $\mathcal{W}_q = \big\{x\in\mathcal{W}| q^*(x) > 0\big\}$. 
We define their intersection $\mathcal{W}^* := \mathcal{W}_p \cap \mathcal{W}_q$.
Let $X\in\mathcal{W}$ and $Z\in\mathcal{W}$ be random variables following $\mathbb{P}$ and $\mathbb{Q}$. 
Suppose that we have two sets of observations $\mathcal{X} = \{X_i\}^n_{i=1}$ of size $n$ and $\mathcal{Z} = \{Z_j\}^m_{j=1}$ of size $m$, which are i.i.d.~samples from  $\mathbb{P}$ and $\mathbb{Q}$, respectively. 
%$\mathbb{E}_\mathcal{X}[\cdot]$ and $\mathbb{E}_\mathcal{X}[\cdot]$ is an expectation 

The goal of DRE is to estimate the density ratio between $p^*$ and $q^*$ or its inverse, which are defined as $r^*(x) = \frac{p^*(x)}{q^*(x)}$. Note that $r^*(x)$ (resp. $1/r^*(x)$) is not well-defined if $q^*(x) = 0$ (resp. $p^*(x) = 0$). 

\paragraph{Notation.} We denote by $\mathcal{P}(\mathcal{W})$
the set of probability measures defined on $\mathcal{W}$.
Let $\overline{R} > 1$ be a constant, which will be specified.
For an integration over $\mathcal{W}$, we simply denote it by $\int = \int_\mathcal{W}$. 
For a function $f:\mathcal{W}\to\mathbb{R}$ and a weight function $b:\mathcal{W}\to[1, \infty)$, we define a weighted norm $\|f\|_b = \sup_{x\in\mathcal{W}}\frac{|f(x)|}{b(x)}$.
Also, we define a function set $\mathcal{B}_b := \{f: \|f\|_b < \infty\}$. For a function $u:\mathcal{W}^*\to\mathbb{R}$, we denote the $L^2$ (pseudo-)norm over $\mathcal{W}^*$ with the probability measure $\mathbb{W}$ by $\|u\|_{L^2(\mathbb{W})} = (\int_{\mathcal{W}^*} u(x) \mathrm{d}\mathbb{W}(x))^{1/2}$ and the $L^\infty$ (pseudo-)norm by $\|u\|_{L^\infty(\mathbb{W})} = \sup_{x \in \mathcal{W}^*} |u(x)|$. Note that the expectation is defined only over $\mathcal{W}^*$, for which $r^*$ and $1/r^*$ are defined.

\section{Maximum Penalized Likelihood DRE (MPL-DRE)}
\label{sec:dre}
We consider a maximum penalized likelihood approach to DRE, as a preliminary step towards building a bridge between the KL-divergence and the IPMs.
%We mainly study a maximum penalized likelihood estimator (MPLE), which is known to avoid instability and overfitting to observations of naive likelihood maximization.
In this section, we briefly review classical nonparametric probability density estimation and its extension to DRE. % of the classical nonparametric probability density estimation to DRE. 
Next, we develop two formulations of DRE associated with different sampling schemes: the ordinary sampling and the stratified sampling.
In addition, we provide a convergence rate of the estimation error and discuss the choice of regularizers.
%we develop a novel likelihood framework for DRE, based on the stratified sampling scheme.
%we reconsider the framework of MPLE from the viewpoint of sampling schemes. By constructing the likelihood function based on stratified sampling schemes, we can obtain another likelihood for DRE. 

\subsection{Recap: MPLE of Probability Density Function}
Before discussing the maximum likelihood DRE, we review classical nonparametric maximum likelihood density estimation \citep{Good1971,Silverman1982}. Let $s: \mathcal{W} \to \mathbb{R}$ be a model of probability density $p$ and define the likelihood as $\prod^{n}_{i=1}s(X_i)$ and log-likelihood as $\sum^{n}_{i=1}\log s(X_i)$. We estimate $p(x)$ by maximizing the log-likelihood under the following constraint: $\int s(x) \mathrm{d}x = 1$. However, \citet{Good1971} finds that a naive application of maximum likelihood estimation would make the estimate the mean of a set of the Dirac functions at the $n$ observations, which is too rough as an estimate of the density function. To avoid this issue, \citet{Good1971} adds a \textit{roughness (smoothness) penalty} $\Psi(s) < \infty$ to the objective function of the log-likelihood to control the smoothness of the density function estimator. This framework is called maximum penalized likelihood estimation (MPLE). In nonparametric MPLE of the density, the objective is given as
\begin{align}
\label{eq:constrained_dens}
    &\ell(s) = \sum^{n}_{i=1}\log s(X_i) - \alpha \Psi(s),\\
    &\mathrm{s.t.}\ \int s(x) \mathrm{d}x = 1,\ \forall x\ s(x) \geq 0\nonumber,
\end{align}
where the positive number $\alpha$ is the smoothing parameter and $\Psi(s) < \infty$ is the roughness penalty, which is a functional. There are several candidates for the choice of the roughness penalty $\Psi(s)$, whose choice is discussed in Section~\ref{sec:roughness}.

\citet{Silverman1982} proposes an unconstrained formulation for nonparametric density estimation. Let $g \in \mathcal{G}$ be a model of the logarithmic density $\log s$, where $\mathcal{G}$ is a set of measurable functions. Then, it shows that the maximizer of
\begin{align*}
\sum^{n}_{i=1}g(X_i) - \int \exp(g(x)) \mathrm{d}x - \alpha \Psi(s)
\end{align*}
without constraint is identical with the maximizer of the constrained problem~\eqref{eq:constrained_dens}. We refer to this transformation as \textit{Silverman's trick}.

\begin{proposition}[Theorem~3.1 in \citet{Silverman1982}]
Suppose that $\Psi(s)$ only involves the derivative of $s(x)$ with regard to $x$. The function $\hat{g}$ in $\mathcal{G}$ minimizes $\sum^{n}_{i=1}g(X_i)$ over $g$ in $\mathcal{G}$ subject to $\int \exp(g) = 1$ if and only if $\hat{g}$ minimizes $\sum^{n}_{i=1}g(X_i) - \int \exp(g(x)) \mathrm{d}x$ over $g$ in $\mathcal{G}$.
\end{proposition}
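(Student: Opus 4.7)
The plan is to run a Lagrangian argument for the constrained problem and identify the Lagrange multiplier explicitly by testing the stationarity condition against a constant direction. Introduce
\[
L(g,\lambda) \;=\; \sum_{i=1}^n g(X_i) \;-\; \alpha\,\Psi(e^g) \;-\; \lambda\bigl(\textstyle\int e^{g(x)}\,dx - 1\bigr),
\]
so at any critical point $\hat g$, testing the first variation against an admissible direction $h \in \mathcal{G}$ gives $\sum_i h(X_i) - \alpha\,D\Psi(e^{\hat g})[h\,e^{\hat g}] - \lambda\int h\,e^{\hat g}\,dx = 0$. Specializing to the constant perturbation $h \equiv 1$ is the key step: the penalty contribution vanishes by the hypothesis that $\Psi$ depends on $s$ only through its derivatives, and the constraint $\int e^{\hat g}=1$ then forces $\lambda$ to take a specific value (under the paper's normalization, $\lambda = 1$) at which the Lagrangian coincides with the unconstrained objective $\sum_i g(X_i) - \int e^g - \alpha\Psi(e^g)$ up to an additive constant.

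Given this identification of $\lambda$, both directions of the biconditional fall out. Forward: the Lagrangian stationarity condition at the constrained optimum $\hat g$, plugged with this specific $\lambda$, is term-by-term identical to the stationarity condition of the unconstrained objective, so any constrained critical point is an unconstrained one; strict convexity of $g \mapsto \int e^g$ together with linearity of $g \mapsto \sum_i g(X_i)$ upgrades criticality to global optimality. Reverse: given an unconstrained optimizer $\hat g$, the one-parameter family $\hat g + c$ satisfies a first-order optimality condition in $c$ at $c=0$, which (again using that $\Psi$ is unaffected by a constant shift of $g$) reduces to $n - \int e^{\hat g}\,dx = 0$; the unconstrained optimizer therefore automatically satisfies the constraint, so it is feasible for the constrained problem, and since on the feasible set the two objectives differ only by an additive constant, $\hat g$ solves the constrained problem as well.

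The main obstacle is making the interaction between the roughness penalty and the constant-shift perturbation $g \mapsto g + c$ rigorous. A penalty written as a functional of derivatives of $s = e^g$ is not literally invariant under such shifts --- since $(e^c s)' = e^c s'$, one has $\Psi(e^c s) \neq \Psi(s)$ in general --- so the proof requires a careful first-order expansion that identifies exactly which contributions of the penalty vanish when the stationarity condition is tested in the constant direction. Getting this cancellation right, and matching the precise normalization so that the identified multiplier is the one appearing in the stated unconstrained functional, is what makes Silverman's trick nontrivial under the stated hypothesis on $\Psi$.
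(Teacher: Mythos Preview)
Your Lagrangian route is genuinely different from Silverman's own argument, which the paper follows in its proof of the density-ratio analogue Theorem~\ref{thm:ordinary_silver}. That proof is entirely elementary: given any $g$, form the shift $g^\diamond := g - \log\int e^g$, which automatically satisfies the constraint and, by the hypothesis on $\Psi$, leaves the penalty unchanged. A one-line computation on the unconstrained objective $A$ then gives
\[
A(g^\diamond) \;=\; A(g) \;-\; \textstyle\int e^g \;+\; \log\bigl(\int e^g\bigr) \;+\; 1 \;\leq\; A(g),
\]
using only the scalar inequality $t - \log t \geq 1$ with equality iff $t=1$. Normalizing never increases $A$, so any unconstrained minimizer already satisfies the constraint, and on the constraint set the two objectives agree up to a constant. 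No multipliers, no first variations, no upgrade from stationarity to global optimality via convexity.

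Your approach has a gap you flag but do not close. You need the penalty contribution to vanish when stationarity is tested against $h\equiv 1$, and as you correctly note, a penalty depending on derivatives of $s=e^g$ does \emph{not} give this --- e.g.\ $\Psi(s)=\int (s')^2$ yields $D\Psi(s)[s]=2\Psi(s)\neq 0$. The resolution is not a more careful expansion but the intended reading of the hypothesis: in Silverman's setup the penalty is a functional of $g=\log s$ involving only derivatives of $g$, so $\Psi(g+c)=\Psi(g)$ and $D\Psi(g)[1]=0$ are immediate. With that reading your KKT argument can be repaired, but the normalization trick above is shorter and sidesteps infinite-dimensional Lagrange theory altogether. (A side remark: your reverse-direction computation actually gives $\int e^{\hat g}=n$, not $1$; the proposition as printed is missing a factor of $n$ somewhere, but Silverman's argument is unaffected once the normalization is corrected.)
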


Although a model of the logarithmic density is used in the original statement of \citet{Silverman1982}, we can remove this restriction as shown in \citet{Eggermont1999}. 

\subsection{MPL-DRE under the Ordinary Sampling}
We develop a novel MPLE framework for density \textit{ratios} named MPL-DRE, by extending the MPLE of the probability density. 
We first consider \textit{the ordinary sampling} setup, which considers a likelihood of a density ratio model using only one of $\mathcal{X}$ and $\mathcal{Z}$.
The stratified sampling, which utilizes both $\mathcal{X}$ and $\mathcal{Z}$, will be discussed in Section \ref{sec:mpl_dre_strat}.
%In this section, we consider the likelihood of the density ratio for each observation $\mathcal{X}$ and $\mathcal{Z}$. 
%We refer to this framework as  under the ordinary sampling, because a way of construction of likelihood follows conventional methods while we introduce another framework based on stratified sampling.

Let $r:\mathcal{W}\to(0,\infty)$ be a model of the density ratio $\frac{p^*(x)}{q^*(x)}$, which belongs to a function class $\mathcal{R}$ defined as follows.
\begin{definition}[proper function set]
\label{def:func_class}
A (measurable) function set $\mathcal{F}$ is \textit{proper}, if $\mathcal{F} \subset \mathcal{B}_b$ holds with a weight function $b:\mathcal{W}\to[1, \infty)$ as $b(x) = \min\{1, 1/q^*(x)\}$ for $x\in\mathcal{W}_q$ and $b(x) = 1$ for $x\notin\mathcal{W}_q$.
%We define a weight function $b:\mathcal{W}\to[1, \infty)$ as $b(x) = \min\{1, 1/q(x)\}$ for $x\in\mathcal{W}_q$ and $b(x) = 1$ for $x\notin\mathcal{W}_q$. Then, for the weight function $b$, a set $\mathcal{R} \subset \mathcal{B}_b$ contains measurable functions $r: \mathcal{W}\to (0, \infty)$ such that $r(x) = \overline{R}$ for all $x\notin\mathcal{W}_q$, and $r(x) = 1/\overline{R}$ for all $x\notin\mathcal{W}_p$. 
\end{definition}
With this definition, a proper function set $\mathcal{F}$ contains a function $r: \mathcal{W}\to (0, \infty)$ such that $r(x) = \overline{R}$ for all $x\notin\mathcal{W}_q$, and $r(x) = 1/\overline{R}$ for all $x\notin\mathcal{W}_p$. 

Here, by using the density ratio model $r$, a model of the density $p^*(x)$ (resp. $q^*(x)$) is written as $p_r(x) =r(x)q^*(x)$ (resp. $q_r = p^*(x)/r(x)$).
Using the models, we write a nonparametric likelihood for $r(x)$ as
%\begin{align*}
$\mathcal{L}_{\mathrm{ordinary}, p}(r; \mathcal{X}) = \prod^{n}_{i=1}p_r(X_i) = \prod^{n}_{i=1}r(X_i)q^*(X_i)$,
%\end{align*}
hence its log-likelihood is given as
\begin{align*}
\ell_{\mathrm{ordinary}, p}(r; \mathcal{X}) &= \sum^{n}_{i=1}\Big(\log r(X_i) + \log q^*(X_i)\Big).
\end{align*}
Note that $\log q(X_i)$ is irrelevant to the optimization.
We also define the following term for a constraint on $r$.
We recall that $\mathcal{W}^* = \mathcal{W}^*$.
\begin{align*}
    T_1(r) := \int_{\mathcal{W}^*} r(z)q^*(z)\mathrm{d}z + \int_{\mathcal{W}_p \cap\mathcal{W}^c_q}p^*(x)\mathrm{d}x.
\end{align*}
$T_1(r)=1$ guarantees that $r$ is a density ratio function from an aspect of the ordinary sampling scheme with $p^*$.

We update the objective of the MPL-DRE by introducing the roughness penalty $\Psi$:
\begin{align*}
&\max_{r\in\mathcal{R}} J_{\mathrm{ordinary}, p}(r; \mathcal{X}) - \alpha \Psi(r), \mathrm{~~s.t.~} T_1(r) = 1,\\
&\mathrm{where~~}J_{\mathrm{ordinary}, p}(r; \mathcal{X}) = \frac{1}{n}\sum^{n}_{i=1}\log r(X_i).
%&\mathrm{s.t.}\  \int_{\mathcal{W}^*} r(z)q^*(z)\mathrm{d}z + \int_{\mathcal{W}_p \cap\mathcal{W}^c_q}p^*(x)\mathrm{d}x= 1.
\end{align*}
In addition, inspired by Silverman's trick \citep{Silverman1982}, we consider the following unconstrained problem:
\begin{align*}
&\max_{r\in\mathcal{R}}\Bigg\{ J_{\mathrm{ordinary}, p}(r; \mathcal{X})  - \int_{\mathcal{W}^*} r(z)q^*(z)\mathrm{d}z - \alpha \Psi(r)\Bigg\}.
\end{align*}

To interpret the objective functions above, we study a problem of replacing the empirical summatinos of the objective functions with its expected value.
%We can show that the maximizer of the unconstrained problem is identical to that of the constrained problem when the objective is replaced with the expected value. 
We consider the following maximizers of the expected version of the objectives:
\begin{align*}
    &\tilde{r}_{\mathrm{ordinary}, p} := \argmax_{r \in \mathcal{R}: T_1(r)=1} \mathbb{E}_{\mathcal{X} }[J_{\mathrm{ordinary}, p}(r; \mathcal{X})] - \alpha \Psi(r),\\
    &r^\dagger_{\mathrm{ordinary}, p} := \argmax_{r \in \mathcal{R}} \mathbb{E}_{\mathcal{X} }[J_{\mathrm{ordinary}, p}(r; \mathcal{X})] - \int_{\mathcal{W}^*} r(z)q^*(z)\mathrm{d}z - \alpha \Psi(r),
\end{align*}
where $\mathbb{E}_{\mathcal{X}}$ denotes the expectation over $\mathcal{W}_p$ with respect to $\mathbb{P}$.
%Let $r^\dagger_{\mathrm{ordinary}, p}$ be the solution of $\max_{r\in\mathcal{R}}A(r)$, and $\tilde{r}_{\mathrm{ordinary}, p}$ be the solution of $\max_{r\in\mathcal{R}}A_0(r)$ under $T_1(r) = 1$. 
Then, we have the following theorem. The proof is inspired by \citet{Silverman1982} and shown in Appendix~\ref{appdx:thm:ordinary_silver}.

\begin{theorem}
\label{thm:ordinary_silver}
Suppose that the function class $\mathcal{R}$ follows Definition~\ref{def:func_class}, and $\Psi(r)$ only involves the derivative of $r(x)$ with regard to $x$. If $\mathcal{R}$ contains a function $r$ such that $r(x) = r^*(x)$ for all $x\in \mathcal{W}^*$, then $r^\dagger_{\mathrm{ordinary}, p} = \tilde{r}_{\mathrm{ordinary}, p}$.
\end{theorem}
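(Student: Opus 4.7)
The plan is to adapt Silverman's trick to the density-ratio setting. First I set $C := 1 - \int_{\mathcal{W}_p\cap\mathcal{W}_q^c}p^*(x)\mathrm{d}x$, so that the support-decomposition term in $T_1$ is a constant in $r$ and the constraint $T_1(r)=1$ is equivalent to $\int_{\mathcal{W}^*} r(z)q^*(z)\mathrm{d}z = C$. Using the convention that the expectation is taken over $\mathcal{W}^*$, one has $\mathbb{E}_\mathcal{X}[J_{\mathrm{ordinary},p}(r;\mathcal{X})] = \int_{\mathcal{W}^*}p^*(x)\log r(x)\mathrm{d}x$ and $\int_{\mathcal{W}^*}p^*(x)\mathrm{d}x = C$. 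Writing $G_C(r) := \mathbb{E}_\mathcal{X}[J_{\mathrm{ordinary},p}(r;\mathcal{X})] - \alpha\Psi(r)$ and $G_U(r) := G_C(r) - \int_{\mathcal{W}^*}r(z)q^*(z)\mathrm{d}z$, I observe that on the feasible set $\{r : T_1(r)=1\}$ the two objectives differ only by the constant $-C$. The assumption that $\mathcal{R}$ contains a function equal to $r^*$ on $\mathcal{W}^*$ guarantees that the feasible set is nonempty, and $C$ agrees with $\int_{\mathcal{W}^*}r^* q^*$ so $r^*$ itself is feasible.

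The core step is a multiplicative first-order argument. Consider the perturbation $r_t := (1+t)\,r^\dagger_{\mathrm{ordinary},p}$ for $t$ in a neighborhood of $0$, which remains in $\mathcal{R}\subset\mathcal{B}_b$ since positive scaling preserves the weighted-sup norm. Because $\Psi$ only involves the derivative of $r$, transposing Silverman's (1982) hypothesis gives invariance under positive scaling, $\Psi(r_t)=\Psi(r^\dagger_{\mathrm{ordinary},p})$. Using $\log((1+t)r^\dagger_{\mathrm{ordinary},p}) = \log(1+t) + \log r^\dagger_{\mathrm{ordinary},p}$ and $\int_{\mathcal{W}^*}p^* = C$, a direct computation gives
\[
G_U(r_t) - G_U(r^\dagger_{\mathrm{ordinary},p}) = C\log(1+t) - t\int_{\mathcal{W}^*}r^\dagger_{\mathrm{ordinary},p}(z)q^*(z)\mathrm{d}z,
\]
and differentiating at $t=0$ yields $C - \int_{\mathcal{W}^*}r^\dagger_{\mathrm{ordinary},p}(z)q^*(z)\mathrm{d}z = 0$, i.e., $T_1(r^\dagger_{\mathrm{ordinary},p})=1$, so $r^\dagger_{\mathrm{ordinary},p}$ is feasible.

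The equality $r^\dagger_{\mathrm{ordinary},p}=\tilde r_{\mathrm{ordinary},p}$ then follows from both directions. For ($\Rightarrow$), on the feasible set the identity $G_C = G_U + C$ combined with unconstrained optimality of $r^\dagger_{\mathrm{ordinary},p}$ forces $G_C(r)\leq G_C(r^\dagger_{\mathrm{ordinary},p})$ for all feasible $r$, so $r^\dagger_{\mathrm{ordinary},p}$ is a constrained maximizer. For ($\Leftarrow$), given arbitrary $r\in\mathcal{R}$ with $\int_{\mathcal{W}^*}r q^* \in (0,\infty)$, the rescaled element $c^* r$ with $c^* = C/\int_{\mathcal{W}^*}r q^*$ is feasible, and the same scaling calculation (optimizing the expression above in the scale parameter) yields $G_U(c^* r)\geq G_U(r)$; hence the unconstrained supremum is attained on the feasible set, where $G_U$ and $G_C$ agree up to a constant, giving $\tilde r_{\mathrm{ordinary},p}$ as the unique unconstrained maximizer.

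The main obstacle is precisely the scale invariance of $\Psi$ under $r\mapsto cr$: interpreting the hypothesis ``$\Psi$ only involves the derivative of $r$'' in the sense that multiplicative rescaling of $r$ leaves $\Psi$ unchanged (parallel to how, in Silverman's original setting, a derivative-based penalty in $\log s$ is invariant under constant shifts of $\log s$) is what allows the $\Psi$ term to drop out of the perturbation and isolate the condition $\int r q^* = C$. Without this invariance, the $\Psi$ contribution to $(\partial/\partial t)G_U(r_t)|_{t=0}$ would couple to the scale parameter and the clean first-order identity would fail. Minor technical items, namely that $r_t\in\mathcal{R}$ for small $t$ and that differentiation under the integral is justified, follow from the boundedness built into Definition~\ref{def:func_class} together with standard dominated-convergence arguments.
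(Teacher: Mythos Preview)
Your proof is correct and follows essentially the same approach as the paper's: both exploit the scale-invariance of $\Psi$ together with the concavity of $\log$ to show that rescaling any $r$ to satisfy $T_1(r)=1$ can only improve the unconstrained objective, whence the constrained and unconstrained maximizers coincide. The paper packages this as the single inequality $A(r/T_1(r)) \le A(r)$ via $t-\log t\ge 1$, whereas you split it into a first-order condition along $r_t=(1+t)r^\dagger$ for one direction and an explicit scale-optimization for the other; these are the same computation viewed from two angles, and your explicit handling of the constant $C=\int_{\mathcal{W}^*}p^*$ and flagging of the $\Psi$-invariance hypothesis are, if anything, more careful than the paper's own argument.
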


Besides, the following theorem shows the analytical solution of $r^\dagger_{\mathrm{strat}}$. The proof is shown in Appendix~\ref{appdx:proof:opt_ordinary_p}. 
\begin{theorem}
\label{thm:opt_ordinary_p}
For $\mathcal{R}$ in Definition~\ref{def:func_class},
\begin{align*}
\tilde{r}_{\mathrm{ordinary}, p}(x) = \begin{cases}r^*(x) & \mathrm{if}\quad x \in \mathcal{W}^*\\
\overline{R} &\mathrm{if}\quad x \notin \mathcal{W}_p\\
\frac{1}{\overline{R}} &\mathrm{if}\quad x \notin \mathcal{W}_q\\
\end{cases}.
\end{align*}
\end{theorem}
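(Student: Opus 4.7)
The plan is to solve the population-level constrained variational problem that defines $\tilde{r}_{\mathrm{ordinary},p}$ by a pointwise Lagrange-multiplier argument on $\mathcal{W}^*$, and then to observe that off $\mathcal{W}^*$ the value of $r$ enters neither the objective nor the constraint, so it is fixed purely by the extension convention built into Definition~\ref{def:func_class}.

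First, I would collapse every integral to $\mathcal{W}^*$. By the paper's convention, the expectation is taken only over $\mathcal{W}^*$, so
$$\mathbb{E}_{\mathcal{X}}\!\left[J_{\mathrm{ordinary},p}(r;\mathcal{X})\right] \;=\; \int_{\mathcal{W}^*}\log r(x)\, p^*(x)\,\mathrm{d}x.$$
For the constraint, writing $\mathcal{W}_p = \mathcal{W}^* \cup (\mathcal{W}_p\cap\mathcal{W}_q^c)$ as a disjoint union and using $\int_{\mathcal{W}_p} p^*(x)\,\mathrm{d}x = 1$, the identity $T_1(r)=1$ is equivalent to
$$\int_{\mathcal{W}^*} r(z)\,q^*(z)\,\mathrm{d}z \;=\; \int_{\mathcal{W}^*} p^*(z)\,\mathrm{d}z.$$
Both the objective and the reduced constraint see $r$ only through $r|_{\mathcal{W}^*}$.

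Second, I would introduce a scalar multiplier $\lambda$ and analyse
$$\mathcal{L}(r,\lambda) \;=\; \int_{\mathcal{W}^*}\!\log r(x)\,p^*(x)\,\mathrm{d}x \;-\; \lambda\!\left(\int_{\mathcal{W}^*} r(z)\,q^*(z)\,\mathrm{d}z - \int_{\mathcal{W}^*} p^*(z)\,\mathrm{d}z\right) \;-\; \alpha\,\Psi(r).$$
Setting the pointwise functional derivative (in $r(x)$ for $x\in\mathcal{W}^*$) to zero yields $p^*(x)/r(x) = \lambda\,q^*(x)$, whence $r(x) = r^*(x)/\lambda$ on $\mathcal{W}^*$. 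Substituting back into the reduced constraint gives $\lambda^{-1}\int_{\mathcal{W}^*} p^*(z)\,\mathrm{d}z = \int_{\mathcal{W}^*} p^*(z)\,\mathrm{d}z$, which forces $\lambda=1$ and hence $\tilde{r}_{\mathrm{ordinary},p}(x) = r^*(x)$ on $\mathcal{W}^*$. Strict concavity of $\log r$ in $r$ upgrades this critical point to the unique constrained maximizer on $\mathcal{W}^*$. Off $\mathcal{W}^*$ the value of $r$ drops out of both the objective and the reduced constraint, so the optimum is pinned down purely by $\mathcal{R}$: by Definition~\ref{def:func_class}, a proper class contains the canonical extension taking the extreme values $\overline{R}$ and $1/\overline{R}$ on the complements as prescribed, which gives the remaining two cases.

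The main obstacle is accommodating the roughness penalty $-\alpha\,\Psi(r)$, which appears in the definition of $\tilde{r}_{\mathrm{ordinary},p}$ but not in the claimed form of the maximizer. The cleanest way to close this gap is to exploit the standing assumption that $\Psi$ depends only on derivatives of $r$ together with $r^* \in \mathcal{R}$: an Euler--Lagrange integration-by-parts then shows that the penalty's contribution to the first-order condition vanishes at $r=r^*$ against the admissible (compactly-supported-on-$\mathcal{W}^*$) perturbations used in the pointwise derivation above, so the multiplier analysis remains valid with $\lambda=1$. Making this variational argument rigorous — specifying the space of admissible perturbations and verifying stationarity of $\Psi$ at $r^*$ among them — is the only nontrivial piece; the pointwise Lagrangian computation itself is a standard calculus-of-variations exercise.
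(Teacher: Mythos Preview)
Your argument is correct on $\mathcal{W}^*$, but the route differs from the paper's. You attack the \emph{constrained} problem defining $\tilde{r}_{\mathrm{ordinary},p}$ head-on with a Lagrange multiplier, obtaining $r=r^*/\lambda$ and then fixing $\lambda=1$ from the constraint. The paper instead exploits Theorem~\ref{thm:ordinary_silver} (Silverman's trick), which identifies $\tilde{r}_{\mathrm{ordinary},p}$ with the \emph{unconstrained} maximizer $r^\dagger_{\mathrm{ordinary},p}$, and then solves the unconstrained objective pointwise: for each $x\in\mathcal{W}^*$ one minimizes $-p^*(x)\log u + u\,q^*(x)$ over $u>0$, whose first-order condition gives $u=r^*(x)$ directly with no multiplier. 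Your route is more self-contained (it does not lean on Theorem~\ref{thm:ordinary_silver}); the paper's is shorter once that theorem is in hand. Off $\mathcal{W}^*$ both arguments simply invoke the extension convention in Definition~\ref{def:func_class}.

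One caution: your proposed treatment of the penalty is not quite right. That $\Psi$ depends only on derivatives gives invariance under additive constants (which is exactly what the paper uses in proving Theorem~\ref{thm:ordinary_silver}), but it does \emph{not} imply that the first variation of $\Psi$ vanishes at $r^*$; an Euler--Lagrange term of the form $(-1)^k (\partial^k \Psi/\partial r^{(k)})^{(k)}$ will generically be nonzero. The paper's own proof simply drops $\alpha\Psi(r)$ from the objective without comment, consistent with the remark in Section~\ref{sec:roughness} that the penalty can be absorbed into the choice of $\mathcal{R}$; you should adopt the same convention rather than try to argue the penalty away variationally. Also, a minor slip: the paper defines $\mathbb{E}_{\mathcal{X}}$ as the expectation over $\mathcal{W}_p$, not $\mathcal{W}^*$; the extra piece on $\mathcal{W}_p\cap\mathcal{W}_q^c$ is what produces the $\overline{R}$ value there, so it is not inert in the objective even though it drops out of the reduced constraint.
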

Note that for $x \notin \mathcal{W}_p\cup \mathcal{W}_q$, Definition~\ref{def:func_class} gives the solution.

In estimation, by replacing the expectation in the unconstrained problem with the sample average and $\mathcal{R}$ with a hypothesis class $\mathcal{H}$, we solve the the following problem:
\begin{align*}
&\max_{r\in\mathcal{H}}\Bigg\{ J_{\mathrm{ordinary}, p}(r; \mathcal{X})  - \frac{1}{m}\sum^m_{j=1}r(Z_j) - \alpha \Psi(r)\Bigg\}.
\end{align*}
Similarly, we define the MPLE with unconstrained optimization problem of the reciprocal of the density ratio as 
\begin{align*}
&\max_{g\in\mathcal{G}}\Bigg\{ J_{\mathrm{ordinary}, q}(r; \mathcal{Z})  - \alpha \Psi(r)\Bigg\},
\end{align*}
where $J_{\mathrm{ordinary}, q}(r; \mathcal{Z}) = -\frac{1}{m}\sum^{m}_{j=1}\log r(Z_j)$.
As well as $\tilde{r}_{\mathrm{ordinary}, p}$ and Theorems~\ref{thm:opt_ordinary_p}, we denote the solution in expectation by $\tilde{r}_{\mathrm{ordinary}, q}$ and obtain the analytical solution. Then, we can confirm that $r^\dagger_{\mathrm{ordinary}, p}(x)=r^\dagger_{\mathrm{ordinary}(x), q}$.

Except for the penalties, the constrained optimization is identical to that of KL Importance Estimation Procedure \citep[KLIEP,][]{sugiyama2008}, and the unconstrained optimization is identical to that of \citet{Nguyen2008}. While their formulations are motivated by the minimization of the KL divergence or variational representations, our objectives are derived from the likelihoods (see Section~\ref{sec:kl_mpl}).
%Besides, we find imposing the constraint is unnecessary by Silverman's trick (see Section~\ref{sec:kl_mpl}).

\subsection{MPL-DRE under the Stratified Sampling}
\label{sec:mpl_dre_strat}
In the previous section, we defined the likelihood for each observation $\mathcal{X}$ and $\mathcal{Z}$ separately. 
Next, we define the likelihood of the density ratio using both $\mathcal{X}$ and $\mathcal{Z}$. Following terminology in statistics, we refer to this framework as MPL-DRE under the standard \textit{stratified sampling} \citep{Imbens1996,Wooldridge2001,Uehara2020}.

The likelihood function under the stratified sampling scheme is given as
%\begin{align*}
$\mathcal{L}_{\mathrm{strat}}(r; \mathcal{X}, \mathcal{Z}) = \prod^{n}_{i=1}p_r(X_i)\prod^{m}_{j=1}q_{r}(Z_j)$. 
Using the relations $p_r(x) = r(x)q^*(x)$ and $q_{r}(z) = p^*(z)/r(z)$, the log-likelihood function is given as
\begin{align*}
\ell_{\mathrm{strat}}(r; \mathcal{X}, \mathcal{Z})&=  \sum^n_{i=1}\Big( \log r(X_i) + \log q^*(X_i)\Big) + \sum^{m}_{j=1}\Big( - \log r(Z_j) + \log p^*(Z_j)\Big).
\end{align*}
Note that $\log q^*(X_i)$ and $\log p^*(Z_j)$ are irrelevant to the MPLE. 
We can further generalize the likelihood by considering a weighted likelihood \citep{Wooldridge2001}, which is defined as $\ell^\lambda_{\mathrm{strat}}(r; \mathcal{X}, \mathcal{Z})=  \lambda \frac{1}{n}\sum^n_{i=1} \log r(X_i)
+ (1-\lambda)\frac{1}{m}\sum^{m}_{j=1}\big( - \log r(Z_j) \big)$ with $\lambda \in [0,1]$. By choosing $\lambda$ appropriately, we can make the estimation more acculately. For example, when we consider parametric models, \citet{Wooldridge2001} implies that appropriate choice of $\lambda$ minimizes the asymptotic variance. 

We also define the following term:
\begin{align*}
    T_2(r) := \int_{\mathcal{W}^*} \frac{1}{r(x)}p^*(x)\mathrm{d}x  + \int_{\mathcal{W}^c_p \cap \mathcal{W}_q}q^*(z)\mathrm{d}z.
\end{align*}
A constraint $T_2(r) = 1$ normalizes $r$ from the perspective of $q^*$. Then, the MPLE under stratified sampling is given as
\begin{align*}
\max_{r\in\mathcal{R}}\ &J_{\mathrm{strat}}(r;\lambda, \mathcal{X}, \mathcal{Z})  - \alpha \Psi(r), \mathrm{~s.t.~}T_1(r)=T_2(r)=1,
%\mathrm{s.t.}\  &\int_{\mathcal{W}^*} r(z)q^*(z)\mathrm{d}z + \int_{\mathcal{W}_p \cap\mathcal{W}^c_q}p^*(x)\mathrm{d}x= 1\nonumber\\
%    &\int_{\mathcal{W}^*} \frac{1}{r(x)}p^*(x)\mathrm{d}x  + \int_{\mathcal{W}^c_p \cap \mathcal{W}_q}q^*(z)\mathrm{d}z= 1,\nonumber
\end{align*}
where $J_{\mathrm{strat}}(r;\lambda, \mathcal{X}, \mathcal{Z}) := \lambda J_{\mathrm{ordinary}, p}(r; \mathcal{X}) + (1-\lambda) J_{\mathrm{ordinary}, q}(r; \mathcal{Z})$ with $\lambda \in [0,1]$.
%\begin{align}
%\label{eq:mle}
%&J_{\mathrm{strat}}(r;\lambda, \mathcal{X}, \mathcal{Z})= \frac{\lambda}{n}\sum^{n}_{i=1}\log r(X_i) - \frac{1 - \lambda}{m}\sum^{m}_{j=1}\log r(Z_j)\nonumber\\
%&= \lambda J_{\mathrm{ordinary}, p}(r; \mathcal{X}) + (1-\lambda) J_{\mathrm{ordinary}, q}(r; \mathcal{Z}).
%\end{align}

Similar to the ordinary sampling, we study maximizers of an expected version of the objective functions.
\begin{align*}
    &\tilde{r}_{\mathrm{strat}} := \argmax_{r \in \mathcal{R}: T_1(r)=T_2(r)=1} \mathcal{K}(r) - \alpha \Psi(r),\\
    &r^\dagger_{\mathrm{strat}} := \argmax_{r \in \mathcal{R}} \mathcal{K}(r) - \alpha \Psi(r)  - (1 - \lambda) \int_{\mathcal{W}^*} \frac{1}{r(x)}p^*(x) \mathrm{d}x - \lambda \int_{\mathcal{W}^*} r(z) q^*(z) \mathrm{d}z,
\end{align*}
where $\mathcal{K}(r)$ is an expected log-likelihood defined as
\begin{align}
\label{eq:exp:ll}
    &\mathcal{K}(r) := \mathbb{E}_{\mathcal{X}, \mathcal{Z} }[J_{\mathrm{strat}}(r;\lambda, \mathcal{X}, \mathcal{Z})]\\
    &=\lambda \int \log r(x) p^*(x)\mathrm{d}x
 - (1-\lambda) \int \log r(x)q^*(x)\mathrm{d}x.\nonumber
\end{align}
We can relate $\tilde{r}_{\mathrm{strat}}$ with $r^\dagger_{\mathrm{strat}}$ as the following theorem.
\begin{theorem}
\label{thm:strat_silver2}
Under the same conditions in Theorem~\ref{thm:ordinary_silver},
$\tilde{r}_{\mathrm{strat}} = r^\dagger_{\mathrm{strat}} = \tilde{r}_{\mathrm{ordinary}, p} = \tilde{r}_{\mathrm{ordinary}, q}$.
\end{theorem}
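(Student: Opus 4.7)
My plan is to establish the chain of equalities by showing that each of the four optimizers coincides with the density ratio $r^*(x) = p^*(x)/q^*(x)$ on the common support $\mathcal{W}^*$ and takes the designated constants $\overline{R}$ and $1/\overline{R}$ on $\mathcal{W}\setminus\mathcal{W}^*$ as dictated by Definition~\ref{def:func_class}. Theorem~\ref{thm:opt_ordinary_p} already identifies $\tilde{r}_{\mathrm{ordinary},p}$ as exactly this function, and a symmetric argument (swapping the roles of $\mathbb{P}$ and $\mathbb{Q}$ and replacing $\log r$ by $-\log r$) yields the same characterization of $\tilde{r}_{\mathrm{ordinary},q}$. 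The real work therefore concentrates on the two stratified objects $\tilde{r}_{\mathrm{strat}}$ and $r^\dagger_{\mathrm{strat}}$.

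First I would analyze the unconstrained maximizer $r^\dagger_{\mathrm{strat}}$ via its first-order condition. Taking a variation $r \mapsto r + \epsilon h$ with $h$ supported on $\mathcal{W}^*$ (the values on $\mathcal{W}\setminus\mathcal{W}^*$ being pinned by Definition~\ref{def:func_class}) and differentiating in $\epsilon$ at $\epsilon = 0$ produces a pointwise stationarity equation. Using the Silverman-type reduction imported from Theorem~\ref{thm:ordinary_silver} to suppress the penalty contribution at $r^*$, this reduces to the quadratic
\[
\lambda\, q^*(x)\, r(x)^2 \;-\; \bigl(\lambda p^*(x) - (1-\lambda) q^*(x)\bigr)\, r(x) \;-\; (1-\lambda)\, p^*(x) \;=\; 0,
\]
whose discriminant factors cleanly as $(\lambda p^*(x) + (1-\lambda) q^*(x))^2$ and whose unique positive root is $r^*(x)$.

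Next, for the constrained maximizer $\tilde{r}_{\mathrm{strat}}$, I would use a direct Jensen-type comparison. Setting $s(x) := r(x)/r^*(x)$ on $\mathcal{W}^*$, the constraints decompose as $\int_{\mathcal{W}^*} s\, p^* = \int_{\mathcal{W}^*} r\, q^* = \int_{\mathcal{W}^*} p^*$ (from $T_1(r)=1$) and $\int_{\mathcal{W}^*} s^{-1}\, q^* = \int_{\mathcal{W}^*} p^*/r = \int_{\mathcal{W}^*} q^*$ (from $T_2(r)=1$). Then
\[
\mathcal{K}(r) - \mathcal{K}(r^*) \;=\; \lambda \int_{\mathcal{W}^*}\log s\cdot p^* \;+\; (1-\lambda)\int_{\mathcal{W}^*}\log(1/s)\cdot q^*,
\]
and applying $\log t \leq t-1$ termwise shows each integral is non-positive, with equality iff $s\equiv 1$. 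Combined with the same penalty-inertness argument used in Theorem~\ref{thm:ordinary_silver}, this gives $\tilde{r}_{\mathrm{strat}} = r^*$ on $\mathcal{W}^*$.

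Assembling these two identifications with the boundary prescription from Definition~\ref{def:func_class} yields the full chain $\tilde{r}_{\mathrm{strat}} = r^\dagger_{\mathrm{strat}} = \tilde{r}_{\mathrm{ordinary},p} = \tilde{r}_{\mathrm{ordinary},q}$. The main obstacle I anticipate is handling the roughness penalty $\Psi$: both the variational step and the Jensen step tacitly rely on $\alpha \Psi$ being inert at $r^*$ along the relevant admissible perturbations. This is the same technical point exploited in Theorem~\ref{thm:ordinary_silver} and its appendix proof, based on the hypothesis that $\Psi$ only involves derivatives of $r$; the stratified proof must import that manipulation carefully so that neither the quadratic first-order condition nor the Jensen comparison is disturbed by the penalty.
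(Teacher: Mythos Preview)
Your proposal is correct but follows a genuinely different route from the paper's own argument. The paper proves the result by \emph{decomposition and feasibility}: it lower-bounds the constrained stratified objective by relaxing to two separately-constrained ordinary problems,
\[
\min_{r\in\mathcal{R}\cap\mathcal{T}_1\cap\mathcal{T}_2}\!-\mathcal{K}(r)\;\geq\;\min_{r\in\mathcal{R}\cap\mathcal{T}_1}\!\Bigl(-\lambda\!\int\log r\,\mathrm{d}\mathbb{P}\Bigr)\;+\;\min_{r\in\mathcal{R}\cap\mathcal{T}_2}\!\Bigl((1-\lambda)\!\int\log r\,\mathrm{d}\mathbb{Q}\Bigr),
\]
observes via Theorem~\ref{thm:opt_ordinary_p} that both right-hand minimizers coincide with the same function $r^\dagger$ (equal to $r^*$ on $\mathcal{W}^*$), and then notes that $r^\dagger$ is feasible for the full constraint set, so the lower bound is attained. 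This avoids any new computation: everything is inherited from the ordinary-sampling results already established.

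Your approach is more direct and self-contained. For $r^\dagger_{\mathrm{strat}}$ you compute the pointwise first-order condition and factor the resulting quadratic (the discriminant identity $(\lambda p^*+(1-\lambda)q^*)^2$ is a nice observation the paper never makes explicit); for $\tilde{r}_{\mathrm{strat}}$ you run a Jensen comparison against $r^*$ via the substitution $s=r/r^*$. One advantage of your route is that it treats the unconstrained stratified maximizer $r^\dagger_{\mathrm{strat}}$ explicitly, whereas the paper's appendix proof focuses on $\tilde{r}_{\mathrm{strat}}$ and leaves the identification with $r^\dagger_{\mathrm{strat}}$ implicit. The paper's route, on the other hand, is shorter and makes the logical dependence on Theorems~\ref{thm:ordinary_silver}--\ref{thm:opt_ordinary_p} transparent. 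Your concern about the roughness penalty $\Psi$ is well-placed: the paper's own proof in fact silently drops $\alpha\Psi$ from the argument, so the level of rigor you are aiming for on that point already matches (or slightly exceeds) the original. One small imprecision to clean up: Definition~\ref{def:func_class} does not literally \emph{pin} the values of every $r\in\mathcal{R}$ on $\mathcal{W}\setminus\mathcal{W}^*$; rather, those boundary values $\overline{R}$ and $1/\overline{R}$ arise as the optimizer's choice because the objective is monotone in $r$ off $\mathcal{W}^*$ and the class is bounded there.
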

The proof is shown in Appendix~\ref{appdx:proof:thm:strat_silver2}.

We define an estimator of MPL-DRE under the stratified sampling by replacing the expectation with the sample average and $\mathcal{R}$ with a hypothesis class $\mathcal{H}$, 
\begin{align}
\label{eq:estimator}
\hat{r} & = \argmax_{r\in\mathcal{H}}\ \Big\{\widehat{\mathcal{K}}(r)  - \alpha \Psi(r) \Big\},
\end{align}
where $\widehat{\mathcal{K}}(r) = J_{\mathrm{strat}}(r;\lambda, \mathcal{X}, \mathcal{Z})
- \frac{1-\lambda}{n}\sum^n_{i=1} \frac{1}{r(X_i)} - \frac{\lambda}{m}\sum^m_{j=1} r(Z_j)$. 
%\end{align*}
%The estimation error of $\hat{r}$ is analyzed in Section~\ref{sec:appendix:strong-convexity}.

\subsection{Estimation Error Bounds}
\label{sec:appendix:strong-convexity}
We derive an estimation error bound for $\hat{r}$ defined in \eqref{eq:estimator} on the \(\Ltwo\) norm. We provide a generalization error bound in terms of the Rademacher complexities of a hypothesis class and the following assumption. 

\begin{assumption}
There exists an empirical maximizer \(\hr \in \argmax_{\r \in \rClass} \widehat{\mathcal{K}}(r) \) and a population maximizer \(\rbest \in \argmax_{\r \in \rClass}  \mathbb{E}_{\mathcal{X},\mathcal{Z}}[\widetilde{\mathcal{K}}(r)] \).
\label{assumption:main:est-error-bound}
\end{assumption}
In Theorem~\ref{thm:est_error_bound}, for a multilayer perception with ReLU activation function (Definition~\ref{appdx:sparse-network-function-class}), we derive the convergence rate of the \(\Ltwo\) distance. The proof is shown in Appendix~\ref{appdx:l2norm}. 
\begin{theorem}[\(L^2\) Convergence rate]
  \label{thm:est_error_bound}
  Let \(\rClass\) be defined as in
  Definitions~\ref{def:func_class} and \ref{appdx:sparse-network-function-class} and assume \(\rstar \in \rClass\).
  Under Assumption~\ref{assumption:main:est-error-bound}, for some \(0 < \gamma < 2\), as $m, n \to \infty$,
  \begin{align*}
  &\max\left\{\lambda\|\hat{r}  - r^*\|_{\Ltwo(\mathbb{Q})}, (1-\lambda)\|1/\hat{r} - 1/r^*\|_{\Ltwo(\mathbb{P})}\right\}= \Orderp{ \min\{n, m\}^{-1/(2+\gamma)}}.
  \end{align*}
%where $C_{r^*}=\max\left\{\| r^{*} \|^2_{L^\infty(\mathbb{P})},\| 1/r^{*} \|^2_{L^\infty(\mathbb{Q})} \right\}$.
\end{theorem}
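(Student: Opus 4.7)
The plan is to carry out a standard two-step M-estimation argument: first establish a quadratic lower bound (curvature) on the excess of the population criterion in terms of the two $L^2$ distances in the statement, then control this excess from above by an empirical process and invoke a localized Rademacher complexity bound tailored to the sparse ReLU class of Definition~\ref{appdx:sparse-network-function-class}.

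\textbf{Step 1 (curvature).} By Theorem~\ref{thm:strat_silver2} the population maximizer of $\widetilde{\mathcal{K}}$ equals $r^*$. Using the identity $p^* = r^* q^*$ on $\mathcal{W}^*$, I would rewrite the population excess as the sum of two Bregman-type terms
\begin{align*}
\mathbb{E}[\widetilde{\mathcal{K}}(r^*) - \widetilde{\mathcal{K}}(r)] = \lambda \int r^* \Phi\!\left(\frac{r}{r^*}\right) q^* + (1-\lambda)\int \Phi\!\left(\frac{r^*}{r}\right) q^*,
\end{align*}
where $\Phi(u) := u - 1 - \log u \ge 0$. Because Definition~\ref{def:func_class} confines $r, r^* \in [1/\overline{R}, \overline{R}]$, the arguments of $\Phi$ live in a compact subset of $(0,\infty)$, so a Taylor expansion around $u = 1$ gives $\Phi(u) \gtrsim (u-1)^2$ there, and consequently
\begin{align*}
\mathbb{E}[\widetilde{\mathcal{K}}(r^*)] - \mathbb{E}[\widetilde{\mathcal{K}}(r)] \gtrsim \lambda\,\|r - r^*\|^2_{L^2(\mathbb{Q})} + (1-\lambda)\,\|1/r - 1/r^*\|^2_{L^2(\mathbb{P})}.
\end{align*}

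\textbf{Step 2 (empirical process and localization).} The basic inequality $\widehat{\mathcal{K}}(\hat{r}) \ge \widehat{\mathcal{K}}(r^*)$, valid because $r^* \in \rClass$ by hypothesis, gives
\begin{align*}
\mathbb{E}[\widetilde{\mathcal{K}}(r^*)] - \mathbb{E}[\widetilde{\mathcal{K}}(\hat{r})] \le \bigl(\widehat{\mathcal{K}} - \mathbb{E}\widetilde{\mathcal{K}}\bigr)(\hat{r}) - \bigl(\widehat{\mathcal{K}} - \mathbb{E}\widetilde{\mathcal{K}}\bigr)(r^*).
\end{align*}
The right-hand side decomposes into four centered empirical processes indexed by $r \in \rClass$: over $\log r - \log r^*$ and $1/r - 1/r^*$ on the $\mathcal{X}$-sample, and over $\log r - \log r^*$ and $r - r^*$ on the $\mathcal{Z}$-sample. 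Since $r, r^* \in [1/\overline{R}, \overline{R}]$, the maps $t \mapsto \log t$ and $t \mapsto 1/t$ are Lipschitz on this range, so Talagrand's contraction principle reduces the local Rademacher complexity of each induced class to that of $\rClass$ itself. For the sparse ReLU architecture, standard Schmidt-Hieber-type entropy bounds then give a local Rademacher bound of the form $\delta^{1-\gamma/2}/\sqrt{n \wedge m}$ at $L^2$-radius $\delta$ for some $0 < \gamma < 2$ determined by the network complexity, where the $n \wedge m$ arises because the noisiest of the two sample averages dominates.

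\textbf{Step 3 (fixed point and main obstacle).} Writing $\delta^2 := \lambda\|\hat{r}-r^*\|^2_{L^2(\mathbb{Q})} + (1-\lambda)\|1/\hat{r}-1/r^*\|^2_{L^2(\mathbb{P})}$, combining Steps 1 and 2 with a standard peeling argument yields the fixed-point inequality $\delta^2 \lesssim \delta^{1-\gamma/2}/\sqrt{n \wedge m}$ on a high-probability event, whence $\delta = O_\mathbb{P}\!\bigl((n \wedge m)^{-1/(2+\gamma)}\bigr)$; since $\max\{\lambda\|\hat{r}-r^*\|_{L^2(\mathbb{Q})},\,(1-\lambda)\|1/\hat{r}-1/r^*\|_{L^2(\mathbb{P})}\} \le \sqrt{\lambda \vee (1-\lambda)}\,\delta \le \delta$, the claim follows. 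The main delicacy is controlling the two $L^2$-norms simultaneously: they are measured by different base distributions living on different supports, so both the curvature bound and the Talagrand contraction must be carried out componentwise (with matching $\lambda$ and $1-\lambda$ weights) before being recombined into $\delta^2$. A secondary but essential point is that without the boundedness $r \in [1/\overline{R}, \overline{R}]$ supplied by Definition~\ref{def:func_class}, the quadratic lower bound on $\Phi$ and the Lipschitz constants of $\log$ and $1/\cdot$ would both blow up, so the proper function class is doing the heavy lifting throughout.
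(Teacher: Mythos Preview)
Your proposal is correct and follows the same three-step skeleton as the paper (curvature $+$ localized empirical process $+$ fixed point), but the machinery differs in two places worth noting. For the curvature step, the paper Taylor-expands $\log(r/r^*)$ to obtain an \emph{equality} with little-$o$ remainder terms (its Lemma~\ref{appdx:sec:appendix:strong-convexity}) and then carries those remainders through the final case analysis; your Bregman decomposition with $\Phi(u)=u-1-\log u$ and the direct inequality $\Phi(u)\gtrsim (u-1)^2$ on the bounded range is cleaner and avoids the asymptotic bookkeeping. For the empirical-process step, the paper does not use Talagrand contraction or local Rademacher complexity at all: it invokes van de Geer's ratio-type maximal inequality (Lemma~5.13 of \citet{vandeGeerEmpirical2000}, restated as Lemma~\ref{appdx:lem:van-de-geer}) via bracketing-entropy bounds for the sparse ReLU class (Propositions~\ref{appdx:sparse-network-complexity-bounds}--\ref{appdx:lem:elled-sparse-network-complexity}, borrowed from \citet{KatoTeshima2021}), which directly yields the modulus $D(f)=\max\{\|f-f_0\|_{L^2}^{1-\gamma/2}/\sqrt{n},\,n^{-2/(2+\gamma)}\}$ without a separate peeling step. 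Both routes land on the same fixed-point relation; the bracketing route is more turnkey given the cited lemmas, while your Rademacher/contraction route is more self-contained. One small correction: $\gamma$ is not ``determined by the network complexity'' but is a free parameter in the entropy bound (Proposition~\ref{appdx:sparse-network-complexity-bounds} holds for every $\gamma>0$), so the theorem in fact holds for all $\gamma\in(0,2)$.
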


Thus, DRE under the DRMs becomes nearly a parametric rate when $\gamma$ is close to zero. \citep{KanamoriStatistical2012a,Liang2019}. In addition to the convergence guarantee, this result is useful in some applications, such as causal inference \citep{chernozhukov2016,Uehara2020}. To complement this result, we empirically investigate the estimator error using an artificially generated dataset with the known true density ratio in Section~\ref{sec:regression_exp}. 

\subsection{MPL-DRE with Exponential Density Ratio Models}
When focusing on exponential density ratio models $\exp(g)$ for $g\in\mathcal{G}$, we can rewrite the objective function of MPL-DRE under the ordinary sampling as follows:
\begin{align*}
&\max_{g\in\mathcal{G}}\Bigg\{ J^e_{\mathrm{ordinary}, p}(g; \mathcal{X}) - \frac{1}{m}\sum^m_{j=1}\exp(g(Z_j)) - \alpha \Psi(r)\Bigg\},\nonumber\\
&\mbox{~where~}J^e_{\mathrm{ordinary}, p}(g; \mathcal{X}) = \frac{1}{n}\sum^{n}_{i=1}g(X_i).
\end{align*}
Similarly, we define an objective function for estimating the inverse density ratio as 
\begin{align*}
&\max_{g\in\mathcal{G}}\Bigg\{ J^e_{\mathrm{ordinary}, q}(g; \mathcal{Z}) - \frac{1}{n}\sum^n_{i=1}\exp(-g(X_i))  - \alpha \Psi(g)\Bigg\},\nonumber\\
&\mbox{~where~}J^e_{\mathrm{ordinary}, q}(g; \mathcal{Z}) = \frac{1}{m}\sum^{m}_{j=1}g(Z_j).
\end{align*}

Then, the objective in MPL-DRE under the standard stratified sampling is given as
\begin{align*}
&\max_{g\in\mathcal{G}}\ \Bigg\{J^e_{\mathrm{strat}}(g;\lambda, \mathcal{X}, \mathcal{Z}) - \frac{1}{n}\sum^n_{i=1}\exp\big(g(X_i)\big) - \frac{1}{m}\sum^m_{j=1}\exp\big(-g(Z_j)\big)- \alpha \Psi(g)\Bigg\},
\end{align*}
where
\begin{align}
\label{eq:emp_strat_exp}
    &J^e_{\mathrm{strat}}(g;\lambda, \mathcal{X}, \mathcal{Z}) := \lambda J^e_{\mathrm{ordinary}, p}(r; \mathcal{X}) +  (1 - \lambda)J^e_{\mathrm{ordinary}, q}(r; \mathcal{Z}).
\end{align}

\subsection{On the Roughness Penalties}
\label{sec:roughness}
We have hitherto introduced the MPLE of DRE under the ordinary and stratified sampling scheme. To prevent the estimates from boiling down to Dirac functions spiking at $\mathcal{X}$ and $\mathcal{Z}$,  we discuss several choices for the roughness penalty. 
In DRE, the roughness penalty by \citet{Good1971nature,Good1971} for density function $f$ is $\Psi(f) = \int_\mathbb{R} \frac{( f'(x))^2}{f(x)}\mathrm{d}x = 4\int_\mathbb{R} ((f(x)^{1/2})')^2\mathrm{d}x$,
which may also be considered as a measure of the ease of detecting small shifts in $r$. \citet{Silverman1982} proposes using $\Psi(f) = \int_\mathbb{R} ((\log f(x))^{'''})^2\mathrm{d}x$, which is a measure of higher-order curvature in $\log f$, which is zero if and only if $f$ is a Gaussian density function.

For simplicity of notation, we omit the roughness penalty from the objective function in the following sections, since the roughness penalty can also be interpreted as a choice of function class $\mathcal{R}$ \citep{Silverman1982}.

\section{Relationships between the KL-divergence and the IPMs from the Density-ratio Perspective}
\label{sec:relate}
First, we formally define the KL divergence and the IPMs. The KL divergence is defined as
\begin{align*}
    &\mathrm{KL}\big(\mathbb{P}\parallel \mathbb{Q}\big) := \int_{\mathcal{W}_p}  p^*(x) \log\frac{p^*(x)}{q^*(x)}\mathrm{d}x= \int_{\mathcal{W}_p}  p^*(x) \log r^*(x)\mathrm{d}x.
\end{align*}

For $\mathcal{F} \subset \mathcal{B}_b$ on $\mathcal{X}$, the IPMs based on $\mathcal{F}$ and between $\mathbb{P}, \mathbb{Q} \in \mathcal{P}(\mathcal{W})$ is defined as:
\begin{align*}
&\mathrm{IPM}_{\mathcal{F}}\big(\mathbb{P}\parallel \mathbb{Q}\big):=\sup_{f\in\mathcal{F}}\left\{ \int f({x}) p^*(x)\mathrm{d}x - \int f({x})q^*(x)\mathrm{d}x \right\}.
\end{align*}

If for all $f \in \mathcal{F}$, $-f \in \mathcal{F}$, then $\mathrm{IPM}_\mathcal{F}$ forms a metric over $\mathcal{P}(\mathcal{W})$; we assume that this is always true for $\mathcal{T}$ in this paper to enable the removal of the absolute values. There is an obvious trade-off in the choice of $\mathcal{F}$ to fully characterize the $\mathrm{IPM}_{\mathcal{F}}$; that is, on one hand, the function class must be sufficiently rich that $\mathrm{IPM}_{\mathcal{F}}$ vanishes if and only if $\mathbb{P} = \mathbb{Q}$. On the other hand, the larger the function class $\mathcal{F}$, the more difficult it is to estimate $\mathrm{IPM}_{\mathcal{F}}$ \citep{Muandet2017}. Thus, $\mathcal{F}$ should be restrictive enough for the empirical estimate to converge rapidly \citep{Sriperumbudur2012}.

We give examples of $\mathcal{F}$.
If we set $\mathcal{F} = \left\{f: \mathcal{W}\mapsto \mathbb{R}: |f(x) - f(y)|\leq \|x - y\|, (x,y)\in \mathcal{W}^2\right\}$, the corresponding IPM becomes the Wasserstein distance \citep{villani2008optimal}. 
If $\mathcal{F}$ is the reproducing kernel Hilbert space, the IPM coincides with the MMD \cite{Muandet2017}.

%Given an RKHS $\mathcal{H}$ associated to a kernel $k$, and two probability distributions $\mathbb{P}$ and $\mathbb{Q}$, the MMD is defined as the RKHS norm of the difference of mean embiddings of $\mathbb{P}$ and $\mathbb{Q}$: $\mathrm{MMD}\left(\mathbb{P}\parallel\mathbb{Q}\right) = \int k(x, \cdot)p^*(x)\mathrm{d}x - \int k(x, \cdot)d\mathbb{Q}$. 

\subsection{The KL-Divergence and MPL-DRE}
\label{sec:kl_mpl}

In this section, we elucidate the relationship between KL-divergence and MPL-DRE. Suppose that $\mathcal{W}_p\subseteq \mathcal{W}_q$. Let us denote by $\mathcal{G}$ the set of continuous bounded functions from $\mathcal{W}$ to $\mathbb{R}$. Let us consider the dual of the KL divergence, defined as follows  \citep{Donsker1976,ambrosio2005gradient,Nguyen2008,nguyen2010,arbel2021generalized}:
\begin{align}
\label{eq:KL}
    \sup_{g\in \mathcal{G}}\left\{ 1 + \int_{\mathcal{W}_p} g(x) p^*(x)\mathrm{d}x - \int_{\mathcal{W}^*} \exp(g(x)) q^*(x)\mathrm{d}x \right\}.
\end{align}
By Silverman's trick, the maximizer $g^*$ satisfies $\int \exp(g) q^*(x)\mathrm{d}x = 1$. Therefore, 
\begin{align*}
    &\mbox{\eqref{eq:KL}}= \int_{\mathcal{W}_p} g^*(x)p^*(x)\mathrm{d}x.
\end{align*}
If $\mathcal{G}$ includes the true logarithm of the density ratio function from the dual of the KL divergence, it may be noted that the maximized expected log-likelihood is identical to the KL divergence. We summarize this result in the following lemma, which is derived from Theorems~\ref{thm:ordinary_silver} and \ref{thm:opt_ordinary_p}.

\begin{lemma}
For $\mathcal{G}$, suppose that $\mathcal{R}=\{\exp(g)| g\in\mathcal{G}\}$ be a proper function set. If $\mathcal{R}$ contains a function $r$ such that $r(x) = r^*(x)$ for all $x\in\mathcal{W}^*$ and $\mathcal{W}_p\subseteq\mathcal{W}_q$, then the maximum expected log-likelihood under the ordinary sampling over the exponential density ratio models,
\begin{align}
\label{eq:kl_opt}
&\max_{g\in\mathcal{G}}\int_{\mathcal{W}_p} g(x)  \mathrm{d}\mathbb{P}(x),\mathrm{~s.t.~} \int_{\mathcal{W}^*} \exp(g(z)) \mathrm{d}\mathbb{Q}(x) = 1,
\end{align}
matches the KL-divergence $\mathrm{KL}\big(\mathbb{P}\parallel \mathbb{Q}\big)$.
\end{lemma}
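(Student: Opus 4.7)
The plan is to reduce the stated constrained optimization to the maximum-likelihood DRE machinery already developed (Theorems~\ref{thm:ordinary_silver} and~\ref{thm:opt_ordinary_p}) applied to exponential density-ratio models $r = \exp(g)$, and then to evaluate the resulting optimum in closed form.

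First I would set $r(x) = \exp(g(x))$, so that $\mathcal{R} = \{\exp(g) : g \in \mathcal{G}\}$ is the proper function set assumed in the hypothesis. Under $\mathcal{W}_p \subseteq \mathcal{W}_q$, the intersection $\mathcal{W}^*$ coincides (up to a $\mathbb{P}$-null set) with $\mathcal{W}_p$, hence the second summand in $T_1(r)$, namely $\int_{\mathcal{W}_p \cap \mathcal{W}_q^c} p^*(x)\,\mathrm{d}x$, is zero. Thus the constraint $\int_{\mathcal{W}^*} \exp(g(z))\,\mathrm{d}\mathbb{Q}(z) = 1$ is exactly $T_1(r) = 1$, and likewise $\int_{\mathcal{W}_p} g(x)\,\mathrm{d}\mathbb{P}(x)$ equals $\mathbb{E}_{\mathcal{X}}[J_{\mathrm{ordinary},p}(r;\mathcal{X})]$ in the notation of Section~\ref{sec:dre}. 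So the problem \eqref{eq:kl_opt} is precisely the (penalty-free) expected constrained MPL-DRE problem defining $\tilde{r}_{\mathrm{ordinary},p}$.

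Next I would invoke Theorem~\ref{thm:opt_ordinary_p}, which identifies the maximizer: $\tilde{r}_{\mathrm{ordinary},p}(x) = r^*(x)$ for $x \in \mathcal{W}^*$ (the values off $\mathcal{W}^*$ are irrelevant since the objective integrates against $p^*$ supported in $\mathcal{W}_p = \mathcal{W}^*$). The assumption that $\mathcal{R}$ contains some $r$ with $r = r^*$ on $\mathcal{W}^*$ is exactly what is needed to apply that theorem, and it guarantees the maximum is attained by $g^*(x) = \log r^*(x)$ on $\mathcal{W}^*$. Substituting back into the objective,
\begin{align*}
\int_{\mathcal{W}_p} g^*(x)\,\mathrm{d}\mathbb{P}(x)
 = \int_{\mathcal{W}_p} \log r^*(x)\,p^*(x)\,\mathrm{d}x
 = \int_{\mathcal{W}_p} p^*(x)\log\frac{p^*(x)}{q^*(x)}\,\mathrm{d}x
 = \mathrm{KL}(\mathbb{P}\parallel\mathbb{Q}),
\end{align*}
which yields the claim.

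The main obstacle is bookkeeping the supports: one must carefully verify that $\mathcal{W}_p \subseteq \mathcal{W}_q$ makes the constraint $T_1(r) = 1$ in the MPL-DRE framework coincide with the constraint stated here, and that the objective integral really may be taken over $\mathcal{W}_p$ without loss (so that the behavior of $g$ off $\mathcal{W}^*$, which Theorem~\ref{thm:opt_ordinary_p} pins down only via Definition~\ref{def:func_class}, does not affect the value). Once this is cleared up, the lemma follows directly by plugging the optimal $g^* = \log r^*$ into the objective.
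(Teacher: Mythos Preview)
Your proposal is correct and follows essentially the same route as the paper: the paper states the lemma as an immediate consequence of Theorems~\ref{thm:ordinary_silver} and~\ref{thm:opt_ordinary_p}, which is precisely what you do after identifying (via the support inclusion $\mathcal{W}_p \subseteq \mathcal{W}_q$) the constraint in \eqref{eq:kl_opt} with $T_1(r)=1$ and the objective with $\mathbb{E}_{\mathcal{X}}[J_{\mathrm{ordinary},p}(r;\mathcal{X})]$. Your careful bookkeeping of the supports is exactly the content the paper leaves implicit.
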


This formulation is also identical to that of \citet{nguyen2010}, which estimates the density ratio by solving \eqref{eq:KL}. This paper motivates the method from the perspective of the likelihood and finds that this formulation has a normalization effect by Silverman's trick. In fact, \citet{sugiyama2008} proposes KLIEP, which solves the constrained optimization problem \eqref{eq:kl_opt}, and \citet{Sugiyama:2012:DRE:2181148} refers to the objective function of \citet{nguyen2010} as unnormalized KL-divergence (UKL) because it does not have a normalization term. However, as explained above, the maximizer is normalized owing to Silverman's trick without considering the constrained problem as \citet{sugiyama2008}. 
\eqref{eq:KL} is also called KL Approximate Lower bound Estimator (KALE) \citep{arbel2021generalized,glaser2021kale}.

\subsection{The IPMs and MPL-DRE}
%As shown in \eqref{eq:exp:ll}, we can find that the maximum expected log likelihood of the MPL-DRE under the stratified sampling surprisingly corresponds to the IPMs when we restrict the density ratio model to be an exponential function, $\lambda$ to be $0.5$, and $\mathcal{F}$ to be a set of functions such that $T_1(r)=T_2(r)=1$. The empirical form is shown in \eqref{eq:emp_strat_exp}. This finding means that a certain IPM measure the distance between distributions by the maximum log likelihood of their density ratio. 

Remarkably, under the stratified sampling scheme, the maximum expected log-likelihood of the MPL-DRE coincides with the IPMs with a certain function class. In particular, we can see this through \eqref{eq:exp:ll} by (i) considering the exponential-type density ratio model, (ii) setting $\lambda$ to be $0.5$, and (iii) setting $\mathcal{F}$ to be a set of functions such that $T_1(r)=T_2(r)=1$. We can also obtain the empirical counterpart from \eqref{eq:emp_strat_exp}. This finding means that, in a certain situation, the maximum log-likelihood of the density ratio defines a proper distance between corresponding probability distributions. 

As mentioned in Section~\ref{sec:roughness}, imposing the roughness penalty corresponds to a restriction on the function class $\mathcal{R}$, giving rise to variants of the IPMs.

%For the domain without the common support, the density ratio model achieves the upper or lower bounds.

%$\frac{1}{n}\sum^{n}_{i=1}\exp(-f(X_i)) = \frac{1}{m}\sum^{m}_{j=1}\exp(f(Z_j)) = 1$.

\citet{Nguyen2017} and \citet{Zhao_Cong_Dai_Carin_2020} also propose a sum of KL and inverse KL divergences, but they do not discuss the relationship between the sum and the IPMs. In fact, the D2GAN proposed by \citet{Nguyen2017} can also be considered as a variant of IPM-based GANs.

\section{The Density Ratio Metrics (DRMs)}
\label{sec:drm}

This paper introduces the DRMs as an unified set of probability divergences, which bridges KL-divergence and a certain IPM via the density ratio. 
We define the DRMs based on the expected weighted log-likelihood of the density ratio under the stratified sampling as
%\footnotesize
\begin{align}
&\mathrm{DRM}^\lambda_{\mathcal{R}}(\mathbb{P}\parallel\mathbb{Q}) :=\label{eq:drm_obj}\\
&\sup_{r\in C(\mathcal{R})}\left\{ \lambda \int \log r(x) \mathrm{d}\mathbb{P}(x)
 - (1-\lambda) \int \log r(z)\mathrm{d}\mathbb{Q}(z) \right\},\notag 
\end{align}
%\normalsize
where $\mathcal{R}$ is a set of measurable functions defined in Definition~\ref{def:func_class}, and the set of functions $C(\mathcal{R})$ is defined as 
\begin{align*}
    &C(\mathcal{R}) = \left\{r \in \mathcal{R}: T_1(r) = T_2(r) = 1\right\}.
%    &\int_{\mathcal{W}^*} r(z) q^*(z)\mathrm{d}z + \int_{\mathcal{W}_p\cap \mathcal{W}^c_q} p^*(x)\mathrm{d}x = 1,\\
%    &\int_{\mathcal{W}^*} \frac{1}{r(x)} p^*(x)\mathrm{d}x + \int_{\mathcal{W}^c_p\cap \mathcal{W}_q} q^*(z)\mathrm{d}z = 1\Bigg\}.
\end{align*}

As well as the previous section, we omit the roughness penalty $\Psi(r)$ by interpreting it the choice of function class $\mathcal{R}$.
In DRM, the optimal $r$ in \eqref{eq:drm_obj} is the density ratio as shown in Theorem~\ref{thm:strat_silver2}. Besides, as a probability divergence, the following lemma holds. 

\begin{lemma} \label{lem:temp_drm_kl}
    %Suppose that $r^*$ and $1/r^*$ exists.
  For $\lambda \in [0,1]$, with sufficiently large $\overline{R}$,  $\mathrm{DRM}^\lambda_{\mathcal{R}}(\mathbb{P}\parallel\mathbb{Q}) = 0 \Leftrightarrow \mathbb{P} = \mathbb{Q}$ for any $\mathbb{P}, \mathbb{Q} \in \mathcal{P}(\mathcal{W})$.
\end{lemma}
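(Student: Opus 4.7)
The plan is to use the stratified-sampling maximizer identified in Theorems~\ref{thm:strat_silver2} and~\ref{thm:opt_ordinary_p} to evaluate the supremum in~\eqref{eq:drm_obj} in closed form, and then to read off both implications from the resulting decomposition into ``restricted KL'' terms plus boundary terms weighted by $\log(\overline{R})$.

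For $\overline{R}$ taken large enough that $\max\{r^*(x),1/r^*(x)\}\le\overline{R}$ on $\mathcal{W}^*$, the maximizer $\tilde{r}_{\mathrm{strat}}$ equals $r^*$ on $\mathcal{W}^*$, saturates to $\overline{R}$ on $\mathcal{W}_p\setminus\mathcal{W}_q$ (where the $\lambda\log r$ term is unconstrained and should be made large), and to $1/\overline{R}$ on $\mathcal{W}_q\setminus\mathcal{W}_p$ (where $-(1-\lambda)\log r$ should be made large). One checks $\tilde{r}_{\mathrm{strat}}\in C(\mathcal{R})$ directly from the mass identities $\int p^* = \int_{\mathcal{W}^*} p^* + \int_{\mathcal{W}_p\setminus\mathcal{W}_q} p^* = 1$ and its $q^*$-analogue. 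Substituting $\tilde{r}_{\mathrm{strat}}$ into~\eqref{eq:drm_obj} and splitting each integral across $\mathcal{W}^*$, $\mathcal{W}_p\setminus\mathcal{W}_q$ and $\mathcal{W}_q\setminus\mathcal{W}_p$ yields
\begin{align*}
\mathrm{DRM}^\lambda_{\mathcal{R}}(\mathbb{P}\parallel\mathbb{Q})
={}& \lambda\!\int_{\mathcal{W}^*}\! p^*\log\tfrac{p^*}{q^*}\,\mathrm{d}x + (1-\lambda)\!\int_{\mathcal{W}^*}\! q^*\log\tfrac{q^*}{p^*}\,\mathrm{d}x \\
& + \lambda\log(\overline{R})\,\mathbb{P}(\mathcal{W}_p\setminus\mathcal{W}_q) + (1-\lambda)\log(\overline{R})\,\mathbb{Q}(\mathcal{W}_q\setminus\mathcal{W}_p).
\end{align*}

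The $(\Leftarrow)$ direction is then immediate: $\mathbb{P}=\mathbb{Q}$ forces $\mathcal{W}_p=\mathcal{W}_q=\mathcal{W}^*$ and $r^*\equiv 1$, so every term vanishes. For $(\Rightarrow)$ I would argue the contrapositive in two cases. If $\mathbb{P}$ and $\mathbb{Q}$ have the same support, both boundary terms drop and the first two integrals add up to $\lambda\,\mathrm{KL}(\mathbb{P}\parallel\mathbb{Q})+(1-\lambda)\,\mathrm{KL}(\mathbb{Q}\parallel\mathbb{P})$, which by Gibbs' inequality is strictly positive whenever $\mathbb{P}\neq\mathbb{Q}$ (the endpoints $\lambda\in\{0,1\}$ use the standard equality condition for a single KL). Otherwise at least one non-overlap mass is strictly positive; the log-sum inequality $\int_{\mathcal{W}^*}p^*\log(p^*/q^*)\ge m_p\log(m_p/m_q)$ (with $m_p=\mathbb{P}(\mathcal{W}^*)$, $m_q=\mathbb{Q}(\mathcal{W}^*)$), together with its symmetric partner, bounds the first line uniformly from below independently of $\overline{R}$, while the corresponding boundary term grows like $\log(\overline{R})$, so for $\overline{R}$ sufficiently large $\mathrm{DRM}>0$.

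The main obstacle is the endpoint subcase in which $\lambda$ annihilates the only boundary coefficient that could detect a support mismatch, e.g.\ $\lambda=0$ with $\mathbb{P}(\mathcal{W}_p\setminus\mathcal{W}_q)>0$ but $\mathbb{Q}(\mathcal{W}_q\setminus\mathcal{W}_p)=0$. The rescue is that $\mathbb{Q}(\mathcal{W}_q\setminus\mathcal{W}_p)=0$ already forces $m_q=1$, so $\int_{\mathcal{W}^*}q^*\log(q^*/p^*)$ becomes a genuine $\mathrm{KL}(\mathbb{Q}\parallel\mathbb{P})$; requiring it to vanish forces $p^*=q^*$ on $\mathcal{W}_q$, after which $\int p^*=1=\int q^*$ forces $\mathbb{P}(\mathcal{W}_p\setminus\mathcal{W}_q)=0$ as well, ruling out the assumed mismatch. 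The analogous argument at $\lambda=1$ completes the proof.
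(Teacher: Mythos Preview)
Your proof is correct and follows essentially the same route as the paper: substitute the stratified maximizer from Theorems~\ref{thm:strat_silver2} and~\ref{thm:opt_ordinary_p}, obtain the decomposition into restricted-KL terms on $\mathcal{W}^*$ plus $\log(\overline{R})$-weighted boundary masses, and then case-split on whether the supports agree. The paper bounds the restricted-KL terms via the cruder inequality $\log x \le x-1$ rather than your log-sum inequality, but the overall structure is identical; in fact your explicit treatment of the endpoint subcase ($\lambda\in\{0,1\}$ with one-sided support inclusion, where the only surviving boundary coefficient multiplies a zero mass) is more careful than the paper's, which simply asserts $\min\{\mathbb{P}(\mathcal{W}_p\setminus\mathcal{W}_q),\mathbb{Q}(\mathcal{W}_q\setminus\mathcal{W}_p)\}>0$ without justification.
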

The proof is shown in Appendix~\ref{appdx:lem:temp_drm_kl}.

We give an empirical approximator for $\mathrm{DRM}^\lambda_{\mathcal{R}}(\mathbb{P}\parallel\mathbb{Q})$.
Suppose we have empirical measures $\mathbb{P}_n = n^{-1} \sum_{i=1}^n \delta_{X_i}$ and $\mathbb{Q}_m = m^{-1} \sum_{j=1}^m \delta_{Z_j}$ with the Dirac measure $\delta_x$ at $x \in \mathcal{W}$.
As discussed in Section~\ref{sec:mpl_dre_strat}, we achieve the empirical approximation as % of $\mathrm{DRM}^\lambda_{\mathcal{R}}(\mathbb{P}\parallel\mathbb{Q})$ is given as 
\begin{align*}
&\widehat{\mathrm{DRM}}^\lambda_{\mathcal{R}, n, m}(\mathbb{P}_n\parallel\mathbb{Q}_m) := \sup_{r\in \mathcal{R}}\widehat{\mathcal{K}}(r).
\end{align*}

\subsection{From DRM to the KL-divergence and the IPMs}
We define a function class $\tilde{\mathcal{R}}$ as 
%\begin{align*}
    $\tilde{\mathcal{R}} = \left\{\exp(g(\cdot))\mid g \in \mathcal{G}\right\}$.
%\end{align*}
%where $\mathcal{G}$ is a class of measurable functions. 
Then, we obtain the following result.
\begin{theorem} \label{thm:drm_kl}
Suppose that under $\mathcal{G}$, $\tilde{\mathcal{R}}$ satisfies Definition~\ref{def:func_class}. Then,
\begin{align*}
    &\mathrm{DRM}^{1/2}_{\tilde{\mathcal{R}}}(\mathbb{P}\parallel\mathbb{Q}) = \frac{1}{2}\mathrm{IPM}_{\mathcal{G}}(\mathbb{P}\parallel\mathbb{Q}).
\end{align*}
Besides, suppose that $r^* \in \tilde{\mathcal{R}}$. If $\mathcal{W}_q \subseteq \mathcal{W}_p$, then $\mathrm{DRM}^{1}_{\tilde{\mathcal{R}}}(\mathbb{P}\parallel\mathbb{Q}) = \mathrm{KL}(\mathbb{P}\parallel\mathbb{Q})$. If $\mathcal{W}_p \subseteq \mathcal{W}_q$, then $\mathrm{DRM}^{0}_{\tilde{\mathcal{R}}}(\mathbb{P}\parallel\mathbb{Q}) = \mathrm{KL}(\mathbb{Q}\parallel \mathbb{P})$.
\end{theorem}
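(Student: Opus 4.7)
The plan is to verify the three identities separately. In each case I would unfold $\mathrm{DRM}^\lambda_{\tilde{\mathcal{R}}}$ via the substitution $r = \exp(g)$ for $g \in \mathcal{G}$ (so that $\log r = g$) and then identify the resulting variational problem with the claimed right-hand side.

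For $\lambda = 1/2$, the substitution turns \eqref{eq:drm_obj} into
\begin{align*}
\mathrm{DRM}^{1/2}_{\tilde{\mathcal{R}}}(\mathbb{P}\parallel\mathbb{Q}) = \tfrac{1}{2}\sup_{g \in \mathcal{G}:\, \exp(g) \in C(\tilde{\mathcal{R}})} \Big\{\int g\, d\mathbb{P} - \int g\, d\mathbb{Q}\Big\},
\end{align*}
and the claim $\mathrm{DRM}^{1/2}_{\tilde{\mathcal{R}}} = \tfrac{1}{2}\mathrm{IPM}_{\mathcal{G}}$ follows by reading $\mathcal{G}$ (as in requirement~(iii) of Section~\ref{sec:relate} just above) as exactly the class of $g$ for which $\exp(g)$ already satisfies $T_1 = T_2 = 1$; under this convention the constraint under the supremum is automatic, and the right-hand side is $\tfrac{1}{2}\mathrm{IPM}_{\mathcal{G}}(\mathbb{P}\parallel\mathbb{Q})$ by definition.

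For $\lambda = 1$, the DRM reduces to $\sup_{r \in C(\tilde{\mathcal{R}})}\int \log r\, d\mathbb{P}$, and the key step is a one-line Gibbs/Jensen bound: the constraint $T_1(r)=1$ together with the support assumption ensures that $rq^*$ is a probability density on $\mathcal{W}^*$, so concavity of $\log$ yields
\begin{align*}
\int \log r\, d\mathbb{P} - \mathrm{KL}(\mathbb{P}\parallel\mathbb{Q}) = \int \log \frac{rq^*}{p^*}\, d\mathbb{P} \leq \log \int \frac{rq^*}{p^*}\, d\mathbb{P} = \log \int r\, d\mathbb{Q} = 0.
\end{align*}
Equality is attained at $r = r^*$, which belongs to $C(\tilde{\mathcal{R}})$ because $r^* \in \tilde{\mathcal{R}}$ by hypothesis and the true density ratio automatically satisfies $T_1 = T_2 = 1$. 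The case $\lambda = 0$ is handled by symmetry: reparameterize $s = 1/r$ and repeat the Gibbs argument with the roles of $\mathbb{P},\mathbb{Q}$ and $T_1, T_2$ exchanged, invoking the corresponding support assumption $\mathcal{W}_p \subseteq \mathcal{W}_q$.

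The main obstacle is the $\lambda = 1/2$ identity: one must justify that the two normalization constraints $T_1(\exp(g)) = T_2(\exp(g)) = 1$ do not strictly shrink the variational problem below $\mathrm{IPM}_{\mathcal{G}}$. I would resolve this by leaning on the convention established around \eqref{eq:exp:ll} that $\mathcal{G}$ parameterizes already-normalized density-ratio models; an alternative, more delicate route would exploit shift invariance $g \mapsto g + c$ of the IPM objective to restore one constraint, together with the proper-function-set structure of $\tilde{\mathcal{R}}$ (Definition~\ref{def:func_class}) to absorb the other. The $\lambda \in \{0,1\}$ cases, by contrast, are sharp Gibbs bounds attained at $r^*$, so once $r^* \in \tilde{\mathcal{R}}$ is assumed they essentially follow immediately.
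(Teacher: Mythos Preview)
The paper gives no explicit proof of Theorem~\ref{thm:drm_kl}; it is treated as an immediate consequence of Section~\ref{sec:kl_mpl}, Section~4.2, and the DRM definition, so your proposal is essentially supplying the omitted details. For $\lambda = 1/2$ your substitution and your resolution of the constraint issue (reading $\mathcal{G}$ as the already-normalized class) match the paper's Section~4.2 discussion exactly. For $\lambda \in \{0,1\}$ you take a different route: the paper, via the remark that the lemma in Section~\ref{sec:kl_mpl} ``is derived from Theorems~\ref{thm:ordinary_silver} and~\ref{thm:opt_ordinary_p},'' would pass through Silverman's trick and the pointwise identification of the maximizer $r^*$; your one-line Gibbs/Jensen bound bypasses that machinery and is more self-contained. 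Both reach the same conclusion; yours is more elementary, while the paper's route reuses the lemmas already needed for Theorem~\ref{thm:strat_silver2} and Lemma~\ref{lem:temp_drm_kl}.

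One point to flag: your Jensen step for $\lambda = 1$ needs $\int_{\mathcal{W}^*} rq^* = 1$ out of $T_1(r)=1$, which forces $\mathcal{W}_p \cap \mathcal{W}_q^c = \emptyset$, i.e.\ $\mathcal{W}_p \subseteq \mathcal{W}_q$ --- the \emph{opposite} of the inclusion stated in the theorem. This is the same hypothesis used in the lemma of Section~\ref{sec:kl_mpl}, so the two support conditions in Theorem~\ref{thm:drm_kl} appear to be swapped; your argument is fine under the intended conditions, but you should say so explicitly rather than silently using the other inclusion.
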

Thus, the DRMs bridge the IPMs and KL-divergence via the density ratio. We illustrate the concept in Figure~\ref{fig:concept}. 

In order to estimate the density ratio using finite samples, we need to control the roughness of the estimator appropriately, as explained in Section~\ref{sec:dre}. In the DRMs, the problem of roughness can be considered as a choice of function class $\mathcal{R}$, which corresponds to the property of the IPMs that the different function class yields different metrics, such as Wasserstein distance and MMD.

\begin{figure}[t]
  \begin{center}
    \includegraphics[width=80mm]{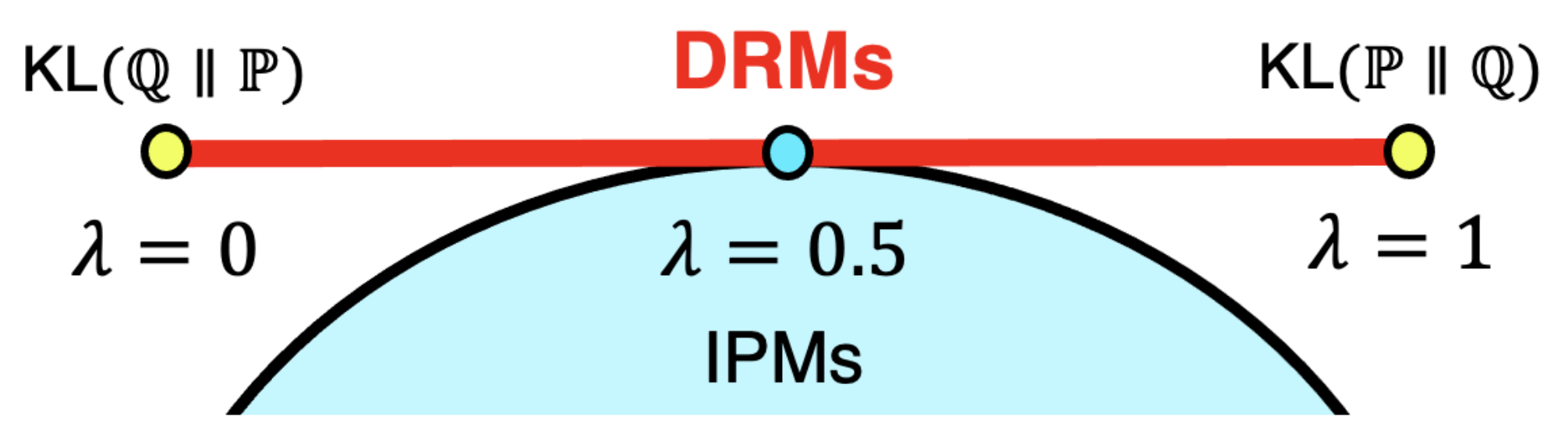}
  \end{center}
  \vspace{-0.3cm}
  \caption{Relationship between the KL-divergence, the IPMs, and DRMs. A family of the DRMs (red line) corresponds to the KL-divergence (yellow dots) at $\lambda \in \{0,1\}$, and is tangent to the set of the IPMs (blue region) at $\lambda = 0.5$.
  }
  \label{fig:concept}
  \vspace{-0.2cm}
\end{figure}

A map $d: \mathbb{P}\times \mathbb{Q} \mapsto d(\mathbb{P} \| \mathbb{Q}) \in [0, \infty]$ is called a probability semimetric if it possesses the following properties: (i) $d(\mathbb{P}\parallel \mathbb{Q}) = 0$ if and only if $\mathbb{P} = \mathbb{Q}$; (ii) $d(\mathbb{P}\parallel \mathbb{Q}) = d(\mathbb{Q}\parallel \mathbb{P})$; (iii) $d(\mathbb{P}\parallel \mathbb{Q}) \leq d(\mathbb{P}\parallel \mathbb{O}) + d(\mathbb{O}\parallel \mathbb{Q})$, where $\mathbb{O}$ is a probability measure. It is known that the IPMs are probability semimetric, and thus the following corollary holds.
\begin{theorem}
Suppose the same condition in Theorem~\ref{thm:drm_kl}. Then,
$\mathrm{DRM}^{1/2}_{\tilde{\mathcal{R}}}(\mathbb{P}\parallel\mathbb{Q})= \frac{1}{2}\mathrm{IPM}_{\mathcal{G}}(\mathbb{P}\parallel\mathbb{Q})$ is probability semimetric.
\end{theorem}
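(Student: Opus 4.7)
The plan is to invoke Theorem~\ref{thm:drm_kl} to replace $\mathrm{DRM}^{1/2}_{\tilde{\mathcal{R}}}(\mathbb{P}\parallel\mathbb{Q})$ with $\frac{1}{2}\mathrm{IPM}_{\mathcal{G}}(\mathbb{P}\parallel\mathbb{Q})$, and then verify the three probability semimetric axioms for this quantity. Since multiplication by a positive constant preserves all three axioms, it suffices to show that $\mathrm{IPM}_{\mathcal{G}}$ itself is a probability semimetric under the hypotheses of Theorem~\ref{thm:drm_kl}.

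For the identity of indiscernibles (axiom (i)), I would avoid arguing directly from the IPM side (which would require an additional richness assumption on $\mathcal{G}$) and instead apply Lemma~\ref{lem:temp_drm_kl} at $\lambda = 1/2$. That lemma yields $\mathrm{DRM}^{1/2}_{\tilde{\mathcal{R}}}(\mathbb{P}\parallel\mathbb{Q}) = 0 \Leftrightarrow \mathbb{P} = \mathbb{Q}$, and composing with the identification from Theorem~\ref{thm:drm_kl} transports this equivalence to $\frac{1}{2}\mathrm{IPM}_{\mathcal{G}}(\mathbb{P}\parallel\mathbb{Q}) = 0 \Leftrightarrow \mathbb{P} = \mathbb{Q}$.

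For symmetry (axiom (ii)), I would use the paper's standing assumption that the function class is closed under negation, i.e.\ $g \in \mathcal{G} \Rightarrow -g \in \mathcal{G}$. Then swapping $\mathbb{P}$ and $\mathbb{Q}$ inside the supremum amounts to relabeling $g \mapsto -g$, which leaves the value unchanged. For the triangle inequality (axiom (iii)), I would observe that for any probability measure $\mathbb{O}$ and any $g \in \mathcal{G}$,
\begin{equation*}
\int g\,\mathrm{d}\mathbb{P} - \int g\,\mathrm{d}\mathbb{Q} \;=\; \Bigl(\int g\,\mathrm{d}\mathbb{P} - \int g\,\mathrm{d}\mathbb{O}\Bigr) + \Bigl(\int g\,\mathrm{d}\mathbb{O} - \int g\,\mathrm{d}\mathbb{Q}\Bigr) \;\leq\; \mathrm{IPM}_{\mathcal{G}}(\mathbb{P}\parallel\mathbb{O}) + \mathrm{IPM}_{\mathcal{G}}(\mathbb{O}\parallel\mathbb{Q}),
\end{equation*}
and taking the supremum over $g \in \mathcal{G}$ on the left-hand side yields the desired inequality.

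Each step is structural, so I do not expect any genuinely hard obstacle. The only subtle point is axiom (i): a direct IPM-based argument would have to assume that $\mathcal{G}$ is rich enough to separate probability measures, which the paper has not made explicit. Routing this through Lemma~\ref{lem:temp_drm_kl} sidesteps the issue, so the main delicacy of the proof is recognizing that the identity of indiscernibles is already encoded in the DRM formulation and need not be re-derived from scratch.
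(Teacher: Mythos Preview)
Your proposal is correct and in fact goes further than the paper. The paper does not supply a proof at all: it simply states ``It is known that the IPMs are probability semimetric, and thus the following corollary holds,'' treating the result as an immediate consequence of Theorem~\ref{thm:drm_kl} together with the folklore fact about IPMs. Your route---checking the three axioms directly, with symmetry coming from the standing assumption that $\mathcal{G}$ is closed under negation and the triangle inequality from the standard splitting argument---unpacks exactly what that folklore fact contains.

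The one place where your argument is genuinely sharper than the paper's is axiom (i). Invoking ``IPMs are probability semimetrics'' tacitly assumes that $\mathcal{G}$ separates probability measures, which the paper never states as a hypothesis. By routing identity of indiscernibles through Lemma~\ref{lem:temp_drm_kl} at $\lambda=1/2$ and then transferring via Theorem~\ref{thm:drm_kl}, you obtain the conclusion from assumptions the paper has actually made (that $\tilde{\mathcal{R}}$ is proper, with $\overline{R}$ sufficiently large). This is a nice observation: the DRM formulation already encodes the separation property, so you do not need to impose it on $\mathcal{G}$ separately.
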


\begin{table*}[t]
\caption{Results of Section~\ref{sec:regression_exp}: means, medians, and stds of the squared error in DRE using synthetic datasets. The lowest mean and median (med) methods are highlighted in bold.}
\label{table:l2_error}
\vspace{-3mm}
\begin{center}
\scalebox{0.70}{
\begin{tabular}{|r|rrr|rrr|rrr|rrr|rrr|rrr|}
\hline
\multirow{2}{*}{dim} &    \multicolumn{3}{c|}{uLSIF}  & \multicolumn{3}{c|}{RuLSIF $(\alpha = 0.1)$}  & \multicolumn{3}{c|}{WD}   & \multicolumn{3}{c|}{DRM $(\lambda = 0.5)$}   & \multicolumn{3}{c|}{DRM $(\lambda = 0.1)$}  & \multicolumn{3}{c|}{DRM $(\lambda = 0.9)$} \\
 &      mean  &      med  &      std  &      mean  &      med  &      std &      mean  &      med  &      std  &   mean  &      med  &      std &   mean  &      med  &      std &     mean  &      med  &      std \\
\hline
2 &  11.216 &   9.997 &  4.547 &  11.655 &  10.481 &  4.566 &   7.962 &   4.536 &  13.838 &  1.781 &  1.107 &  2.025 &  \textbf{1.565} &  \textbf{0.977} &  1.762 &   3.039 &   2.194 &  2.959 \\
10 &  15.512 &  14.673 &  3.218 &  15.285 &  14.292 &  3.178 &  12.164 &  13.033 &   4.879 &  3.638 &  3.222 &  1.861 &  \textbf{2.901} &  \textbf{2.488} &  1.583 &   5.577 &   4.885 &  2.151 \\
50 &  15.578 &  15.045 &  3.326 &  15.586 &  15.055 &  3.326 &  14.307 &  13.922 &   3.614 &  6.654 &  6.032 &  2.340 &  \textbf{4.879} &  \textbf{4.273} &  1.856 &  10.117 &   9.451 &  2.728 \\
100 &  16.479 &  15.356 &  4.194 &  16.318 &  15.054 &  4.220 &   9.040 &   6.976 &   6.849 &  \textbf{7.540} &  \textbf{6.748} &  2.812 &  8.785 &  8.305 &  2.266 &  12.163 &  11.074 &  3.552 \\
\hline
\end{tabular}
}
\end{center}
\vspace{-3mm}
\end{table*}

\subsection{Topological Properties}

We present topological properties of the DRMs, that is, their relation to weak convergence of probability distributions. 
Such the property is important for generative models, such as GANs and adversarial VAEs.
Here, $\rightharpoonup$ denotes weak convergence of probability measures.
%weak continuity, and metrizing the weak convergence of probability distributions. 
%We recall that a functional $\mathcal{D}(\cdot \parallel \cdot)$ is a probability divergence if both $\mathcal{D}(\mathbb{P}\parallel\mathbb{Q}) \geq 0$ and $\mathcal{D}(\mathbb{P}\parallel\mathbb{Q}) = 0 \Leftrightarrow \mathbb{P} = \mathbb{Q}$ for any $\mathbb{P}, \mathbb{Q} \in \mathcal{P}(\Omega)$. 

\begin{theorem}
\label{thm:toplogical}
  Let $\mathbb{P}$ be a probability measure and $(\mathbb{P}_N)_{N\geq 0}$ be a sequence of probability measures. 
  Suppose $\mathcal{W}$ is bounded.
  Then, we have the followings:
  \begin{enumerate}
    \setlength{\parskip}{0cm}
\setlength{\itemsep}{0cm}
      \item (imply weak convergence) For $\lambda \in [0,1]$, $\lim_{N\to \infty} \mathrm{DRM}^\lambda_{\mathcal{R}}(\mathbb{P}_N\parallel \mathbb{P}) = 0\Rightarrow \mathbb{P}_N \rightharpoonup \mathbb{P}  $
      \item (metrize weak convergence) For $\lambda = 1/2$, $ \lim_{N\to \infty} \mathrm{DRM}^\lambda_{\mathcal{R}}(\mathbb{P}_N\parallel \mathbb{P}) = 0 \Leftrightarrow \mathbb{P}_N \rightharpoonup \mathbb{P}$.
  \end{enumerate}
\end{theorem}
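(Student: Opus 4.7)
The plan is to derive both parts from two ingredients already in the excerpt: Lemma~\ref{lem:temp_drm_kl}, which identifies the zeros of $\mathrm{DRM}^\lambda_{\mathcal{R}}$, and Theorem~\ref{thm:drm_kl}, which identifies $\mathrm{DRM}^{1/2}_{\tilde{\mathcal{R}}}$ with $\tfrac12\,\mathrm{IPM}_{\mathcal{G}}$.

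For Part~1 I would argue by contradiction. Suppose $\mathrm{DRM}^\lambda_{\mathcal{R}}(\mathbb{P}_N\parallel\mathbb{P})\to 0$ while $\mathbb{P}_N$ does not converge weakly to $\mathbb{P}$. Boundedness of $\mathcal{W}$ makes the sequence tight, so Prokhorov's theorem produces a subsequence converging weakly to a limit $\mathbb{P}^*\neq\mathbb{P}$. For every admissible $r\in C(\mathcal{R})$ the map $\mathbb{Q}'\mapsto \lambda\int\log r\, d\mathbb{Q}' - (1-\lambda)\int\log r\, d\mathbb{P}$ is weakly continuous because $\log r$ is bounded and continuous on $\mathcal{W}$ (using Definition~\ref{def:func_class}). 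A supremum of weakly continuous functionals is weakly lower semicontinuous, so
\[
\mathrm{DRM}^\lambda_{\mathcal{R}}(\mathbb{P}^*\parallel\mathbb{P})\le \liminf_k \mathrm{DRM}^\lambda_{\mathcal{R}}(\mathbb{P}_{N_k}\parallel\mathbb{P}) = 0,
\]
and combined with $\mathrm{DRM}^\lambda_{\mathcal{R}}\ge 0$ and Lemma~\ref{lem:temp_drm_kl}, this forces $\mathbb{P}^*=\mathbb{P}$, contradicting $\mathbb{P}^*\neq\mathbb{P}$.

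For Part~2 the direction $\mathrm{DRM}^{1/2}_{\mathcal{R}}\to 0\Rightarrow \mathbb{P}_N\rightharpoonup\mathbb{P}$ is Part~1 specialized to $\lambda=1/2$. For the converse, Theorem~\ref{thm:drm_kl} reduces the claim to $\mathbb{P}_N\rightharpoonup\mathbb{P}\Rightarrow \mathrm{IPM}_{\mathcal{G}}(\mathbb{P}_N\parallel\mathbb{P})\to 0$. Pointwise, $\int g\, d\mathbb{P}_N\to \int g\, d\mathbb{P}$ for each $g\in\mathcal{G}$ by the definition of weak convergence. To upgrade pointwise to uniform control over $\mathcal{G}$ I would exploit the proper-function-class restriction together with boundedness of $\mathcal{W}$ to invoke Arzel\`a--Ascoli: $\mathcal{G}$ inherits uniform boundedness from Definition~\ref{def:func_class} and equicontinuity from the implicit roughness restriction of Section~\ref{sec:roughness}, hence is relatively compact in the sup norm, and a standard $\varepsilon$-net argument then lets the supremum pass through the weak limit.

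The chief technical obstacle is that the admissible set $C(\mathcal{R})$ in $\mathrm{DRM}^\lambda_{\mathcal{R}}(\mathbb{Q}'\parallel\mathbb{P})$ depends on $\mathbb{Q}'$ through the constraints $T_1(r)=T_2(r)=1$, so the semicontinuity step in Part~1 does not reduce directly to the sup-of-continuous principle. I would address this by showing that every $r$ feasible for the weak limit $\mathbb{P}^*$ can be approximated uniformly on $\mathcal{W}$ by $r_k$ feasible for $\mathbb{P}_{N_k}$, with the multiplicative rescaling needed to rebalance $T_1$ and $T_2$ tending to one as $k\to\infty$. The analogous difficulty in Part~2 is verifying that $\mathcal{G}$ is genuinely equicontinuous (not merely bounded), which is the property on which the upgrade from pointwise to uniform convergence ultimately depends.
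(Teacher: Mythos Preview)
Your Part~1 argument is workable but takes a harder road than the paper. The paper simply substitutes the true ratio $r^*$ (the density ratio of $\mathbb{P}_N$ against $\mathbb{P}$) into the supremum defining $\mathrm{DRM}^\lambda_{\mathcal{R}}(\mathbb{P}_N\parallel\mathbb{P})$ to obtain the lower bound
\[
\mathrm{DRM}^\lambda_{\mathcal{R}}(\mathbb{P}_N\parallel\mathbb{P}) \ \ge\ \lambda\,\mathrm{KL}(\mathbb{P}_N\parallel\mathbb{P}) + (1-\lambda)\,\mathrm{KL}(\mathbb{P}\parallel\mathbb{P}_N),
\]
and then uses that $\mathrm{KL}\to 0$ implies weak convergence on bounded $\mathcal{W}$ (via Pinsker and total variation). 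This completely sidesteps the dependence of $C(\mathcal{R})$ on the pair of measures---precisely the obstacle you identified---and needs no Prokhorov extraction, no lower semicontinuity, and no appeal to Lemma~\ref{lem:temp_drm_kl}. Your route could presumably be pushed through via the rescaling approximation you sketch, but it spends real effort exactly where the paper's one-line lower bound spends none.

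For Part~2 the paper proceeds essentially as you do: at $\lambda=1/2$ one has directly from the definition
\[
\mathrm{DRM}^{1/2}_{\mathcal{R}}(\mathbb{P}_N\parallel\mathbb{P}) = \tfrac12\sup_{r\in C(\mathcal{R})}\int\log r\,\mathrm{d}(\mathbb{P}_N-\mathbb{P}),
\]
and one appeals to boundedness and continuity of $\log r$. You do not need to invoke Theorem~\ref{thm:drm_kl} for this---it is just the definition. Notably, the paper's proof stops at the pointwise observation that each $\int\log r\,\mathrm{d}(\mathbb{P}_N-\mathbb{P})\to 0$ and declares the supremum handled; you correctly recognize that passing the limit through the supremum requires some uniformity (equicontinuity/compactness of the test class), so in the $\Leftarrow$ direction your treatment is actually more careful than the paper's.
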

The proof is provided in Appendix~\ref{appdx:thm:topological}.

\subsection{GAN based on the DRMs}
Given observations $\mathcal{X}$ from the density $p^*$, the goal of GANs is to learn a generator, which generates samples similar to $\mathcal{X}$. The generator is a parametric function $G_\theta: \mathbb{R}^{d'}\to \mathcal{W}$, where $\theta\in\Theta$ is the parameter, $\Theta$ is the parameter space, and $d'\ll d$. We denote the function class by $ \{G_\theta\}_{\theta\in\Theta}$. Each function $G_\theta$ is applied to a $d'$-dimensional random variable $\epsilon$, and for the generator, we define a family of densities $\mathcal{G} = \{q_\beta\}_{\beta\in\mathcal{B}}$. Let us denote $m\in\mathbb{N}$ i.i.d.~samples drawn from the density $q_\beta$ by $\mathcal{W}_q$. In contrast, the discriminator $D$ belongs to a family of Borel functions from $\mathcal{W}$ to $(0,1)$, denoted by $\mathcal{D}$. 

Probability divergence plays an important role in GANs, such as the Wasserstein GAN \citep{Arjovsky2017,Bousquet2017} and MMD GAN. Based on the stratified MPL-DRE, we also propose \textbf{S}tratified \textbf{L}ikelih\textbf{o}od based \textbf{GAN} (SLoGAN). The SLoGAN train the generator and discriminator by solving the following minimax game:
\begin{align*}
\min_{\theta\in\Theta}\ \mathrm{DRM}^\lambda_{\mathcal{R}}(\mathbb{P}\parallel\mathbb{Q}_\theta).
\end{align*}
\citet{Nguyen2017} proposes D2GAN with the following objective:
$J(g, r_1, r_2) = \alpha \mathbb{E}_{p}\big[\log r_1(X_i))\big] - \mathbb{E}_p \big[r_1(X_i)) \big]- \mathbb{E}_q\big[ r_2(X_i))\big] +  \beta \mathbb{E}_{p}\big[\log r(X_i))\big]$.
This objective can be regarded as a variant of the DRMs and the IPMs. In Proposition~1 in \citet{Nguyen2017}, the authors show that the optimal discriminator $r_1$ is $\alpha r^*$ and the optimal discriminator $r_2$ is $\beta / r^*$, if separate discriminators ($r_1 = 1/r_2$) and exponential density models are not used, the objective of D2GAN surprisingly can be reduced to a metric belonging to the IPMs under $\alpha=\beta = 1$. For instance, if we use 1-Lipschitz functions for the class of the discriminators, the objective becomes the Wasserstein distance with the normalization constraints.

\subsection{Related Work}
%The followings are some other related studies.

\textbf{DRE methods}.
\citet{sugiyama2011bregman} and \citet{KatoTeshima2021} focus on the Bregman divergence (BD) minimization framework \citep{bregman1967relaxation} to provide a general framework that unifies various DRE methods, such as moment matching \citep{huang2007,gretton2009}, probabilistic classification \citep{qin1998,Cheng2004}, density matching \citep{Nguyen2008,nguyen2010}, density-ratio fitting \citep{JMLR:Kanamori+etal:2009}, and learning from positive and unlabeled data \citep{kato2018learning}.

%\paragraph{KLIEP.}
More closely related to our work is the KLIEP, a framework of DRE by density matching under the KL divergence \citet{sugiyama2008},
%, called the KL importance estimation procedure (KLIEP). 
Although the original implementation by \citet{sugiyama2008} solves the constraint problem, we can transform the problem to an unconstrained problem by applying the method of \citet{Silverman1982}. The solution of KLIEP is equal to the solution of empirical UKL,  $\min_{r\in\mathcal{R}} -\frac{1}{n}\sum^n_{i=1} \log r(X_i) + \frac{1}{m}\sum^m_{j=1} r(Z_j)$. Although \citet{sugiyama2008,Sugiyama:2012:DRE:2181148} introduce the normalization constraint to this objective and omit $\frac{1}{m}\sum^m_{j=1} r(Z_j)$, we do not have to conduct the transformation because the solution of the unconstrained problem satisfies the constraint.

%\paragraph{Overfitting in DRE.}
The roughness problem is also closely related to the overfitting problem in DRE, called train-loss hacking \citep{KatoTeshima2021} and density-chasm problem \citep{Rhodes2020a}. \citet{Rhodes2020a}, \citet{Ansari2020iclr}, \citet{kumagai2021metalearning}, and \citet{Choi2021} mainly focus on the support of two densities in population. On the other hand, \citet{KatoTeshima2021} considers that it is caused by the finite samples. The roughness problem is more related to the train-loss hacking because, as well as \citet{Good1971}, the estimated density ratio becomes a set of Dirac delta functions even if there is a common support between two densities, which causes the overfitting problem. We introduce the correction for our objective in Appendix~\ref{appendix:est}.

\textbf{Relation to GANs}.
We discuss the generative ratio matching networks \citep{Srivastava2020Generative}, which uses density ratio as a discriminator and use MMD to train the generator. Although they estimate the density ratio by using uLSIF by \citet{JMLR:Kanamori+etal:2009}, separately from the training generator with MMD, we can also estimate the density ratio from MMD; that is, based on our findings, we can train the discriminator and generator by using the same objective.

%\paragraph{Regularizations in GANs}
A density ratio is closely related to a discriminator in GANs \citep{Train2017}. We can enforce the smoothness by using the findings of GANs. \citet{Chu2020Smoothness} categorizes the smoothness of the discriminator, 
%The authors refers to $f:\mathcal{X}\to\mathbb{R}$ as  $\alpha$-Lipschitz if $|f(x) - f(y)|\leq \alpha\|x - y\|_2,\quad (x,y)\in \mathcal{W}^2$ and $f:\mathcal{X}\to\mathbb{R}$ as $\beta_1$-Lipschitz if 
%$|\nabla f(x) - \nabla f(y)|\leq \beta_2\|x - y\|_2,\quad (x,y)\in \mathcal{W}^2$. 
and find that we can enforce Lipschitz continuity by using some constraints, such as spectral normalization \citep{miyato2018spectral}. % and smooth activation. % and spectral normalization.

\section{Experiments}
\label{sec:regression_exp}
We empirically investigate the $L^2$ error $\|r - r^*\|_{L^2(\mathbb{Q})}$ in the proposed DRE based on DRM. We compare our DRM-based DRE with the uLSIF \citep{JMLR:Kanamori+etal:2009} and RuLSIF \citep{yamada2011}. For DRM, we choose $\lambda$ from $0.5$, $0.9$, and $0.1$. The model is $3$-layer perceptron with a ReLU activation function, where the number of the nodes in the middle layer is $32$. We also apply the same spectral normalization~\citep{miyato2018spectral} to enforce Lipschitz continuity. Therefore, when setting $\lambda = 0.5$ in DRM, the metric becomes WD with normalization constraints owing to Lipschitz continuity. We also show the results when we do not use the normalization constraints (just maximizing the log likelihood of MPL-DRE under the stratified sampling), denoted by WD.  For uLSIF and RuLSIF, we use an open-source implementation\footnote{\url{https://github.com/hoxo-m/densratio_py}.}, which uses a linear-in-parameter model with the Gaussian kernel \citep{KanamoriStatistical2012a}. Let the dimensions of the domain be $d$, $\mathbb{P} = \mathcal{N}(\mu_p, I_d)$, and $\mathbb{Q} = \mathcal{N}(\mu_q, I_d)$, where $\mathcal{N}(\mu, \Sigma)$ denotes the multivariate normal distribution with mean $\mu$ and $\Sigma$, and let $\mu_p$ and $\mu_q$ be $d$-dimensional vectors $\mu_p = (0,0,\dots, 0)^\top$ and $\mu_q = (1,0,\dots, 0)^\top$, where and $I_d$ is a $d$-dimensional identity matrix. We fix the sample sizes at $n = m = 1,000$ and choose $d$ from $\{2, 10, 50, 100\}$. To measure the performance, we use the mean, median (med), and standard deviation (std) of the squared errors over $50$ trials. Note that in this setting, we know the true density ratio $r^*$. The results are shown in Table~\ref{table:l2_error}. The proposed DRM-based DRE methods estimate the density ratio more accurately than the other methods with a lower mean and median of the squared error. We also show additional experimental results with different parameters in Appendix~\ref{appdx:sec:add_regression_exp}. From the additional results, we can find that appropriate choices of $\lambda$ lowers the squared error. Besides, we show experimental results on distribution modeling in Appendix~\ref{appdx:generate}.

\section{Conclusion}
We have shown that the difference in sampling schemes in the construction of the likelihood of the density ratio yields the KL-divergence and the IPMs. Based on this finding, we have proposed a novel family of probability divergence, the DRMs, including the KL-divergence and the IPMs. In addition to DRE, the DRMs are useful in various applications; the present work has aimed to reach a deeper understanding through their connection to the density ratio.

\bibliography{arXiv.bbl}
\bibliographystyle{asa}

\clearpage

\onecolumn
\appendix

\section{Proof of Theorem~\ref{thm:ordinary_silver}}
\label{appdx:thm:ordinary_silver}
\begin{proof}
Preliminary, we define the following terms:
\begin{align*}
    &A(r) = -\int_{\mathcal{W}_p} \log ( r(x) ) p^*(x)\mathrm{d}x  + \int_{\mathcal{W}^*} r(z)q^*(z)\mathrm{d}z  + \alpha \Psi(r),\\
    &A_0(r) =  -\int_{\mathcal{W}_p} \log (r(x)) p^*(x)\mathrm{d}x - \alpha \Psi(r).
\end{align*}

Given $r$ in $\mathcal{R}$, we define $r^\diamond$ as
\begin{align*}
&\log \left(r^\diamond(x) \right)= -\log \left(r(x)\right) - \log \left(\int_{\mathcal{W}^*} r(z)q^*(z)\mathrm{d}z + \int_{\mathcal{W}_p \cap\mathcal{W}^c_q}p^*(x)\mathrm{d}x\right).
\end{align*}
Here, we have
\begin{align*}
&r^\diamond (x) = \frac{ r (x)}{\int_{\mathcal{W}^*} r(z)q^*(z)\mathrm{d}z + \int_{\mathcal{W}_p \cap\mathcal{W}^c_q}p^*(x)\mathrm{d}x}.
\end{align*}
Therefore,
\begin{align*}
&\int_{\mathcal{W}_p } r^\diamond (x) w^*(x)\mathrm{d}x = \int_{\mathcal{W}^*} r^\diamond (z) q^*(z)\mathrm{d}z + \int_{\mathcal{W}_p \cap\mathcal{W}^c_q}p^*(x)\mathrm{d}x = \frac{\int_{\mathcal{W}^*} r (z) q^*(z)\mathrm{d}z + \int_{\mathcal{W}_p \cap\mathcal{W}^c_q}p^*(x)\mathrm{d}x}{\int_{\mathcal{W}^*} r(z)q^*(z)\mathrm{d}z + \int_{\mathcal{W}_p \cap\mathcal{W}^c_q}p^*(x)\mathrm{d}x} = 1,
\end{align*}
where $w^*(x) = \mathbbm{1}[x\in\mathcal{W}^*](q^*(x) - p^*(x)/r^\diamond(x)) + p^*(x)/r^\diamond(x)$.  Besides, from the condition, $\Psi(r^\diamond) = \Psi(r)$. 

Using the equality, we obtain the following relation by elementary manipulations:
\begin{align*}
A\left(r^\diamond \right) &=- \int_{\mathcal{W}_p} \log \left(r^\diamond(x)\right) p^*(x)\mathrm{d}x+ \int_{\mathcal{W}^*} r^\diamond(z)q^*(z)\mathrm{d}z+ \int_{\mathcal{W}_p \cap\mathcal{W}^c_q}p^*(x)\mathrm{d}x + \Psi(r^\diamond)\\
&=- \int_{\mathcal{W}_p } \left( \log \left(r(x)\right) - \log \left(\int_{\mathcal{W}^*} r(z)q^*(z)\mathrm{d}z + \int_{\mathcal{W}_p \cap\mathcal{W}^c_q}p^*(x)\mathrm{d}x\right)\right) p^*(x)\mathrm{d}x + 1 + \Psi(r)\\
&=- \int_{\mathcal{W}_p } \log \left(r(x)\right)  p^*(x)\mathrm{d}x + \log \left(\int_{\mathcal{W}^*} r(z)q^*(z)\mathrm{d}z + \int_{\mathcal{W}_p \cap\mathcal{W}^c_q}p^*(x)\mathrm{d}x\right) + 1+ \Psi(r)\\
&=A(r) - \int_{\mathcal{W}^*}
r(z)q^*(z)\mathrm{d}z - \int_{\mathcal{W}_p \cap\mathcal{W}^c_q}p^*(x)\mathrm{d}x + \log\left(\int_{\mathcal{W}^*} r(z)q^*(z)\mathrm{d}z + \int_{\mathcal{W}_p \cap\mathcal{W}^c_q}p^*(x)\mathrm{d}x\right)  + 1\\
&=A(r) - T_1(r) + \log\left( T_1(r)\right)  + 1.
\end{align*}
Hence, we obtain $A(r^\diamond) \leq A(r)$.
Also, $A(r^\diamond) = A(r)$ holds only if $T_1(r)  = 1$, since $t - \log t \geq 1$ for all $t \geq 0$ and they are equal only if $t=1$. 

Therefore, $r$ minimizes $A(r)$ if and only if $r$ minimizes $A(r)$ subject to $T_1(r)  = 1$. Here, note that, subject to $T_1(r)  = 1$, the two objectives $A(r)$ and $A_0(r) + 1$ are identical. Thus, the proof is complete.
\end{proof}

\section{Proof of Theorem~\ref{thm:opt_ordinary_p}}
\label{appdx:proof:opt_ordinary_p}
\begin{proof}
We consider the minimization of
\begin{align*}
& -\int_{\mathcal{W}^*} \log ( r(x) ) p^*(x)\mathrm{d}x + \int_{\mathcal{W}^*} r(z)q^*(z)\mathrm{d}z + \int_{\mathcal{W}_p \cap\mathcal{W}^c_q}p^*(x)\mathrm{d}x,
\end{align*}
over all functions $r\in\mathcal{R}$. This problem can be reduced to the following point-wise minimization problem:
\begin{align}
&\min_{u \in (0,\infty)} -\log u q^*(x) + uq^*(x).
\end{align}

As we denote the solution by $u^* $, the first order condition of this minimization problem is given as
\begin{align*}
&-  \frac{1}{u^*}p^*(x) + q^*(x) = 0,
\end{align*}
and $u^* \in (0, \infty)$ holds.
Then, the solution given as $u^* = \frac{p^*(x)}{q^*(x)}\in(0,\infty)$.

Finally, we define $r^\dagger_{\mathrm{ordinary}, p}(x) :=\argmin_{u\in(0,\infty)} -\log u q^*(x) + uq^*(x)$ for $x\in\mathcal{W}^*$, $r^\dagger_{\mathrm{ordinary}, p}(x) = r^*(x)$. From Definition~\ref{def:func_class}, for $x\notin\mathcal{W}_p$, $r^\dagger_{\mathrm{ordinary}, p}(x) = 1/\overline{R}$, and for $x\notin\mathcal{W}_q$, $r^\dagger_{\mathrm{ordinary}, p}(x) = \overline{R}$. It can be confirmed that $r^\dagger_{\mathrm{ordinary}, p}(x)\in\mathcal{R}$ because $r^*$ is measurable and takes values in $(0, \infty)$. Therefore, the solution
of the original optimization problem is equal to $r^\dagger_{\mathrm{ordinary}, p}$ almost everywhere.

By the same procedure of the proof on $r^\dagger_{\mathrm{ordinary}, p}$, we can obtain the $r^\dagger_{\mathrm{ordinary}, q}$, which is equal to $r^\dagger_{\mathrm{ordinary}, p}$.
\end{proof}

\section{Proof of Theorem~\ref{thm:strat_silver2}}
\label{appdx:proof:thm:strat_silver2}
\begin{proof}
Preliminary, we define some supportive notations:
\begin{align*}
    &\mathcal{K}(r) = \lambda \int \log r(x) \mathrm{d}\mathbb{P}(x)
 - (1-\lambda) \int \log r(x)\mathrm{d}\mathbb{Q}(x).
\end{align*}

%Let $\tilde{r}_{\mathrm{strat}}$ be the maximizer of
%\begin{align*}
 %   \mathcal{K}(r) = \alpha \Psi(r) + \int_{\mathcal{W}_p }\lambda \log (r(x)) p^*(x)\mathrm{d}x -\int_{\mathcal{W}_q} (1 - \lambda) \log (r(z))  q^*(z)\mathrm{d}z,
%\end{align*}
%under the constraint in \eqref{eq:const_strat}, and $r^\dagger_{\mathrm{strat}}$ be the maximizer of
%\begin{align*}
%\widetilde{\mathcal{K}}(r) =  \mathcal{K}(r)-  (1 - \lambda) \int_{\mathcal{W}^*} \frac{1}{r(x)}p^*(x) \mathrm{d}x&\\
%- \lambda \int_{\mathcal{W}^*} r(z) q^*(z) \mathrm{d}z&.
%\end{align*}

We consider the KKT condition for functionals. Let us consider minimizing $-\mathcal{K}(r)$ for $r\in\mathcal{R}$, where $\mathcal{R}$ is defined in Definition~\ref{def:func_class}, satisfying $T_1(r) = T_2(r) = 1$.
%\begin{align*}
%    &\int_{\mathcal{W}^*} r(z)q^*(z)\mathrm{d}z + \int_{\mathcal{W}_p \cap\mathcal{W}^c_q}p^*(x)\mathrm{d}x= 1, \mbox{~~and~}\\
%    &\int_{\mathcal{W}^*} \frac{1}{r(x)}p^*(x)\mathrm{d}x  + \int_{\mathcal{W}^c_p \cap \mathcal{W}_q}q^*(z)\mathrm{d}z= 1.
%\end{align*}
We define a set of constrained functions as
\begin{align*}
    &\mathcal{T}_1 := \left\{r: T_1(r)= 1 \right\}, \mbox{~~and~~}\mathcal{T}_2 := \left\{r: T_2(r)= 1 \right\}.
\end{align*}
We consider the following inequality:
\begin{align}
    \min_{r\in\mathcal{R} \cap \mathcal{T}_1 \cap \mathcal{T}_2 }-\mathcal{K}(r)
%    &\min_{\stackrel{r\in\mathcal{R}}{\substack{\mathrm{s.t.}\ \int_{\mathcal{W}^*} r(z)q^*(z)\mathrm{d}z + \int_{\mathcal{W}_p \cap\mathcal{W}^c_q}p^*(x)\mathrm{d}x= 1,\\ \ \ \ \ \ \int_{\mathcal{W}^*} \frac{1}{r(x)}p^*(x)\mathrm{d}x  + \int_{\mathcal{W}^c_p \cap \mathcal{W}_q}q^*(z)\mathrm{d}z= 1}}}-\mathcal{K}(r)\\
    &\geq \min_{r\in\mathcal{R} \cap \mathcal{T}_1 \cap \mathcal{T}_2 }-\lambda \int \log r(x) \mathrm{d}\mathbb{P}(x) + \min_{r\in\mathcal{R} \cap \mathcal{T}_1 \cap \mathcal{T}_2 } (1-\lambda) \int \log r(x)\mathrm{d}\mathbb{Q}(x)\nonumber\\
    \label{eq:main}
    &\geq \min_{r\in\mathcal{R} \cap \mathcal{T}_1 }-\lambda \int \log r(x) \mathrm{d}\mathbb{P}(x) + \min_{r\in\mathcal{R} \cap \mathcal{T}_2 } (1-\lambda) \int \log r(x)\mathrm{d}\mathbb{Q}(x)
\end{align}
Then, we consider the solutions of
\begin{align*}
\min_{r\in\mathcal{R} \cap \mathcal{T}_1 }-\int \log r(x) \mathrm{d}\mathbb{P}(x),
\end{align*}
and 
\begin{align*}
\min_{r\in\mathcal{R} \cap \mathcal{T}_2 } - \int \log \frac{1}{r(x)}\mathrm{d}\mathbb{Q}(x),
\end{align*}
where recall that we denote the solutions by $r^\dagger_{\mathrm{ordinary}, p}(x)$ and $r^\dagger_{\mathrm{ordinary}, q}(x)$.

First, Theorem~\ref{thm:opt_ordinary_p} shows the solution $r^\dagger_{\mathrm{ordinary}, p}$. This result can be applied to $r^\dagger_{\mathrm{ordinary}, q}$. Then, we can confirm that $r^\dagger_{\mathrm{ordinary}, p}(x)=r^\dagger_{\mathrm{ordinary}(x), q} = r^\dagger$. By definition, $r^\dagger$ satisfies the constraints in \eqref{eq:main}. Therefore, $r^\dagger_{\mathrm{ordinary}, p}(x)=r^\dagger_{\mathrm{ordinary}(x), q} = r^\dagger$ is also the solution of \eqref{eq:main}. 
\end{proof}

\section{Proof of Theorem~\ref{thm:est_error_bound}}
\label{appdx:l2norm}
We consider relating the \(L^2\) error bound to the DRM generalization error bound in the following lemma.
\begin{lemma}[\(\Ltwo\) distance bound]
\label{appdx:sec:appendix:strong-convexity}
Let \(\mathcal{H} := \{\r: \InSpace \to (0, \infty)  | \int_{\mathcal{W}^*} |r(x)|^2 \mathrm{d}x < \infty\}\) and assume \(\rstar \in \mathcal{H}\).
If \(\inf_{t \in (0, \infty) } br''(t) > 0\), then there exists \(\mu > 0\) such that for all \(r \in \mathcal{H}\),
\begin{equation*}\begin{split}
&\lambda\|r  - r^*\|_{\Ltwo(\mathbb{Q})}^2/ \| r^{*} \|^2_{L^2(\mathbb{P})} + (1-\lambda)\|1/r - 1/r^*\|_{\Ltwo(\mathbb{P})}^2/\| 1/r^{*} \|^2_{L^2(\mathbb{Q})}\\
&= -  2\left(\widetilde{\mathcal{K}}(\r) - \widetilde{\mathcal{K}}(\rstar)\right)  -  o(\lambda \|r  - r^*\|_{\Ltwo(\mathbb{Q})}^2) - o((1-\lambda)\|1/r - 1/r^*\|_{\Ltwo(\mathbb{P})}^2).
\end{split}\end{equation*}
\end{lemma}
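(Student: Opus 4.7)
I would prove the identity by a second-order Taylor expansion of $\log r$ around the true ratio, but with a distinct expansion point for each of the two terms in $\widetilde{\mathcal{K}}(r) - \widetilde{\mathcal{K}}(r^*)$. Writing
\[
\widetilde{\mathcal{K}}(r) - \widetilde{\mathcal{K}}(r^*) = \lambda \int (\log r - \log r^*)\, \mathrm{d}\mathbb{P} - (1-\lambda) \int (\log r - \log r^*)\, \mathrm{d}\mathbb{Q},
\]
I would expand $\log r$ around $r^*$ in the first integral, and use the identity $\log r = -\log(1/r)$ before expanding $\log(1/r)$ around $1/r^*$ in the second. This strategic choice is what causes $(r-r^*)^2$ to appear in one quadratic term and $(1/r-1/r^*)^2$ in the other, matching the two $L^2$-norms in the statement.

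\textbf{First-order cancellation.} After these expansions, the combined linear contribution is $\lambda \int (r-r^*)/r^* \, \mathrm{d}\mathbb{P} + (1-\lambda) \int r^*(1/r - 1/r^*)\, \mathrm{d}\mathbb{Q}$, which under $\mathrm{d}\mathbb{P} = r^*\, \mathrm{d}\mathbb{Q}$ collapses to $\lambda \int (r - r^*)\, \mathrm{d}\mathbb{Q} + (1-\lambda) \int (1/r - 1/r^*)\, \mathrm{d}\mathbb{P}$. The decisive step is then to invoke the normalization constraints defining $C(\mathcal{R})$: both $r$ and $r^*$ satisfy $T_1 = T_2 = 1$, so that $\int r\, \mathrm{d}\mathbb{Q} = \int r^*\, \mathrm{d}\mathbb{Q}$ and $\int (1/r)\, \mathrm{d}\mathbb{P} = \int (1/r^*)\, \mathrm{d}\mathbb{P}$. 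Hence the first-order contribution vanishes identically, which is precisely what allows the leading behaviour to be purely quadratic.

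\textbf{Second-order identification and norm matching.} The surviving quadratic term reads
\[
-\frac{\lambda}{2}\int \frac{(r - r^*)^2}{(r^*)^2}\, \mathrm{d}\mathbb{P} - \frac{1-\lambda}{2}\int (r^*)^2(1/r - 1/r^*)^2\, \mathrm{d}\mathbb{Q}.
\]
Substituting $\mathrm{d}\mathbb{P} = r^*\, \mathrm{d}\mathbb{Q}$ in the first integral and $(r^*)^2\, \mathrm{d}\mathbb{Q} = r^*\, \mathrm{d}\mathbb{P}$ in the second converts this to $-\tfrac{\lambda}{2}\int (r - r^*)^2/r^*\, \mathrm{d}\mathbb{Q} - \tfrac{1-\lambda}{2}\int r^*(1/r - 1/r^*)^2\, \mathrm{d}\mathbb{P}$. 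Because $r^* \in \mathcal{R}$ takes values in $[1/\overline{R},\overline{R}]$, the pointwise weights $1/r^*$ and $r^*$ are bounded away from $0$ and $\infty$, so each integral is comparable to the corresponding $L^2$-norm in the statement up to constants that I would absorb into the scale factors $\|r^*\|_{L^2(\mathbb{P})}^2$ and $\|1/r^*\|_{L^2(\mathbb{Q})}^2$, themselves bounded above and away from zero by the same argument.

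\textbf{Main obstacle.} The delicate part is promoting the pointwise third-order Taylor remainder, which is a priori only $o((r-r^*)^2)$ and $o((1/r-1/r^*)^2)$ pointwise, into integrated remainders of order $o(\|r - r^*\|_{L^2(\mathbb{Q})}^2)$ and $o(\|1/r - 1/r^*\|_{L^2(\mathbb{P})}^2)$. For this I would combine the hypothesis $\inf f'' > 0$ with the uniform boundedness $r, r^* \in [1/\overline{R}, \overline{R}]$: the third derivative of $\log$ is then uniformly bounded on the relevant range, and a dominated-convergence argument upgrades the pointwise little-$o$ to the integrated one. The strong-convexity lower bound on $f''$ also guarantees that the leading quadratic form is coercive, so that the remainder is genuinely of smaller order than the dominant term rather than merely bounded by it.
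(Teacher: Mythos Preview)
Your overall strategy---expanding $\log r$ around $r^*$ in the $\mathbb{P}$-integral and $\log(1/r)$ around $1/r^*$ in the $\mathbb{Q}$-integral, then using $\mathrm{d}\mathbb{P}=r^*\,\mathrm{d}\mathbb{Q}$ to match the two $L^2$-norms---is exactly the route the paper takes. The second-order identification and the H\"older-type norm comparison you describe are also the same as in the paper.

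There is, however, a genuine gap in your first-order step. The expression you write for $\widetilde{\mathcal{K}}(r)-\widetilde{\mathcal{K}}(r^*)$ is only $\mathcal{K}(r)-\mathcal{K}(r^*)$: you have dropped the Silverman-type penalty terms
\[
-\lambda\int_{\mathcal{W}^*} r\,\mathrm{d}\mathbb{Q}\;-\;(1-\lambda)\int_{\mathcal{W}^*}\frac{1}{r}\,\mathrm{d}\mathbb{P}
\]
that are part of the definition of $\widetilde{\mathcal{K}}$. Having lost them, you are forced to invoke the normalization constraints $T_1(r)=T_2(r)=1$ to make the linear term vanish---but those constraints are \emph{not} among the hypotheses of the lemma, which is stated for all $r$ in the unconstrained $L^2$ class $\mathcal{H}$.

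In the paper's proof it is precisely these penalty terms that perform the first-order cancellation, with no constraint needed. After the change of measure the $\lambda$-part of $-(\widetilde{\mathcal{K}}(r)-\widetilde{\mathcal{K}}(r^*))$ becomes
\[
\int_{\mathcal{W}^*}\bigl\{-r^*\log(r/r^*)+(r-r^*)\bigr\}\,\mathrm{d}\mathbb{Q},
\]
and the $(r-r^*)$ arising from the penalty $\int r\,\mathrm{d}\mathbb{Q}$ cancels exactly the first Taylor term $-r^*(r/r^*-1)$, leaving the quadratic $\tfrac{1}{2}\int(r/r^*-1)^2\,\mathrm{d}\mathbb{P}$ plus higher order. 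The $(1-\lambda)$-part is symmetric with $1/r$ in place of $r$. Once you restore the penalty terms, your argument goes through without any appeal to $C(\mathcal{R})$, and the remainder of your write-up (including the more careful discussion of upgrading the pointwise $o$-remainder to an integrated one, which the paper glosses over) aligns with the paper.
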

\begin{proof}
We define the additional notation
\begin{align*}
    \widetilde{\mathcal{K}}(r) =  \mathcal{K}(r)-  (1 - \lambda) \int_{\mathcal{W}^*} \frac{1}{r(x)}p^*(x) \mathrm{d}x - \lambda \int_{\mathcal{W}^*} r(z) q^*(z) \mathrm{d}z&.
\end{align*}

Since \(\mu := \inf_{t \in (0, \infty)} \log''(t) > 0\), the function \(\br\) is \(\mu\)-strongly convex.
By the definition of strong convexity,
\begin{equation*}\begin{aligned}
&-\left(\widetilde{\mathcal{K}}(\r) - \widetilde{\mathcal{K}}(\rstar)\right)\\
&= -\lambda \int_{\mathcal{W}^*} \log r(x) \mathrm{d}\mathbb{P}(x) + (1- \lambda) \int_{\mathcal{W}^*} \log r(z) \mathrm{d}\mathbb{Q}(z) + ( 1 - \lambda )\int_{\mathcal{W}^*} \frac{1}{r(x)} \mathrm{d}\mathbb{P}(x)  +  \lambda\int_{\mathcal{W}^*} r(z) \mathrm{d}\mathbb{Q}(z)  - 2\\
&\ \ \ \ + \lambda \int_{\mathcal{W}^*} \log r^*(x) \mathrm{d}\mathbb{P}(x) - (1- \lambda) \int_{\mathcal{W}^*} \log r^*(z) \mathrm{d}\mathbb{Q}(z) - ( 1 - \lambda )\int_{\mathcal{W}^*} \frac{1}{r^*(x)} \mathrm{d}\mathbb{P}(x)  -  \lambda\int_{\mathcal{W}^*} r^*(z) \mathrm{d}\mathbb{Q}(z)  + 2.
\end{aligned}\end{equation*}
Here, we have
\begin{align*}
& - \int_{\mathcal{W}^*} \log r(x) \mathrm{d}\mathbb{P}(x)  +  \int_{\mathcal{W}^*} r(z) \mathrm{d}\mathbb{Q}(z) + \int_{\mathcal{W}^*} \log r^*(x) \mathrm{d}\mathbb{P}(x)  -  \int_{\mathcal{W}^*} r^*(z) \mathrm{d}\mathbb{Q}(z)\\
&= - \int_{\mathcal{W}^*} r^*(z) \log r(z) \mathrm{d}\mathbb{Q}(x)  +  \int_{\mathcal{W}^*} r(z) \mathrm{d}\mathbb{Q}(z) + \int_{\mathcal{W}^*} r^*(z)\log r^*(z) \mathrm{d}\mathbb{Q}(z)  -  \int_{\mathcal{W}^*} r^*(z) \mathrm{d}\mathbb{Q}(z)\\
&= \int_{\mathcal{W}^*} \left\{- r^*(z) \log r(z)  +  r(z)  + r^*(z)\log r^*(z)  -  r^*(z)\right\} \mathrm{d}\mathbb{Q}(z)\\
&= \int_{\mathcal{W}^*} \left\{- r^*(z) \log \frac{r(z)}{r^*(z)}  +  r(z)  -  r^*(z)\right\} \mathrm{d}\mathbb{Q}(z)\\
&= \int_{\mathcal{W}^*} \left\{- r^*(z) \left( \frac{r(z)}{r^*(z)} - 1 -\frac{1}{2}\left(\frac{r(z)}{r^*(z)} - 1\right)^2 + \cdots \right)  +  r(z)  -  r^*(z)\right\} \mathrm{d}\mathbb{Q}(z)\\
&=\frac{1}{2}\int_{\mathcal{W}^*} \left(\frac{r(x)}{r^*(x)} - 1\right)^2 \mathrm{d}\mathbb{P}(x) + o\left(\int_{\mathcal{W}^*} \left(\frac{r(x)}{r^*(x)} - 1\right)^2 \mathrm{d}\mathbb{P}(x)\right).
\end{align*}
Similarly,
\begin{align*}
&\int_{\mathcal{W}^*} \log r(z) \mathrm{d}\mathbb{Q}(z)  +  \int_{\mathcal{W}^*} \frac{1}{r(x)} \mathrm{d}\mathbb{P}(x) - \log r^*(z) \mathrm{d}\mathbb{Q}(z)  - \int_{\mathcal{W}^*} \frac{1}{r^*(x)} \mathrm{d}\mathbb{P}(x)\\
&=\frac{1}{2}\int_{\mathcal{W}^*} \left(\frac{r^*(z)}{r(z)} - 1\right)^2 \mathrm{d}\mathbb{Q}(z) + o\left(\int_{\mathcal{W}^*} \left(\frac{r^*(z)}{r(z)} - 1\right)^2 \mathrm{d}\mathbb{Q}(z)\right).
\end{align*}
By combining them, 
\begin{align*}
    -\left(\widetilde{\mathcal{K}}(\r) - \widetilde{\mathcal{K}}(\rstar)\right)\geq \lambda/2\|r / r^* - 1\|_{\Ltwo(\mathbb{P})}^2 + (1-\lambda)/2\|r^*/r - 1\|_{\Ltwo(\mathbb{Q})}^2  + o(\|r / r^* - 1\|_{\Ltwo(\mathbb{P})}^2) + o(\|r^*/r - 1\|_{\Ltwo(\mathbb{Q})}^2).
\end{align*}
Here, from \Holder's inequality, 
\begin{align*}
    \int_{\mathcal{W}^*} (r(x) - r^*(x))^2\mathrm{d}\mathbb{Q}(x)&=\int_{\mathcal{W}^*} r^{*2}(x)\left(\frac{r(x)}{r^*(x)} - 1\right)^2\mathrm{d}\mathbb{Q}(x)\\
    &=\int_{\mathcal{W}^*} r^{*}(z)\left(\frac{r(z)}{r^*(z)} - 1\right)^2\mathrm{d}\mathbb{P}(z)\leq \| r^{*} \|^2_{L^\infty(\mathbb{P})}\|r / r^* - 1\|_{\Ltwo(\mathbb{P})}^2.
\end{align*}
Similarly,
\begin{align*}
    &\int_{\mathcal{W}^*} (1/r(x) - 1/r^*(x))^2\mathrm{d}\mathbb{Q}(x)\leq \| 1/r^{*} \|^2_{L^\infty(\mathbb{Q})}\|r^* / r - 1\|_{\Ltwo(\mathbb{Q})}^2.
\end{align*}
Therefore,
\begin{equation*}\begin{split}
&\lambda\|r  - r^*\|_{\Ltwo(\mathbb{Q})}^2/ \| r^{*} \|^2_{L^\infty(\mathbb{P})} + (1-\lambda)\|1/r - 1/r^*\|_{\Ltwo(\mathbb{P})}^2/\| 1/r^{*} \|^2_{L^\infty(\mathbb{Q})}\\
&= -  2\left(\widetilde{\mathcal{K}}(\r) - \widetilde{\mathcal{K}}(\rstar)\right)  -  o(\lambda \|r  - r^*\|_{\Ltwo(\mathbb{Q})}^2) - o((1-\lambda) \|1/r - 1/r^*\|_{\Ltwo(\mathbb{P})}^2).
\end{split}\end{equation*}
\end{proof}
Then, we prove Theorem~\ref{thm:est_error_bound} as follows:
\begin{proof}[Proof of Theorem~\ref{thm:est_error_bound}]
Following \citet{Sugiyama2010,Sugiyama:2012:DRE:2181148}, for $\mathbb{P}_1, \mathbb{P}_2 \in\mathcal{P}(\mathcal{W})$, we define unnormalized KL (UKL) objective functional as 
\begin{align*}
   \mathrm{UKL}(g; \mathbb{P}_1, \mathbb{P}_2) = 1 + \int g(x) \mathrm{d}\mathbb{P}_1(x) - \int \exp(g(x)) \mathrm{d}\mathbb{P}_2(x).
\end{align*}
Thanks to the strong convexity, by Lemma~\ref{appdx:sec:appendix:strong-convexity}, we have
\begin{align*}
  &\lambda\|\hat{r}  - r^*\|_{\Ltwo(\mathbb{Q})}^2/ \| r^{*} \|^2_{L^\infty(\mathbb{P})} + (1-\lambda)\|1/\hat{r} - 1/r^*\|_{\Ltwo(\mathbb{P})}^2/\| 1/r^{*} \|^2_{L^\infty(\mathbb{Q})}\\
  &= - \left( \widetilde{\mathcal{K}}(r) - \widetilde{\mathcal{K}}(r^*)\right) -  o(\lambda\|\hat{r}  - r^*\|_{\Ltwo(\mathbb{Q})}^2) - o((1-\lambda)\|1/\hat{r} - 1/r^*\|_{\Ltwo(\mathbb{P})}^2)\\
  &= \lambda \Risk(\hr) - \lambda\Risk(\rstar) + (1-\lambda)\Risk(1/\hr) - (1-\lambda)\Risk(1/\rstar) -  o(\lambda\|\hat{r}  - r^*\|_{\Ltwo(\mathbb{Q})}^2) - o((1-\lambda)\|1/\hat{r} - 1/r^*\|_{\Ltwo(\mathbb{P})}^2).
\end{align*}
Here, we have
\begin{align*}
  &\Risk(\hr) - \Risk(\rstar) \annot{- \hRisk(\hr) + \hRisk(\hr)}{\(=0\)} 
    \annot{- \hRisk(\rstar) + \hRisk(\rstar)}{\(=0\)}\\
  &\leq \annot{(\Risk(\hr) - \Risk(\rstar) + \hRisk(\rstar) - \hRisk(\hr))}{\(=:A\)},
\end{align*}
where we used \(\hRisk(\hr) \leq \hRisk(\rstar)\).

Let us define $\ell_1(r)$ and $\ell_2(r)$ as
\begin{align*}
    &\ell_1(r) := r,\\
    &\ell_2(r) := - \log r.
\end{align*}
For a function $A:\mathcal{W}\to \mathbb{R}$,  observations $\{W_i\}^n_{i=1}$, and a probability measure $\mathbb{W}$, let us denote the expectation and sample average by
\begin{align*}
&\mathbb{E}_{\mathbb{W}}[A(W)] = \int_{\mathcal{W}^*} A(w) \mathrm{d}\mathbb{W}(w),\\
&\widehat{\mathbb{E}}_{\mathbb{W}}[A(W)] = \frac{1}{n}\sum^n_{i=1} A(W_i) = \int_{\mathcal{W}^*} A(w) \mathrm{d}\mathbb{W}_n(w),
\end{align*}
where $\mathbb{W}_b := n^{-1} \sum_{i=1}^n \delta_{W_i}$ is an empirical measure with $\{W_i\}_{i=1}^n$.

To bound \(A\), for ease of notation, let \(\lOne{\r} = \lossOne(\r(X))\) and \(\lTwo{\r} = \lossTwo(\r(X))\).
Then, since
\begin{align*}
  \Risk(\r) &= \Ede\lOneR  + \Enu\lTwoR, \\
  \hRisk(\r) &= \hEde\lOneR  + \hEnu\lTwoR,
\end{align*}
we have
\begin{align*}
  A &= \Risk(\hr) - \Risk(\rstar) + \hRisk(\rstar) - \hRisk(\hr) \\
  &= (\Ede - \hEde)(\lOne{\hr} - \lOne{\rstar})  + (\Enu - \hEnu)(\lTwo{\hr} - \lTwo{\rstar})\\
  &\leq |(\Ede - \hEde)(\lOne{\hr} - \lOne{\rstar})| 
    + |(\Enu - \hEnu)(\lTwo{\hr} - \lTwo{\rstar})|
\end{align*}
By applying Lemma~\ref{appdx:lem:empirical-deviations}, for any \(0 < \gamma < 2\), we have
\[A \leq \Orderp{\vmax{\frac{\|\hr - \rstar\|_{\Ltwo(\mathbb{Q})}^{1 - \gamma/2}}{\sqrt{\min\{n, m\}}}}{\frac{1}{(\min\{n, m\})^{2/(2+\gamma)}}}}.\]

Then, for any \(0 < \gamma < 2\), we get
\begin{align*}
  &\Risk(\hr) - \Risk(\rstar) -  o(\|r  - r^*\|_{\Ltwo(\mathbb{Q})}^2) \\
  &= \Orderp{\vmax{\frac{\|\hr - \rstar\|_{\Ltwo(\mathbb{Q})}^{1 - \gamma/2}}{\sqrt{\min\{n, m\}}}}{\frac{1}{(\min\{n, m\})^{2/(2+\gamma)}}}} -  o(\|\hat{r}  - r^*\|_{\Ltwo(\mathbb{Q})}^2).
  \end{align*}
Similarly, we have
\begin{align*}
  &\Risk(1/\hat{r}) - \Risk(1/r^*)\\ 
  &= \Orderp{\vmax{\frac{\|1/\hr - 1/\rstar\|_{\Ltwo(\mathbb{P})}^{1 - \gamma/2}}{\sqrt{\min\{n, m\}}}}{\frac{1}{(\min\{n, m\})^{2/(2+\gamma)}}}} - o(\|1/\hat{r} - 1/r^*\|_{\Ltwo(\mathbb{P})}^2).
\end{align*}
As a result, we have 
\begin{align*}
&\lambda\|\hat{r}  - r^*\|_{\Ltwo(\mathbb{Q})}^2/ \| r^{*} \|^2_{L^\infty(\mathbb{P})} + (1-\lambda)\|1/\hat{r} - 1/r^*\|_{\Ltwo(\mathbb{P})}^2/\| 1/r^{*} \|^2_{L^\infty(\mathbb{Q})} +  o(\lambda\|\hat{r}  - r^*\|_{\Ltwo(\mathbb{Q})}^2) - o((1-\lambda)\|1/\hat{r} - 1/r^*\|_{\Ltwo(\mathbb{P})}^2)\\
&= -  2\left(\widetilde{\mathcal{K}}(\r) - \widetilde{\mathcal{K}}(\rstar)\right) \\
&=2\lambda\Orderp{\vmax{\frac{\|\hr - \rstar\|_{\Ltwo(\mathbb{Q})}^{1 - \gamma/2}}{\sqrt{\min\{n, m\}}}}{\frac{1}{(\min\{n, m\})^{2/(2+\gamma)}}}}\\
&\ \ \ + 2(1-\lambda)\Orderp{\vmax{\frac{\|1/\hr - 1/\rstar\|_{\Ltwo(\mathbb{P})}^{1 - \gamma/2}}{\sqrt{\min\{n, m\}}}}{\frac{1}{(\min\{n, m\})^{2/(2+\gamma)}}}}.
\end{align*}
Here, we have
\begin{align*}
&\min\left\{1/\| r^{*} \|^2_{L^\infty(\mathbb{P})},1/\| 1/r^{*} \|^2_{L^\infty(\mathbb{Q})} \right\}\max\left\{\lambda\|\hat{r}  - r^*\|_{\Ltwo(\mathbb{Q})}^2, (1-\lambda)\|1/\hat{r} - 1/r^*\|_{\Ltwo(\mathbb{P})}^2\right\}\\
&\ \ \ + o(\max\left\{\lambda\|\hat{r}  - r^*\|_{\Ltwo(\mathbb{Q})}^2, (1-\lambda)\|1/\hat{r} - 1/r^*\|_{\Ltwo(\mathbb{P})}^2\right\})\\
&=\Orderp{\max\left\{\max\left\{\frac{\lambda\|\hr - \rstar\|_{\Ltwo(\mathbb{Q})}^{1 - \gamma/2}}{\sqrt{\min\{n, m\}}}, \frac{(1-\lambda)\|1/\hr - 1/\rstar\|_{\Ltwo(\mathbb{P})}^{1 - \gamma/2}}{\sqrt{\min\{n, m\}}} \right\}, \frac{1}{(\min\{n, m\})^{2/(2+\gamma)}}\right\}}.
\end{align*}
Consider a case where $\lambda\|\hat{r}  - r^*\|_{\Ltwo(\mathbb{Q})}^2 \geq (1-\lambda)\|1/\hat{r} - 1/r^*\|_{\Ltwo(\mathbb{P})}^2$. In this case, we consider
\begin{align*}
&\min\left\{1/\| r^{*} \|^2_{L^\infty(\mathbb{P})},1/\| 1/r^{*} \|^2_{L^\infty(\mathbb{Q})} \right\}\lambda\|\hat{r}  - r^*\|_{\Ltwo(\mathbb{Q})}^2 + o(\lambda\|\hat{r}  - r^*\|_{\Ltwo(\mathbb{Q})}^2)\\
&=\Orderp{\max\left\{\frac{\lambda\|\hr - \rstar\|_{\Ltwo(\mathbb{Q})}^{1 - \gamma/2}}{\sqrt{\min\{n, m\}}}, \frac{1}{(\min\{n, m\})^{2/(2+\gamma)}}\right\}}.
\end{align*}
Without loss of generality, we only consider a case where $\lambda > 0$ and $\min\left\{1/\| r^{*} \|^2_{L^\infty(\mathbb{P})},1/\| 1/r^{*} \|^2_{L^\infty(\mathbb{Q})} \right\} > 0$. Then, because $\min\left\{1/\| r^{*} \|^2_{L^\infty(\mathbb{P})},1/\| 1/r^{*} \|^2_{L^\infty(\mathbb{Q})} \right\}$ and $\lambda$ are constants, either 
\begin{align*}
&\|\hat{r}  - r^*\|_{\Ltwo(\mathbb{Q})}^2 + o(\|\hat{r}  - r^*\|_{\Ltwo(\mathbb{Q})}^2)=\Orderp{\frac{\|\hr - \rstar\|_{\Ltwo(\mathbb{Q})}^{1 - \gamma/2}}{\sqrt{\min\{n, m\}}}},
\end{align*}
or 
\begin{align*}
&\|\hat{r}  - r^*\|_{\Ltwo(\mathbb{Q})}^2 + o(\|\hat{r}  - r^*\|_{\Ltwo(\mathbb{Q})}^2)=\Orderp{\frac{1}{(\min\{n, m\})^{2/(2+\gamma)}}},
\end{align*}
holds. From the first case, we have the following result:
\begin{align*}
&\|\hat{r}  - r^*\|_{\Ltwo(\mathbb{Q})} + o(\|\hat{r}  - r^*\|_{\Ltwo(\mathbb{Q})})=\Orderp{\frac{1}{\min\{n, m\}^{1/(2+\gamma)}}}.
\end{align*}
From the second case, we have the following result:
\begin{align*}
&\|\hat{r}  - r^*\|_{\Ltwo(\mathbb{Q})} + o(\|\hat{r}  - r^*\|_{\Ltwo(\mathbb{Q})})=\Orderp{\frac{1}{(\min\{n, m\})^{1/(2+\gamma)}}}.
\end{align*}
In summary,
\begin{align*}
    &\lambda\|\hat{r}  - r^*\|_{\Ltwo(\mathbb{Q})} + o(\|\hat{r}  - r^*\|_{\Ltwo(\mathbb{Q})})=\Orderp{\frac{1}{(\min\{n, m\})^{1/(2+\gamma)}}}.
\end{align*}

Similarly, for a case where $\lambda\|\hat{r}  - r^*\|_{\Ltwo(\mathbb{Q})}^2 < (1-\lambda)\|1/\hat{r} - 1/r^*\|_{\Ltwo(\mathbb{P})}^2$, we have
\begin{align*}
    &(1-\lambda)\|\hat{r}  - r^*\|_{\Ltwo(\mathbb{P})} + o(\|\hat{r}  - r^*\|_{\Ltwo(\mathbb{P})})=\Orderp{\frac{1}{(\min\{n, m\})^{1/(2+\gamma)}}}.
\end{align*}
By combining them, 
\begin{align*}
    \max\left\{\lambda\|\hat{r}  - r^*\|_{\Ltwo(\mathbb{Q})}, (1-\lambda)\|1/\hat{r} - 1/r^*\|_{\Ltwo(\mathbb{P})}\right\} = \Orderp{\frac{1}{(\min\{n, m\})^{1/(2+\gamma)}}}.
\end{align*}
\end{proof}

Each lemma used in the proof is provided as follows.

\subsection{Bounding the Empirical Deviations}
Following is a proposition originally presented in \citet{vandeGeerEmpirical2000}, which was rephrased in \citet{KanamoriStatistical2012a} in a form that is convenient for our purpose.
\begin{lemma}[Lemma~5.13 in \citet{vandeGeerEmpirical2000}, Proposition~1 in \citet{KanamoriStatistical2012a}]\label{appdx:lem:van-de-geer}
  Let \(\mathcal{F} \subset \Ltwo(\mathbb{P})\) be a function class and the map \(I(f)\)
  be a complexity measure of \(f \in \mathcal{F}\), where \(I\) is a
  non-negative function on \(\mathcal{F}\) and \(I(f_0) < \infty\) for a fixed
  \(f_0 \in \mathcal{F}\). We now define \(\mathcal{F}_M = \{f \in \mathcal{F} :
  I(f) \leq M\}\) satisfying \(\mathcal{F} = \bigcup_{M \geq 1} \mathcal{F}_M\).
  Suppose that there exist \(c_0 > 0\) and \(0 < \gamma < 2\) such that
  \[\sup_{f \in \mathcal{F}_M} \|f - f_0\| \leq c_0 M, \ \sup_{\stackrel{f \in
        \mathcal{F}_M}{\|f - f_0\|_{\Ltwo(P)} \leq \delta}} \|f - f_0\|_\infty
    \leq c_0 M, \quad \text{for all } \delta > 0,\]
  and that \(H_B(\delta, \mathcal{F}_M, \mathbb{P}) = \Order{M/\delta}^\gamma\).
  Then, we have
  \[\sup_{f \in \mathcal{F}} \frac{\left| \int (f - f_0)\mathrm{d}(\mathbb{P} - \mathbb{P}_n)
      \right|}{D(f)} = \Orderp{1}, \ (n \to \infty),\]
  where \(D(f)\) is defined by
  \[D(f) = \vmax{\frac{\|f - f_0\|_{\Ltwo(\mathbb{P})}^{1 - \gamma/2}I(f)^{\gamma/2}}{\sqrt{n}}}{\frac{I(f)}{n^{2/(2+\gamma)}}}.\]
\end{lemma}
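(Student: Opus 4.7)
The result is a classical weighted empirical-process inequality, and my plan is to prove it by the peeling (slicing) device combined with a chaining bound driven by the bracketing-entropy hypothesis.

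First, I would reduce the weighted supremum to an unweighted one on dyadic shells. Decompose
\[
\mathcal{F} = \bigcup_{j \geq 0} \mathcal{F}^{(j)}, \qquad \mathcal{F}^{(j)} := \{f \in \mathcal{F} : 2^{j-1} < I(f) \leq 2^j\} \subset \mathcal{F}_{M_j}, \quad M_j := 2^j,
\]
so on each $\mathcal{F}^{(j)}$ the complexity $I(f)$ equals $M_j$ up to a factor of $2$. Within $\mathcal{F}^{(j)}$ I would slice further by the $L^2(\mathbb{P})$ radius,
\[
\mathcal{F}^{(j,k)} := \{f \in \mathcal{F}^{(j)} : 2^{k-1}\sigma_j < \|f - f_0\|_{L^2(\mathbb{P})} \leq 2^k \sigma_j\},
\]
for a scale $\sigma_j$ to be calibrated. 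On each $\mathcal{F}^{(j,k)}$ the denominator $D(f)$ is constant up to a factor of $2$, so it suffices to show that, with probability at least $1 - p_{j,k}$,
\[
\sup_{f \in \mathcal{F}^{(j,k)}} \left|(\mathbb{P}_n - \mathbb{P})(f - f_0)\right| \leq C\, D_{j,k}, \qquad D_{j,k} := \max\!\Big\{\tfrac{(2^k\sigma_j)^{1-\gamma/2} M_j^{\gamma/2}}{\sqrt{n}},\ \tfrac{M_j}{n^{2/(2+\gamma)}}\Big\},
\]
with the probabilities summable, $\sum_{j,k} p_{j,k} < \infty$.

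The per-shell bound is the main technical ingredient. Combining the hypothesis $H_B(\delta, \mathcal{F}_{M_j}, \mathbb{P}) \leq c(M_j/\delta)^{\gamma}$ with the $L^\infty$ control $\|f - f_0\|_\infty \leq c_0 M_j$ (which activates the truncation term in Bernstein's inequality) and a Bernstein-type maximal inequality (e.g.\ Theorem~5.11 of van de Geer), a Dudley-type chaining argument on $\mathcal{F}^{(j,k)}$ yields
\[
\sup_{f \in \mathcal{F}^{(j,k)}} \left|(\mathbb{P}_n - \mathbb{P})(f - f_0)\right| \leq \frac{C}{\sqrt{n}} \int_0^{2^k\sigma_j} \sqrt{H_B(\varepsilon, \mathcal{F}_{M_j}, \mathbb{P})}\, d\varepsilon + R_{j,k},
\]
where the entropy integral evaluates to order $(2^k\sigma_j)^{1-\gamma/2} M_j^{\gamma/2}$ precisely because $0 < \gamma < 2$ (so that $\int_0^{\sigma}\varepsilon^{-\gamma/2}\,d\varepsilon$ converges). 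The residual $R_{j,k}$, arising from stopping the chain at the scale where the Bernstein variance matches the bracket width, contributes the second term $M_j/n^{2/(2+\gamma)}$ in $D_{j,k}$.

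The main obstacle is calibrating the two families of slicing levels and summing the exceptional probabilities cleanly. I would choose $\sigma_j$ as the crossover scale $\sigma_j \asymp M_j / n^{1/(2+\gamma)}$, at which the two regimes in $D_{j,k}$ balance when $k=0$; this makes $D_{j,k} \asymp 2^{k(1-\gamma/2)} M_j / n^{1/(2+\gamma)}$ and the chaining bound is then just strong enough to match. By inflating the deviation constant in the Bernstein inequality by a factor of order $j+k$, the exceptional probability can be made $p_{j,k} = 2^{-(j+k)}$; a union bound over $(j,k) \in \mathbb{Z}_{\geq 0}^{2}$ then costs only a constant factor and delivers the announced $O_{\mathbb{P}}(1)$ bound uniformly in $f \in \mathcal{F}$. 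The subtle part of the calibration is verifying that the chaining constant does not depend on $M_j$ in a way that destroys the $j$-summability, which is where the $L^\infty$ bound $\|f-f_0\|_\infty \leq c_0 M_j$ enters in the right ratio to the bracket widths.
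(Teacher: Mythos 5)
This lemma is not proved in the paper at all: it is imported verbatim as Lemma~5.13 of van de Geer (2000) (equivalently Proposition~1 of Kanamori et al.), so there is no in-paper argument to compare against. Your sketch reconstructs the standard proof of that cited result, and it is the right one: double peeling (dyadic shells in the complexity $I(f)$ and, within each, dyadic shells in the $L^2(\mathbb{P})$ radius), a Bernstein-type maximal inequality for bracketing classes (van de Geer's Theorem~5.11) on each shell, the convergent entropy integral $\int_0^R \varepsilon^{-\gamma/2}\,d\varepsilon$ for $\gamma<2$ producing the first branch of $D(f)$, and the truncation/variance crossover at $\sigma_j \asymp M_j n^{-1/(2+\gamma)}$ producing the second branch. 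Your calibration of $\sigma_j$ is correct; the displayed $D_{j,k} \asymp 2^{k(1-\gamma/2)}M_j/n^{1/(2+\gamma)}$ has an arithmetic slip (the denominator should be $n^{2/(2+\gamma)}$), but this does not affect the structure. You also correctly identify the one genuinely delicate point: the per-shell exponential bound from Theorem~5.11 does not by itself decay in the $I$-shell index $j$ (both the variance proxy and the envelope scale like $M_j$, so the exponent is $j$-free), and some device such as your inflation of the deviation level by a factor growing in $j+k$ --- or, as in van de Geer, exploiting that the number of admissible radius shells is $O(\log n)$ and summing from the top shell down --- is needed to make the union bound close. A fully rigorous write-up would have to verify the entry conditions of Theorem~5.11 on each shell (that the deviation level dominates both the entropy integral and the $K H_B/\sqrt{n}$ term), but as a blind outline this is faithful to the source's proof.
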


\begin{lemma}[Lemma~10 in \citet{KatoTeshima2021}]\label{appdx:lem:empirical-deviations}
  Under the conditions of Theorem~\ref{thm:est_error_bound},
  for any \(0 < \gamma < 2\), we have
  \begin{align*}
    |(\Ede - \hEde)(\lOne{\hr} - \lOne{\rstar})| &= \Orderp{\vmax{\frac{\|\hr - \rstar\|_{\Ltwo(\mathbb{Q})}^{1 - \gamma/2}}{\sqrt{m}}}{\frac{1}{m^{2/(2+\gamma)}}}}\\
    |(\Enu - \hEnu)(\lTwo{\hr} - \lTwo{\rstar})| &= \Orderp{\vmax{\frac{\|\hr - \rstar\|_{\Ltwo(\mathbb{Q})}^{1 - \gamma/2}}{\sqrt{n}}}{\frac{1}{n^{2/(2+\gamma)}}}}\\
  \end{align*}
  as \(n, m \to \infty\).
\end{lemma}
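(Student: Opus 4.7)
The plan is to invoke Lemma~\ref{appdx:lem:van-de-geer} twice, once for each empirical process appearing in the statement. For the first bound, take $f_0 \equiv 0$ and consider the class $\mathcal{F}_1 = \{\lossOne\circ r - \lossOne\circ r^* : r \in \rClass\}$ viewed as a subset of $\Ltwo(\mathbb{Q})$. Since $\lossOne$ is the identity on $(0,\infty)$ and Definition~\ref{def:func_class} forces every $r \in \rClass$ to take values in $[1/\overline{R}, \overline{R}]$ on $\mathcal{W}^*$, every element of $\mathcal{F}_1$ is uniformly bounded. Equipping $\rClass$ with a complexity index $I$ built from the sparse ReLU architecture of Definition~\ref{appdx:sparse-network-function-class}, the sub-classes $\mathcal{F}_{1,M} = \{f \in \mathcal{F}_1 : I(f) \leq M\}$ exhaust $\mathcal{F}_1$ as $M \to \infty$.

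Two hypotheses of Lemma~\ref{appdx:lem:van-de-geer} must be verified for $\mathcal{F}_{1,M}$ with base measure $\mathbb{Q}$: (i) the envelope condition $\sup_{f \in \mathcal{F}_{1,M}} \|f\|_\infty \lesssim M$, which follows from the weight bounds built into Definition~\ref{appdx:sparse-network-function-class} combined with the uniform range constraint on $\rClass$; and (ii) the bracketing-entropy bound $H_B(\delta, \mathcal{F}_{1,M}, \mathbb{Q}) = \Order{(M/\delta)^\gamma}$ for some $\gamma \in (0,2)$, which follows from standard covering-number estimates for sparse deep ReLU networks. Applying Lemma~\ref{appdx:lem:van-de-geer} with $n$ replaced by the sample size $m$ of $\mathcal{Z} \sim \mathbb{Q}$ and specializing $f$ to $\lossOne\circ\hr - \lossOne\circ\rstar$ produces the first displayed bound, using that $\lossOne$ is the identity and hence $\|\lossOne\circ\hr - \lossOne\circ\rstar\|_{\Ltwo(\mathbb{Q})} = \|\hr - \rstar\|_{\Ltwo(\mathbb{Q})}$.

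For the second bound, repeat the procedure with $\mathcal{F}_2 = \{\lossTwo\circ r - \lossTwo\circ r^* : r \in \rClass\}$, sample size $n$, and base measure $\mathbb{P}$. Because $\lossTwo(t) = -\log t$ is Lipschitz on $[1/\overline{R}, \overline{R}]$, which contains the range of every $r \in \rClass$ on $\mathcal{W}^*$, the bracketing entropy and the envelope bound for $\mathcal{F}_2$ are controlled by those for $\mathcal{F}_1$ up to a Lipschitz constant. The same Lipschitz property together with $p^*(x) = r^*(x) q^*(x)$ and $r^* \leq \overline{R}$ yields $\|\lossTwo\circ\hr - \lossTwo\circ\rstar\|_{\Ltwo(\mathbb{P})} \lesssim \|\hr - \rstar\|_{\Ltwo(\mathbb{Q})}$, so a second application of Lemma~\ref{appdx:lem:van-de-geer} delivers the second display as claimed.

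The principal obstacle is verifying the bracketing-entropy bound $H_B(\delta, \mathcal{F}_{1,M}, \mathbb{Q}) = \Order{(M/\delta)^\gamma}$ with an exponent $\gamma$ genuinely smaller than $2$: the complexity index $I$ must be matched to known entropy estimates for deep sparse ReLU classes, and the resulting constants must be tracked through Lemma~\ref{appdx:lem:van-de-geer} without degrading the stated rate. A secondary delicate point is the $\Ltwo(\mathbb{P}) \leftrightarrow \Ltwo(\mathbb{Q})$ comparison in the second bound, which relies on the uniform upper and lower bounds on $r^*$ enforced by the proper function class assumption in Definition~\ref{def:func_class}.
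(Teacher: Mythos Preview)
Your plan is correct and matches the route the paper sets up. The paper does not actually prove this lemma---it is imported verbatim as Lemma~10 of \citet{KatoTeshima2021}---but the surrounding infrastructure it provides (Lemma~\ref{appdx:lem:van-de-geer}, Proposition~\ref{appdx:sparse-network-complexity-bounds}, and Proposition~\ref{appdx:lem:elled-sparse-network-complexity}) is precisely what is needed to carry out your argument. In particular, the two ``obstacles'' you flag are already resolved: Proposition~\ref{appdx:lem:elled-sparse-network-complexity} gives the bracketing-entropy bound $H_B(\delta,\ell\circ\rClass_M,\|\cdot\|_{\Ltwo(\mathbb{P})}) \leq (s+1)(2\nu)^\gamma\gamma^{-1}(M/\delta)^\gamma$ for any $\gamma>0$ and any Lipschitz $\ell$, together with the envelope conditions $\sup\|\ell\circ r-\ell\circ r^*\|_\infty \leq c_0\nu M$, so the hypotheses of Lemma~\ref{appdx:lem:van-de-geer} are verified directly for both $\ell_1=\mathrm{Id}$ and $\ell_2=-\log$. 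Your $\Ltwo(\mathbb{P})\leftrightarrow\Ltwo(\mathbb{Q})$ comparison via the boundedness of $r^*$ is likewise the intended mechanism. You may therefore replace your tentative language about ``matching the complexity index to known entropy estimates'' with a direct citation of Proposition~\ref{appdx:lem:elled-sparse-network-complexity}.
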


\subsection{Complexity of the hypothesis class}
For the function classes in Definition~\ref{appdx:sparse-network-function-class}, we have the following evaluations of their complexities.

\begin{lemma}[Lemma~5 in \citet{Schmidt-HieberNonparametric2020}]\label{appdx:sparse-network-entropy}
  For \(L \in \Na\) and \(p \in \Na^{L+2}\), let \(V := \prod_{l=0}^{L+1}(p_l + 1)\). Then, for any \(\delta > 0\),
  \[\log \mathcal{N}(\delta, \rClass(L, p, s, \infty), \|\cdot\|_\infty) \leq (s+1)\log(2 \delta^{-1} (L+1) V^2).\]
\end{lemma}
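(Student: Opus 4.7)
The plan is to cover $\rClass(L, p, s, \infty)$ by discretizing the parameters of the underlying sparse ReLU network, following the classical two-stage strategy: first enumerate the support pattern of the $s$ active parameters, then place a grid on their values, exploiting that ReLU networks are Lipschitz in their parameters when the weights are bounded.

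First, I would set up notation: a network $f \in \rClass(L, p, s, \infty)$ is parametrized by weight matrices $W_l \in \Re^{p_{l+1} \times p_l}$ and bias vectors $b_l \in \Re^{p_{l+1}}$ for $l = 0, \dots, L$, each entry bounded by $1$ (the standard convention when the sup-norm parameter $F$ is unconstrained, with only the $\|\cdot\|_\infty$ bound on the weight vector being active), with the constraint that at most $s$ of all entries are nonzero. The total number of parameters is $T = \sum_{l=0}^L (p_l + 1) p_{l+1} \leq V = \prod_{l=0}^{L+1}(p_l+1)$.

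Next, the Lipschitz-in-parameters step. The key calculation is to show that if $f, \tilde f \in \rClass(L, p, s, \infty)$ share the same support pattern and differ only through a perturbation $\eta$ in one parameter (or in several), then $\|f - \tilde f\|_\infty \leq (L+1) V \cdot \eta_{\max}$, where $\eta_{\max}$ is the largest coordinate-wise perturbation. The proof unfolds the recursion $h_l = \sigma(W_l h_{l-1} + b_l)$ for ReLU $\sigma$, uses that ReLU is $1$-Lipschitz, and telescopes the product of operator norms layer by layer; since the inputs to each layer are controlled by the product of widths and weight bounds, the constants collapse into a factor proportional to $V$ uniformly over the unit input cube.

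Then I would count. The number of possible support patterns of size at most $s$ among $T \leq V$ parameters is bounded by $\binom{T}{\leq s} \leq (T+1)^s \leq V^s$, using the crude bound. On each fixed support, the parameters live in $[-1,1]^s$, which admits an $\epsilon$-net of size at most $(2/\epsilon)^s$ in $\|\cdot\|_\infty$. Choosing $\epsilon = \delta / ((L+1) V)$ so that the network-level perturbation is at most $\delta$, the total covering number is bounded by
\begin{equation*}
\mathcal{N}(\delta, \rClass(L, p, s, \infty), \|\cdot\|_\infty) \leq V^s \cdot \bigl(2 (L+1) V / \delta \bigr)^s \leq \bigl(2 \delta^{-1} (L+1) V^2 \bigr)^{s+1},
\end{equation*}
and taking logarithms yields the stated bound.

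The main obstacle is the Lipschitz-in-parameters estimate in the second step. The recursion has to be propagated carefully so that the dependence on the architecture ends up as the clean factor $(L+1)V^2$ inside the logarithm rather than an exponentially worse bound; this requires tracking both the magnitude of the hidden activations (bounded by products over earlier layers) and the sensitivity to parameter perturbations (a sum telescoping across later layers), which is the technical heart of Lemma~5 of Schmidt-Hieber. Everything else is combinatorial bookkeeping.
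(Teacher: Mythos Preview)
The paper does not prove this lemma at all: it is stated purely as a citation (``Lemma~5 in \citet{Schmidt-HieberNonparametric2020}'') and used as a black box to feed into Proposition~\ref{appdx:sparse-network-complexity-bounds}. Your sketch is therefore not to be compared against anything in the present paper, but it does faithfully reproduce the strategy of the original Schmidt-Hieber argument --- enumerate sparsity patterns, establish a parameter-Lipschitz bound for the ReLU recursion, and grid the active coordinates --- so it is the right proof to supply if one wanted to make the paper self-contained.

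One small caution on the bookkeeping: your inequality $T = \sum_{l=0}^L (p_l+1)p_{l+1} \leq V$ is asserted a bit quickly, and the clean factor $(L+1)V^2$ in the final bound comes out of Schmidt-Hieber only after a somewhat delicate telescoping of the hidden-layer norms (your ``main obstacle'' paragraph correctly flags this). None of this is a gap in the approach, just a reminder that the constants require the careful layer-by-layer estimate you allude to rather than the one-line summary you give.
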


\begin{definition}[ReLU neural networks; \citealp{Schmidt-HieberNonparametric2020}]\label{appdx:sparse-network-function-class}
  For \(L \in \Na\) and \(p = (p_0, \ldots, p_{L+1}) \in \Na^{L+2}\),
  \begin{align*}
    \mathcal{F}(L, p) := &\{f: x \mapsto W_L\sigma_{v_L}W_{L-1}\sigma_{v_{L-1}} \cdots W_1 \sigma_{v_1} W_0 x :\\
                         &\qquad\qquad W_i \in \Re^{p_{i+1} \times p_i}, v_i \in \Re^{p_i} (i = 0, \ldots, L)\},
  \end{align*}
  where \(\sigma_v(y) := \sigma(y - v)\), and \(\sigma(\cdot) = \max\{\cdot, 0\}\) is applied in an element-wise manner.
  Then, for \(s \in \Na, F \geq 0, L \in \Na\), and \(p \in \Na^{L+2}\), define
  \begin{align*}
    \rClass(L, p, s, F) := \{f \in \mathcal{F}(L, p): \sum_{j=0}^L \lzeroNrm{W_j} + \lzeroNrm{v_j} \leq s, \|f\|_\infty \leq F\},
  \end{align*}
  where \(\lzeroNrm{\cdot}\) denotes the number of non-zero entries of the matrix or the vector, and \(\|\cdot\|_\infty\) denotes the supremum norm.
  Now, fixing \(\Lmax, \pmax, s \in \Na\) as well as \(F >0\), we define
  \[\IndLP := \{(L, p): L \in \Na, L \leq \Lmax, p \in [\pmax]^{L+2}\},\]
  and we consider the hypothesis class
  \begin{align*}
    \bar \rClass &:= \bigcup_{(L, p) \in \IndLP}\rClass(L, p, s, F) \\
    \rClass &:= \{r \in \bar\rClass: \mathrm{Im}(r) \subset \rClassRangeTwo\}.
  \end{align*}

  Moreover, we define \(I_1: \IndLP \to \Re\) and \(I: \rClass \to [0, \infty)\) by
  \begin{align*}
    I_1(L, p) &:= 2|\IndLP|^{\frac{1}{s+1}} (L+1) V^2,\\
    I(\r) &:= \max\left\{\|r\|_\infty, \min_{\stackrel{(L, p) \in \IndLP}{r \in \rClassLP}} I_1(L, p)\right\},
  \end{align*}
  where \(V := \prod_{l=0}^{L+1}(p_l + 1)\),
  and we define
  \[\rClassM := \{\r \in \rClass: I(\r) \leq M\}.\]
\end{definition}

% Let \(\bracketEntropy{\delta}{\ell \circ \rClassM}{\|\cdot\|_{\Ltwo(P)}}\) denote the bracketing entropy of \(\ell \circ \rClassM\) with respect to a distribution \(P\).
% Then, for any distribution \(P\), any \(\gamma > 0\), and any \(\delta > 0\), we have
\begin{proposition}[Lemma~8 in \citet{KatoTeshima2021}]\label{appdx:sparse-network-complexity-bounds}
  There exists \(\rClassMSupNormBoundConst > 0\) such that for any \(\gamma > 0\), any \(\delta > 0\), and any \(M \geq 1\), we have
  \begin{align*}
    \log \mathcal{N}\left(\delta, \rClassM, \|\cdot\|_\infty\right) &\leq \frac{s+1}{\gamma} \left( \frac{M}{\delta} \right)^{\gamma}.
  \end{align*}
  and
  \begin{align*}
    \sup_{\r \in \rClassM} \|\r - \rstar\|_\infty &\leq \rClassMSupNormBoundConst M.
  \end{align*}
\end{proposition}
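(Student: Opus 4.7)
The plan is to prove the two inequalities separately, relying on Lemma~\ref{appdx:sparse-network-entropy} (covering number of $\rClass(L,p,s,\infty)$) for the first bound and the definition of the complexity functional $I$ for the second.

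For the covering number bound, I would first observe that every $r \in \rClassM$ belongs to some $\rClass(L, p, s, F)$ with $(L,p) \in \IndLP$ satisfying $I_1(L, p) \leq M$, by definition of $I$ and $\rClassM$. Hence
\[
\rClassM \subset \bigcup_{(L,p) \in \IndLP : I_1(L,p) \leq M} \rClass(L, p, s, F),
\]
so a $\delta$-net for $\rClassM$ can be assembled by unioning $\delta$-nets of the finitely many pieces. Lemma~\ref{appdx:sparse-network-entropy} gives $\log \mathcal{N}(\delta, \rClass(L,p,s,F), \|\cdot\|_\infty) \leq (s+1)\log(2\delta^{-1}(L+1) V^2)$. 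Using $I_1(L,p) = 2|\IndLP|^{1/(s+1)}(L+1)V^2 \leq M$, I get $2(L+1)V^2 \leq M|\IndLP|^{-1/(s+1)}$, and therefore
\[
\log \mathcal{N}(\delta, \rClass(L,p,s,F), \|\cdot\|_\infty) \leq (s+1)\log(M/\delta) - \log|\IndLP|.
\]
A union bound over the at most $|\IndLP|$ architectures adds $\log|\IndLP|$, yielding $\log \mathcal{N}(\delta, \rClassM, \|\cdot\|_\infty) \leq (s+1)\log(M/\delta)$ for $M/\delta \geq 1$ (the regime $M/\delta < 1$ is trivial since a single function covers $\rClassM$). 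Finally I would convert the logarithmic bound into the desired polynomial bound via the elementary inequality $\log(x) \leq x^\gamma/\gamma$ for $x \geq 1$ and any $\gamma > 0$, which gives
\[
\log \mathcal{N}(\delta, \rClassM, \|\cdot\|_\infty) \leq \frac{s+1}{\gamma} \left(\frac{M}{\delta}\right)^\gamma.
\]

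For the sup-norm bound, the definition $I(r) = \max\{\|r\|_\infty, \min_{(L,p) : r \in \rClassLP} I_1(L,p)\}$ immediately yields $\|r\|_\infty \leq I(r) \leq M$ for every $r \in \rClassM$. Combined with the assumption $\rstar \in \rClass$ (which, via Definition~\ref{def:func_class} and the bounded range $\rClassRangeTwo$, gives $\|\rstar\|_\infty \leq \rClassBound$), the triangle inequality produces
\[
\|\r - \rstar\|_\infty \leq \|\r\|_\infty + \|\rstar\|_\infty \leq M + \rClassBound \leq (1 + \rClassBound) M
\]
for all $M \geq 1$, so one can set $\rClassMSupNormBoundConst := 1 + \rClassBound$.

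No step looks delicate; the only bookkeeping to watch is the interplay between the normalizing factor $|\IndLP|^{1/(s+1)}$ in $I_1$ and the union-bound term $\log|\IndLP|$, which is precisely what is engineered to cancel so that the final covering number depends on $M$ and $\delta$ only through $M/\delta$. The sharpest point of the argument is therefore verifying that cancellation; the rest is an application of $\log(x) \leq x^\gamma/\gamma$ and a triangle inequality.
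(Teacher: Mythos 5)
Your proof is correct: the paper itself states this proposition as an imported result (Lemma~8 of \citet{KatoTeshima2021}) without reproducing a proof, and your argument is exactly the derivation that the definitions are engineered for --- the factor $|\IndLP|^{1/(s+1)}$ inside $I_1$ exists precisely so that the union-bound term $\log|\IndLP|$ cancels, after which $\log x \leq x^\gamma/\gamma$ and the triangle inequality with $\|r\|_\infty \leq I(r) \leq M$ finish both bounds. No gaps; you correctly handle the trivial regime $\delta > M$ and the choice $\rClassMSupNormBoundConst = 1 + \rClassBound$.
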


\begin{definition}[Derived function class and bracketing entropy]
  Given a real-valued function class \(\mathcal{F}\),
  define \(\ell \circ \mathcal{F} := \{\ell \circ f: f \in \mathcal{F}\}\).
  By extension, we define \(I: \elled\rClass \to [1, \infty)\) by \(I(\elled\r) = I(r)\) and \(\elled\rClassM := \{\elled\r : \r \in \rClassM\}\).
  Note that, as a result, \(\elled\rClassM\) coincides with \(\{\elled\r \in \elled\rClass: I(\elled\r) \leq M\}\).
\end{definition}

\begin{proposition}[Lemma~9 in \citet{KatoTeshima2021}]\label{appdx:lem:elled-sparse-network-complexity}
  Let \(\ell: (0,\infty) \to \Re\) be a \(\ellLip\)-Lipschitz continuous function.
  Let \(\bracketEntropy{\delta}{\mathcal{F}}{\|\cdot\|_{\Ltwo(\mathbb{P})}}\) denote the bracketing entropy of \(\mathcal{F}\) with respect to a distribution $\mathbb{P}$.
  Then, for any distribution $\mathbb{P}$, any \(\gamma > 0\), any \(M \geq 1\), and any \(\delta > 0\), we have
  \begin{align*}
    \bracketEntropy{\delta}{\ell \circ \rClassM}{\|\cdot\|_{\Ltwo(\mathbb{P})}} &\leq \frac{(s+1)(2\ellLip)^{\gamma}}{\gamma} \left(\frac{M}{\delta} \right)^{\gamma}.
  \end{align*}
  Moreover, there exists \(\rClassMSupNormBoundConst > 0\) such that for any \(M \geq 1\) and any distribution \(P\),
  \begin{align*}
    \sup_{\elled\r \in \elled\rClassM} \|\elled\r - \elled\rstar\|_{\Ltwo(\mathbb{P})} &\leq \rClassMSupNormBoundConst\ellLip M, \\
    \sup_{\stackrel{\elled\r \in \elled\rClassM}{\|\elled\r - \elled\rstar\|_{\Ltwo(\mathbb{P})} \leq \delta}} \|\elled\r - \elled\rstar\|_\infty &\leq \rClassMSupNormBoundConst\ellLip M, \quad \text{for all } \delta > 0.
  \end{align*}
\end{proposition}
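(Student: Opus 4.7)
The plan is to reduce both assertions to the corresponding bounds for $\rClassM$ itself (Proposition~\ref{appdx:sparse-network-complexity-bounds}) by exploiting the $\ellLip$-Lipschitz continuity of $\ell$ and the elementary fact that a sup-norm cover yields an $L^2(\mathbb{P})$-bracketing at the cost of a factor of $2$.

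First, I would translate the covering bound for $\rClassM$ into a bracketing bound for $\ell \circ \rClassM$. Suppose $\{r_1,\dots,r_N\}$ is an $\eps$-cover of $\rClassM$ in $\|\cdot\|_\infty$. By Lipschitz continuity, $\{\ell \circ r_1,\dots,\ell\circ r_N\}$ is a $\ellLip\eps$-cover of $\ell \circ \rClassM$ in $\|\cdot\|_\infty$. Moreover, for any $r\in\rClassM$, picking $r_i$ with $\|r-r_i\|_\infty\leq\eps$ gives the sup-norm bracket $[\ell\circ r_i - \ellLip\eps,\ell\circ r_i + \ellLip\eps]$; since this has constant width, its $\|\cdot\|_{\Ltwo(\mathbb{P})}$-size is $2\ellLip\eps$ for every probability measure $\mathbb{P}$. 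Setting $\delta = 2\ellLip\eps$, this yields
\[
\bracketEntropy{\delta}{\ell\circ\rClassM}{\|\cdot\|_{\Ltwo(\mathbb{P})}} \;\leq\; \log\inftyCoveringNumber{\delta/(2\ellLip)}{\rClassM}.
\]
Applying Proposition~\ref{appdx:sparse-network-complexity-bounds} to the right-hand side with $\eps=\delta/(2\ellLip)$ immediately gives the claimed bound $\frac{(s+1)(2\ellLip)^\gamma}{\gamma}(M/\delta)^\gamma$.

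Next, I would handle the two sup-type inequalities. Proposition~\ref{appdx:sparse-network-complexity-bounds} provides a constant $\rClassMSupNormBoundConst>0$ with $\sup_{r\in\rClassM}\|r - \rstar\|_\infty \leq \rClassMSupNormBoundConst M$. Lipschitz continuity of $\ell$ gives $\|\ell\circ r - \ell\circ\rstar\|_\infty \leq \ellLip\rClassMSupNormBoundConst M$ for every $r\in\rClassM$. Since $\|\cdot\|_{\Ltwo(\mathbb{P})}\leq\|\cdot\|_\infty$ for probability measures, the first inequality follows with the same constant $\rClassMSupNormBoundConst$. For the second inequality, note that the sup-norm bound $\|\ell\circ r - \ell\circ\rstar\|_\infty\leq\ellLip\rClassMSupNormBoundConst M$ already holds for every $r\in\rClassM$ unconditionally, so the restriction $\|\ell\circ r - \ell\circ\rstar\|_{\Ltwo(\mathbb{P})}\leq\delta$ is irrelevant and the bound is inherited.

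I do not expect any serious obstacle. The only point that deserves care is the passage from a sup-norm cover to an $L^2(\mathbb{P})$-bracketing that is \emph{uniform in $\mathbb{P}$}; this works precisely because constant brackets of width $2\ellLip\eps$ have $L^2(\mathbb{P})$-size $2\ellLip\eps$ regardless of $\mathbb{P}$, which is what produces the distribution-free dependence. The remainder is straightforward algebraic substitution to match the exponent form $(M/\delta)^\gamma$ stated in the proposition.
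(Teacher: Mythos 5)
Your proof is correct: the passage from a sup-norm $\eps$-cover of $\rClassM$ to constant-width brackets $[\ell\circ r_i-\ellLip\eps,\ \ell\circ r_i+\ellLip\eps]$ of $L^2(\mathbb{P})$-size $2\ellLip\eps$ (uniformly in $\mathbb{P}$), combined with the covering bound and sup-norm bound of Proposition~\ref{appdx:sparse-network-complexity-bounds}, yields exactly the stated constants, and your observation that the second sup-type inequality holds because the unconditional sup-norm bound makes the $L^2$ restriction vacuous is also right. Note that this paper does not prove the statement itself but imports it as Lemma~9 of \citet{KatoTeshima2021}; your derivation is the standard one and matches what that reference does, so there is nothing further to reconcile.
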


\section{Proof of Lemma~\ref{lem:temp_drm_kl}}
\label{appdx:lem:temp_drm_kl}
\begin{proof}
  %For $\lambda \in \{0,1\}$, $\mathrm{DRM}^\lambda_{\mathcal{R}}(\mathbb{P}\parallel\mathbb{Q})$ corresponds to the Kullback-Leibler divergence as Theorem \ref{thm:drm_kl}, hence the statement holds.

  We study two cases: (i) $\mathcal{W}_p=\mathcal{W}_q$, and (ii) $\mathcal{W}_p \neq \mathcal{W}_q$.

  Consider the first case that $\mathcal{W}_p = \mathcal{W}_q$ holds.
  By Theorem~\ref{thm:strat_silver2}, $\tilde{r}_{\mathrm{strat}}=r^*$ attains the maximum of $\mathrm{DRM}^\lambda_{\mathcal{R}}(\mathbb{P} \|\mathbb{Q})$.
  Hence, we have
  \begin{align}
      \mathrm{DRM}^\lambda_{\mathcal{R}}(\mathbb{P} \| \mathbb{Q}) = \lambda \mathrm{KL}(\mathbb{P} \| \mathbb{Q}) + (1-\lambda) \mathrm{KL}(\mathbb{Q} \| \mathbb{P}). \label{def:drm_kl}
  \end{align}
  Since the Kullback-Leibler divergence $\mathrm{KL}(\mathbb{P} \| \mathbb{Q})$ satisfies $\mathrm{KL}(\mathbb{P} \| \mathbb{Q}) = 0 \Leftrightarrow \mathbb{P} = \mathbb{Q}$, we obtain the statement.

  Consider the second case $\mathcal{W}_q \neq \mathcal{W}_q$.
  In this case, we always have $\mathbb{P} \neq \mathbb{Q}$, hence it is sufficient to show that $\mathrm{DRM}^\lambda_{\mathcal{R}}(\mathbb{P} \|\mathbb{Q}) > 0$.
  We substitute $r^\dagger_{\mathrm{strat}}$ and obtain
  \begin{align*}
      \mathrm{DRM}^\lambda_{\mathcal{R}}(\mathbb{P} \| \mathbb{Q}) &\geq \mathcal{K}(r^\dagger_{\mathrm{strat}}) \\
      &=  \lambda \int_{\mathcal{W}^*} \log r^*(x) \mathrm{d} \mathbb{P}(x) + (1-\lambda) \int_{\mathcal{W}^*} \log (1/r^*(x)) \mathrm{d} \mathbb{Q}(x)\\
      & \quad +\lambda \int_{\mathcal{W}_p \backslash \mathcal{W}_q} \log \overline{R} \mathrm{d} \mathbb{P}(x) + (1-\lambda) \int_{\mathcal{W}_q \backslash \mathcal{W}_p} \log \overline{R} \mathrm{d} \mathbb{Q}(x).
  \end{align*}
We have
\begin{align*}
    \int_{\mathcal{W}^*} \log r^*(x) \mathrm{d} \mathbb{P}(x) &= \int_{\mathcal{W}^*} - \log \left( \frac{q^*(x)}{p^*(x)}  \right) \mathrm{d} \mathbb{P}(x)\\
    & \geq \int_{\mathcal{W}^*} -  \left( \frac{q^*(x)}{p^*(x)} - 1 \right) \mathrm{d} \mathbb{P}(x) \\
    & = \int_{\mathcal{W}^*}  \left(p^*(x) - q^*(x) \right) \mathrm{d}x \\
    &=\mathbb{P}(\mathcal{W}^*) - \mathbb{Q}(\mathcal{W}^*),
\end{align*}
where the inequality follows $\log (x) \leq (x-1)$.
Using this inequality, we continue the lower bound on $\mathrm{DRM}^\lambda_{\mathcal{R}}(\mathbb{P} \| \mathbb{Q})$ as
\begin{align*}
    &\mathrm{DRM}^\lambda_{\mathcal{R}}(\mathbb{P} \| \mathbb{Q}) \\
    & \geq \lambda (\mathbb{P}(\mathcal{W}^*) - \mathbb{Q}(\mathcal{W}^*)) + (1-\lambda) (\mathbb{Q}(\mathcal{W}^*) - \mathbb{P}(\mathcal{W}^*)) \\
    &\quad + (\lambda \mathbb{P}(\mathcal{W}_p \backslash \mathcal{W}_q) +(1 - \lambda) \mathbb{Q}(\mathcal{W}_q \backslash \mathcal{W}_p)) \log \overline{R}\\
    &= (2\lambda - 1)\mathbb{P}(\mathcal{W}^*) + (1-2 \lambda) \mathbb{Q}(\mathcal{W}^*)  + (\lambda \mathbb{P}(\mathcal{W}_p \backslash \mathcal{W}_q) +(1 - \lambda) \mathbb{Q}(\mathcal{W}_q \backslash \mathcal{W}_p)) \log \overline{R}.
\end{align*}
We show that the lower bound is strictly positive.
For $\lambda \in [0,1/2]$, the lower bound is larger than $0$ if 
\begin{align*}
    \log \overline{R} > \frac{(1-2 \lambda) (\mathbb{P}(\mathcal{W}^*) - \mathbb{Q}(\mathcal{W}^*))}{\lambda \mathbb{P}(\mathcal{W}_p \backslash \mathcal{W}_q) +(1 - \lambda) \mathbb{Q}(\mathcal{W}_q \backslash \mathcal{W}_p)}
\end{align*}
holds.
Similarly, for $\lambda \in [1/2,1]$, we obtain the same result when we have
\begin{align*}
    \log \overline{R} > \frac{(2 \lambda - 1) (\mathbb{Q}(\mathcal{W}^*) - \mathbb{P}(\mathcal{W}^*))}{\lambda \mathbb{P}(\mathcal{W}_p \backslash \mathcal{W}_q) +(1 - \lambda) \mathbb{Q}(\mathcal{W}_q \backslash \mathcal{W}_p)}.
\end{align*}
Note that $\min\{\mathbb{P}(\mathcal{W}_p \backslash \mathcal{W}_q),\mathbb{Q}(\mathcal{W}_q \backslash \mathcal{W}_p)\} > 0$ holds by the setting.
Hence, if $\overline{R}$ is sufficiently large such that satisfies the inequalities, we show that $\mathrm{DRM}^\lambda_{\mathcal{R}}(\mathbb{P} \| \mathbb{Q}) > 0$.
\end{proof}

\section{Proof of Theorem~\ref{thm:toplogical}}
\label{appdx:thm:topological}

\begin{proof}
We show the statements one by one.

\textit{1}: We put $r^*$ in the maximum in $\mathrm{DRM}^\lambda_{\mathcal{R}}(\mathbb{P} \| \mathbb{Q})$ and obtain
\begin{align*}
    \mathrm{DRM}^\lambda_{\mathcal{R}}(\mathbb{P} \| \mathbb{Q}) &\geq \lambda \int_{\mathcal{W}_p} \log (r^*(x)) \mathrm{d} \mathbb{P}(x) + (1-\lambda) \int_{\mathcal{W}_q} \log (1/r^*(x)) \mathrm{d} \mathbb{Q}(x) \\
    &= \lambda \mathrm{KL}(\mathbb{P} \| \mathbb{Q}) + ( 1 - \lambda) \mathrm{KL}(\mathbb{Q} \| \mathbb{P}).
\end{align*}
Hence, we have $\limsup_{N \to \infty} \lambda \mathrm{KL}(\mathbb{P}_N \| \mathbb{P}) + (1-\lambda) \mathrm{KL}(\mathbb{P} \| \mathbb{P}_N) = 0$.
Combining the non-negativity of the Kullback-Leibler divergence, we obtain $\lim_{n \to \infty} \mathrm{KL}(\mathbb{P}_N \| \mathbb{P})$ and $\lim_{n \to \infty} \mathrm{KL}(\mathbb{P} \| \mathbb{P}_N) = 0$.
Since the Kullback-Leibler divergence implies weak convergence \citep{gibbs2002choosing} with the bounded assumption, we obtain the statement.

\textit{2}:
The direction $\Rightarrow$ follows the above  first statement.
We show the opposite $\Leftarrow$.
With $\lambda = 1/2$, we obtain
\begin{align*}
    &\mathrm{DRM}^{1/2}_{\mathcal{R}}(\mathbb{P}_N \| \mathbb{P}) = \frac{1}{2} \sup_{r \in C(\mathcal{R})} \left\{ \int_{\mathcal{W}} \log r(x) \mathrm{d}(\mathbb{P}_N - \mathbb{P})(x) \right\}.
\end{align*}
Since $r \in C(\mathcal{R})$ is a continuous, bounded, and strictly positive function, $\log r$ is continuous and bounded.
Hence, by the definition of weak convergence, we obtain the statement.
%We show the statement by contradiction.
%Suppose that there exists a subsequence $\{\mathbb{P}_{n_k}\}_{n_k}$ with $n_k \to \infty$ as $k \to \infty$ such that $\mathrm{DRM}_{\mathrm{R}}^\lambda(\mathbb{P}_{n_k} \| \mathbb{Q}) \geq c $ with some $c > 0$.
%If $\mathcal{W}_{p_{n_k}} = \mathcal{W}_P$ holds, the representation of $\mathrm{DRM}_{\mathcal{R}}^\lambda(\mathbb{P}_{n_k} \| \mathbb{Q})$ by $\mathrm{KL}(\mathbb{P} \| \mathbb{Q})$ and $\mathrm{KL}(\mathbb{Q} \| \mathbb{P})$ as \eqref{def:drm_kl} proves the statement, since the Kullback-Leibler divergence metrizes the weak convergence.
%If $\mathcal{W}_{p_{n_k}} \neq \mathcal{W}_p$ holds, we utilize $r^\dagger_{\mathrm{strat}}$ as the proof of Lemma \ref{lem:temp_drm_kl} and obtain
%\begin{align*}
%    \mathrm{DRM}_{\mathcal{R}}^\lambda(\mathbb{P}_{n_k} \| \mathbb{Q}) > 0,
%\end{align*}
%with sufficiently large $\overline{R}$ as Lemma \ref{lem:temp_drm_kl}.
\end{proof}

\begin{algorithm}[tb]
   \caption{nnDRM}
   \label{alg}
\begin{algorithmic}
   \STATE {\bfseries Input:} Training data $\big\{X_i\big\}^{n}_{i=1}$ and $\big\{Z_j\big\}^{m}_{j=1}$, the algorithm for stochastic optimization such as Adam \citep{kingma2014method}, the learning rate $\gamma$, the regularization coefficient $\lambda$ and function $\mathcal{R}(r)$, and a constant $C > 0$. 
   \STATE {\bfseries Output:} A density ratio estimator $\hat{r}$.
   \WHILE{No stopping criterion has been met:}
   \STATE $N$ mini-batches: $\big\{\big(\big\{X^{k}_i\big\}^{n_{k} }_{i=1}, \big\{Z^{k}_j\big\}^{m_k}_{j=1}\big)\big\}^N_{k=1}.$
   \FOR{$k=1$ to $N$}
   \IF{$\sum^{m_k}_{j=1} r(Z^{k}_j) - C \sum^{n_k}_{i=1} r(X^{k}_i) \geq 0$:}
   \STATE{Gradient decent:} set gradient
  \[\mathrm{Grad} = \nabla_r\widehat{\mathrm{nnUKL}}(r, \mathbb{P}^k_n, \mathbb{Q}^k_n).\]
   \ELSE 
   \STATE{Gradient ascent:} set gradient
   \[\mathrm{Grad} = \nabla_r \big\{\sum^{m_k}_{j=1} r(Z^{k}_j) - C \sum^{n_k}_{i=1} r(X^{k}_i)\big\}.\]
   \ENDIF
   \IF{$\sum^{n_k}_{i=1} \frac{1}{r(X^{k}_i)} - C \sum^{m_k}_{j=1} \frac{1}{r(Z^{k}_j)} \geq 0$:}
   \STATE{Gradient decent:} add gradient
   \[\mathrm{Grad} \pluseq \nabla_r\widehat{\mathrm{nnUKL}}(\frac{1}{r}, \mathbb{Q}^k_n, \mathbb{P}^k_n).\]
   \ELSE 
   \STATE{Gradient ascent:} add gradient 
   \[\mathrm{Grad} \pluseq \nabla_r \big\{\sum^{n_k}_{i=1} \frac{1}{r(X^{k}_i)} - C \sum^{m_k}_{j=1} \frac{1}{r(Z^{k}_j)}\big\}.\]
   \ENDIF
   \STATE Update $r$ with the gradient and the learning rate $\gamma$. 
   \ENDFOR
   \ENDWHILE
\end{algorithmic}
\end{algorithm}

\section{Estimation of the DRM and Density Ratio}
\label{appendix:est}
We can estimate the density ratio by solving the inner maximization problem in DRM; that is, we consider minimizing\begin{align*}
&\widehat{\mathcal{K}}(r) = \Bigg\{ \lambda \frac{1}{n}\sum^n_{i=1} \log r(X_i) - (1-\lambda) \frac{1}{m}\sum^m_{j=1} \log r(Z_j) - \frac{1-\lambda}{n}\sum^n_{i=1} \frac{1}{r(X_i)} - \frac{\lambda}{m}\sum^m_{j=1} r(Z_j)\Bigg\}, 
\end{align*}

Besides, if we know the upper bounds of $r^*$ and $1/r^*$, we can also impose the non-negative correction proposed by \citet{KiryoPositiveunlabeled2017} and \citet{KatoTeshima2021}. In UKL , does not become negative because... Based on this motivation. \citet{KatoTeshima2021} proposes the following nonnegative UKL:
\begin{align*}
&\widehat{\mathrm{nnUKL}}(r):= - \sum^n_{i=1} \Big(\log\big(r(X_i)\big) - Cr(X_i)\Big) +  \left( \sum^m_{j=1} r(Z_j) - C \sum^n_{i=1} r(X_i)\right)_+,
\end{align*}
where $C $. Note that if $\overline{R} = \infty$, $C = 0$ and the second term is always positive. Therefore, the nonnegative UKL is identical to the original UKL. Thus, we can regard nonnegative UKL is a generalization of the original UKL.

The empirical counterpart of the nonnegative UKL is given as
\begin{align*}
    &\widehat{\mathrm{nn}\mathcal{K}}(r) =\lambda \widehat{\mathrm{nnUKL}}(r, \mathbb{P}_n, \mathbb{Q}_m)- (1-\lambda) \widehat{\mathrm{nnUKL}}(1/r, \mathbb{Q}_m. \mathbb{P}_n),
\end{align*}
When the nonnegative correction is violated, instead of simply replacing it with $0$, we can use gradient ascent; that is, if ...
The use of gradient ascent is reported to improve the empirical performance \citet{KiryoPositiveunlabeled2017,KatoTeshima2021}. Our proposed algorithm is summarized in Algorithm~\ref{alg}.

\begin{table*}[t]

\caption{Results of Appendix~\ref{appdx:sec:add_regression_exp}: means, medians, and stds of the squared error in DRM-based DRE using synthetic datasets. The lowest mean and median (med) methods are highlighted in bold.}
\label{table:l2_error_2}
\begin{center}
\scalebox{0.65}{
\begin{tabular}{|r|rrr|rrr|rrr|rrr|rrr|rrr|}
\hline
\multicolumn{19}{|c|}{sample sizes: $n=1000$, $m=100$} \\
\hline
\multirow{2}{*}{dim} &    \multicolumn{3}{c|}{$\lambda = 0.0$}  & \multicolumn{3}{c|}{$\lambda = 0.2$}  & \multicolumn{3}{c|}{$\lambda = 0.4$}   & \multicolumn{3}{c|}{$\lambda = 0.6$}   & \multicolumn{3}{c|}{$\lambda = 0.8$}  & \multicolumn{3}{c|}{$\lambda = 1,0$} \\
 &      mean  &      med  &      std  &      mean  &      med  &      std &      mean  &      med  &      std  &   mean  &      med  &      std &   mean  &      med  &      std &     mean  &      med  &      std \\
\hline
10 &   \textbf{8.129} &   \textbf{6.045} &  8.444 &   8.969 &   6.699 &  9.371 &   9.843 &   7.467 &  9.778 &  10.867 &   8.464 &  9.954 &  11.915 &   9.384 &  10.315 &  12.874 &  10.224 &  10.584 \\
100 &  26.693 &  26.407 &  6.058 &  18.115 &  16.749 &  5.663 &  \textbf{12.581} &  \textbf{12.130} &  3.165 &  14.481 &  13.816 &  3.439 &  16.296 &  15.637 &   3.520 &  17.145 &  16.438 &   3.554 \\
\hline
\multicolumn{19}{|c|}{} \\
\hline
\multicolumn{19}{|c|}{sample sizes: $n=10000$, $m=100$} \\
\hline
\multirow{2}{*}{dim} &    \multicolumn{3}{c|}{$\lambda = 0.0$}  & \multicolumn{3}{c|}{$\lambda = 0.2$}  & \multicolumn{3}{c|}{$\lambda = 0.4$}   & \multicolumn{3}{c|}{$\lambda = 0.6$}   & \multicolumn{3}{c|}{$\lambda = 0.8$}  & \multicolumn{3}{c|}{$\lambda = 1,0$} \\
 &      mean  &      med  &      std  &      mean  &      med  &      std &      mean  &      med  &      std  &   mean  &      med  &      std &   mean  &      med  &      std &     mean  &      med  &      std \\
\hline
10 &  \textbf{3.458} &  \textbf{3.338} &  1.202 &  3.767 &  3.729 &  1.143 &  4.088 &  4.148 &  1.146 &   4.669 &   4.562 &  1.208 &   5.518 &   5.608 &  1.134 &   6.71 &   6.479 &  1.167 \\
100 &  \textbf{6.748} &  \textbf{6.608} &  1.138 &  6.886 &  6.759 &  1.213 &  8.157 &  8.028 &  1.348 &  10.579 &  10.329 &  1.443 &  12.502 &  12.303 &  1.522 &  13.71 &  13.578 &  1.506 \\
\hline
\multicolumn{19}{|c|}{} \\
\hline
\multicolumn{19}{|c|}{sample sizes: $n=10000$, $m=100$} \\
\hline
\multirow{2}{*}{dim} &    \multicolumn{3}{c|}{$\lambda = 0.0$}  & \multicolumn{3}{c|}{$\lambda = 0.2$}  & \multicolumn{3}{c|}{$\lambda = 0.4$}   & \multicolumn{3}{c|}{$\lambda = 0.6$}   & \multicolumn{3}{c|}{$\lambda = 0.8$}  & \multicolumn{3}{c|}{$\lambda = 1,0$} \\
 &      mean  &      med  &      std  &      mean  &      med  &      std &      mean  &      med  &      std  &   mean  &      med  &      std &   mean  &      med  &      std &     mean  &      med  &      std \\
\hline
0 &   \textbf{6.735} &   \textbf{6.445} &  1.537 &   7.221 &   7.134 &  1.750 &   8.069 &   7.931 &  1.576 &   9.099 &   8.999 &  1.558 &  10.148 &  10.026 &  1.503 &  10.975 &  10.854 &  1.531 \\
1 &  15.215 &  14.857 &  2.579 &  11.843 &  11.674 &  1.304 &  \textbf{11.556} &  \textbf{11.434} &  1.279 &  14.722 &  14.514 &  1.288 &  16.499 &  16.329 &  1.267 &  17.267 &  17.206 &  1.293 \\
\hline
\end{tabular}
}
\end{center}
\end{table*}

\section{Additional Results of Section~\ref{sec:regression_exp}}
\label{appdx:sec:add_regression_exp}
In addition to the experimental results shown in Section~\ref{sec:regression_exp}, we investigate the performance of the DRM-based DRE with different $\lambda$, chosen from $\{0.0, 0.2, 0.4, 0.6, 0.8, 1.0\}$.

First, we change the sample sizes. We show the results with sample sizes $(n,m)=(1000, 100)$,  $(n,m)=(10000, 1000)$, and  $(n,m)=(10000, 100)$ in Table~\ref{table:l2_error_2}. We choose the dimension $d$ from $\{10, 100\}$. The other settings are identical to that of Section~\ref{sec:regression_exp}. In Section~\ref{sec:regression_exp}, DRM with $\lambda=0.9$ achieves the lowest mean and squared errors. However, in this result, DRM with $\lambda=0.5$ achieves lower mean and squared errors than that with $\lambda=0.9$. We consider that this is because balancing $lambda$ between the log likelihood of the density ratio and inverse density ratio makes the estimation error lower as discussed in \citet{Wooldridge2001}. In this case, because $n$ is larger than $m$. Therefore, weighting the log likelihood $\frac{1}{n}\sum^n_{i=1}\log r(X_i)$ more than $\frac{1}{m}\sum^m_{j=1}\log \frac{1}{r(X_i)}$ may make the estimation more accurate. 

Next, we change the mean vectors from the setting of Section~\ref{sec:regression_exp} as $\mu_p = (1,0,\dots, 0)^\top$ and $\mu_q = (0,0,\dots, 0)^\top$. The other settings are the same as that of Section~\ref{sec:regression_exp}. The results shown by boxplots in Figures~\ref{fig:box_plot1} and \ref{fig:box_plot2}.

It can be confirmed that the error can be reduced by adjusting $\lambda$ appropriately. For example, in Section~\ref{sec:regression_exp}, DRM with $\lambda=0.1$ shows the best performance. However, by observing the results carefully, we can find that there are cases where setting $lambda$ around $0,4$ may reduce the error more than setting with $\lambda=0$. We can also find that appropriate choices of $\lambda$ are also affected by the changes in the sample size ratio and the mean vectors.

\begin{table*}[t]
\centering
\caption{Negative Log-likelihood (NLL) and MMD, multiplied by $10^3$, results on six 2-d synthetic datasets. Lower is better.}
\vskip 0.15in
\label{tab:synthetic}
\scalebox{0.75}{
\begin{tabular}{|c|c|cccccc|}
\hline
Metric & GAN & MoG & Banana & Rings & Square & Cosine & Funnel \\
\hline
 \multirow{2}{*}{NLL}  &  WGAN  & $\bm{-2.59 \pm 0.04}$  & $-3.58 \pm 0.00$  & $\bm{-4.26 \pm 0.00}$  & $-3.74 \pm 0.00$  & $-3.99 \pm 0.00$  & $\bm{-3.58 \pm 0.01}$ \\
& KL-WGAN & $-2.55 \pm 0.01$  & $\bm{-3.59 \pm 0.01}$  & $-4.25 \pm 0.01$  & $\bm{-3.73 \pm 0.01}$  & $\bm{-4.00 \pm 0.02}$  & $-3.57 \pm 0.01$ \\
& SLoGAN & $-2.52 \pm 0.01$  & $-3.58 \pm 0.00$  & $\bm{-4.26 \pm 0.00}$  & $-3.71 \pm 0.01$  & $-3.99 \pm 0.00$  & $-3.57 \pm 0.00$  \\\hline
\multirow{2}{*}{MMD} & WGAN & $16.38 \pm 7.47$  & $2.32 \pm 0.80$  & $1.73 \pm 0.30$  & $1.48 \pm 0.39$  & $1.08 \pm 0.32$  & $1.83 \pm 0.72$ \\
& KL-WGAN & $\bm{2.26 \pm 0.22}$  & $1.96 \pm 0.12$  & $\bm{1.34 \pm 0.23}$  & $1.35 \pm 0.21$  & $\bm{1.00 \pm 0.13}$  & $\bm{1.16 \pm 0.31}$ \\
& SLoGAN & $5.89 \pm 1.49$  & $\bm{1.38 \pm 0.43}$  & $1.79 \pm 0.37$  & $\bm{0.73 \pm 0.09}$  & $1.10 \pm 0.24$  & $1.63 \pm 0.32$ \\\hline
\end{tabular}}
\end{table*}

\section{Experimental on Distribution Modeling} 
\label{appdx:generate}
We investigate the distribution modeling using DRM. Following \citet{Song2020}, we use the 2-d synthetic datasets include Mixture of Gaussians (MoG), Banana, Ring, Square, Cosine and Funnel; these datasets cover different modalities and geometries. We compare our proposed DRM with the WGAN and KL-WGAN, proposed by \citet{Song2020}.

After training, we draw 5,000 samples from the generator and then evaluate two metrics over a fixed validation set. One is the negative log-likelihood (NLL) of the validation samples on a kernel density estimator fitted over the generated samples; the other is the MMD (\citet{borgwardt2006integrating}) between the generated samples and validation samples. To ensure a fair comparison, we use identical kernel bandwidths for all cases.

\paragraph{Distribution modeling.}
We report the mean and standard error for the NLL and MMD results in Tables~\ref{tab:synthetic} (with 5 random seeds in each case). We illustrate the histograms of samples in Figure~\ref{fig:hist2d}. 
%, we can visually observe where our SLoGAN performs significantly better than WGAN. For example, WGAN fails to place enough probability mass in the center of the Gaussians in MoG and fails to learn a proper square in Square, unlike our KL-WGAN approaches.

\paragraph{Density ratio estimation.} We demonstrate that SLoGAN learns the density ratio simultaneously. We consider measuring the density ratio from synthetic datasets, and compare them with the the discriminators of WGAN, $f$-GAN with KL divergence, KL-WGAN. We evaluate the density ratio estimation quality by multiplying $\mathrm{d}\mathbb{Q}$ with the estimated density ratios, and compare that with the density of $\mathbb{P}$; ideally the two quantities should be identical. We demonstrate empirical results in Figure~\ref{fig:density-ratio}, where we plot the samples used for training, the ground truth density $p^*$ and the two estimates given by two methods. In terms of estimating density ratios, our proposed approach estimates it as well as f-GAN and KL-WGAN.

\paragraph{Stability of discriminator objectives.}
For the MoG and Square and Cosine datasets, we further show the estimated divergences over a batch of 256 samples in Figure~\ref{fig:div_curve}. While divergences of KL-WGAN and our proposed SLoGAN  decrease more stable tan that of WGAN.

\clearpage

\begin{figure*}[htbp]
\begin{center}
    \includegraphics[width=100mm]{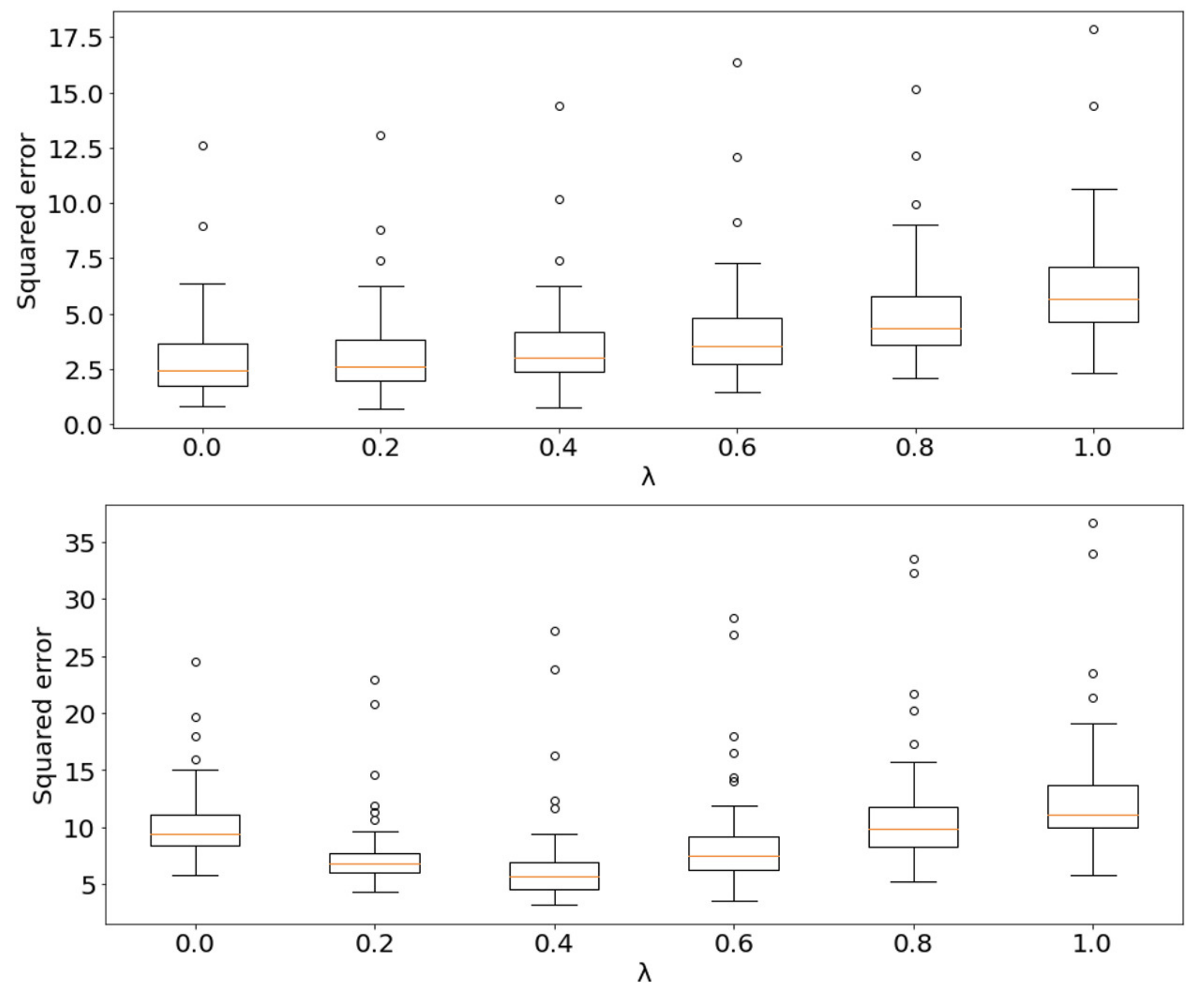}
\end{center}
\caption{Results of Appendix~\ref{appdx:sec:add_regression_exp}: The mean vectors are $\mu_p = (0,0,\dots, 0)^\top$ and $\mu_q = (1,0,\dots, 0)^\top$.}
\label{fig:box_plot1}

\begin{center}
    \includegraphics[width=100mm]{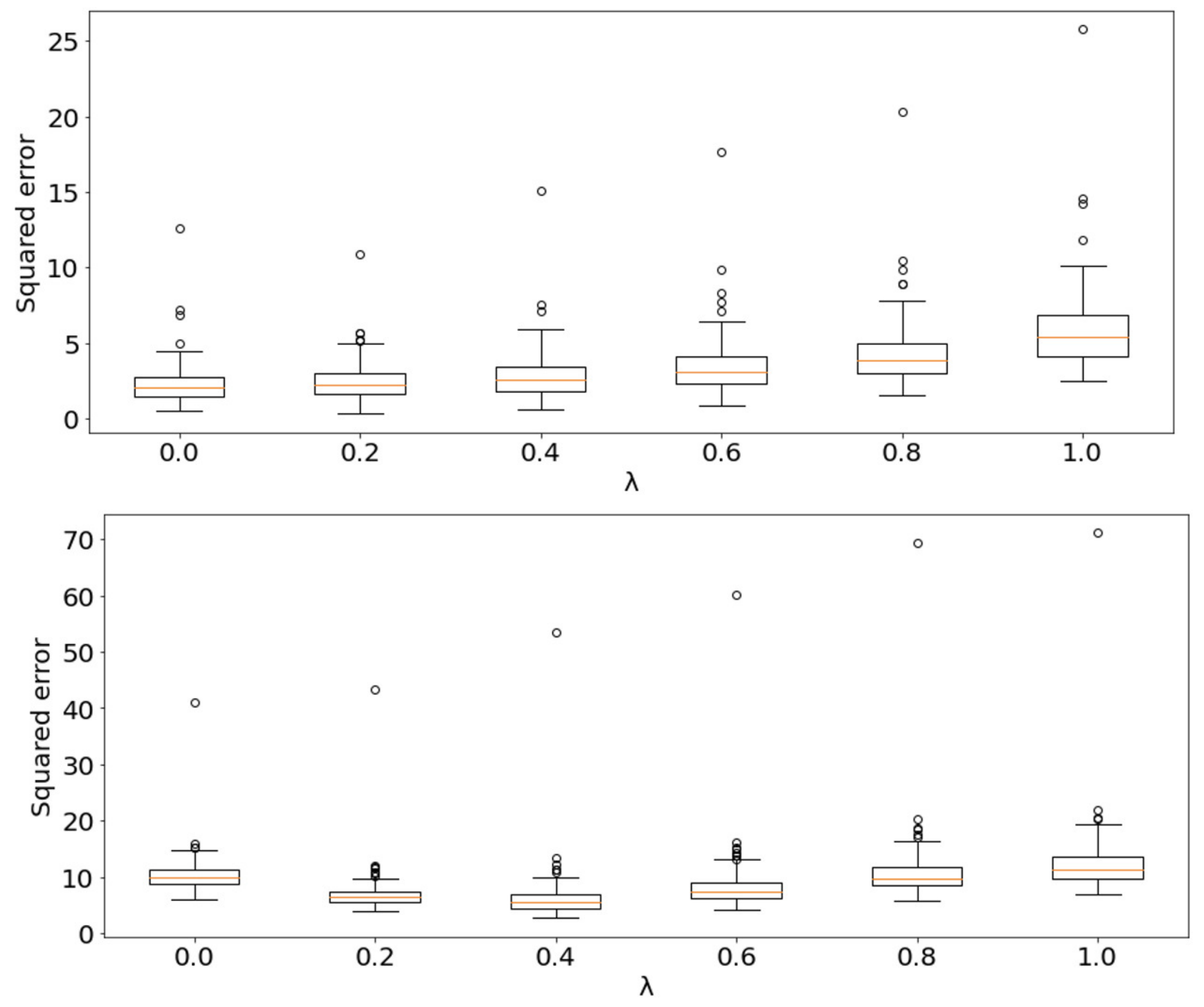}
\end{center}
\caption{Results of Appendix~\ref{appdx:sec:add_regression_exp}: The mean vectors are $\mu_p = (1,0,\dots, 0)^\top$ and $\mu_q = (0,0,\dots, 0)^\top$.}
\label{fig:box_plot2}
\end{figure*}

\begin{figure}[htbp]
\begin{center}
    \includegraphics[width=110mm]{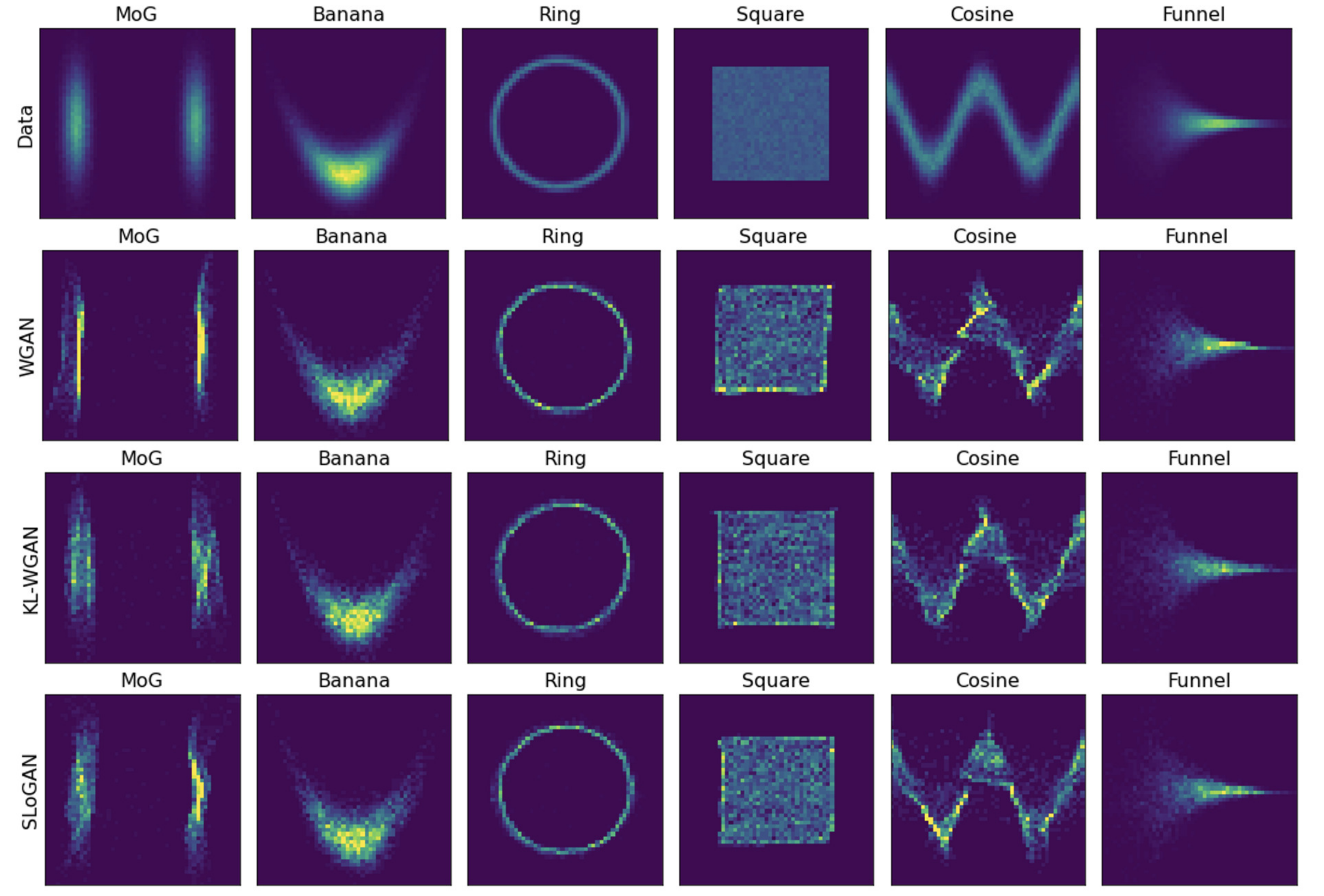}
\end{center}
\caption{Results of Appendix~\ref{appdx:generate}: Histograms of samples from the true data distribution, WGAN and KL-WGAN, and our SLoGAN}
\label{fig:hist2d}

\begin{center}
    \includegraphics[width=110mm]{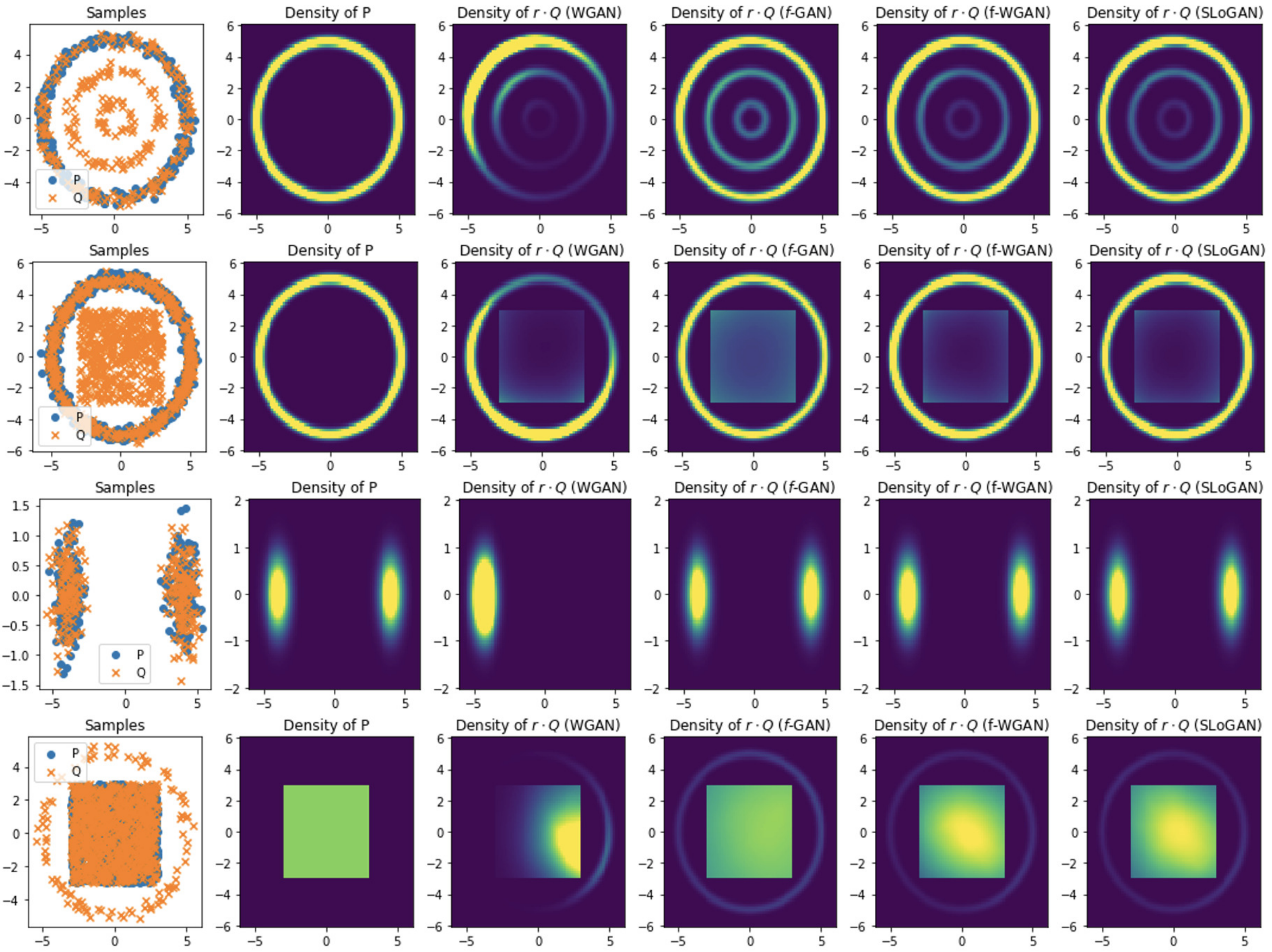}
\end{center}
\caption{Results of Appendix~\ref{appdx:generate}: Estimating density ratios. The first column contains the
samples used for training, the second column is the ground truth
density of $\mathbb{P}$ ($p^*$), the third and sixth columns are the density of $\mathbb{Q}$
times the estimated density ratios from WGAN (thrid
column), f-GAN (fourth
column), KL-WGAN (fifth column), and our SLoGAN (sixth column).}
\label{fig:density-ratio}

\begin{center}
    \includegraphics[width=115mm]{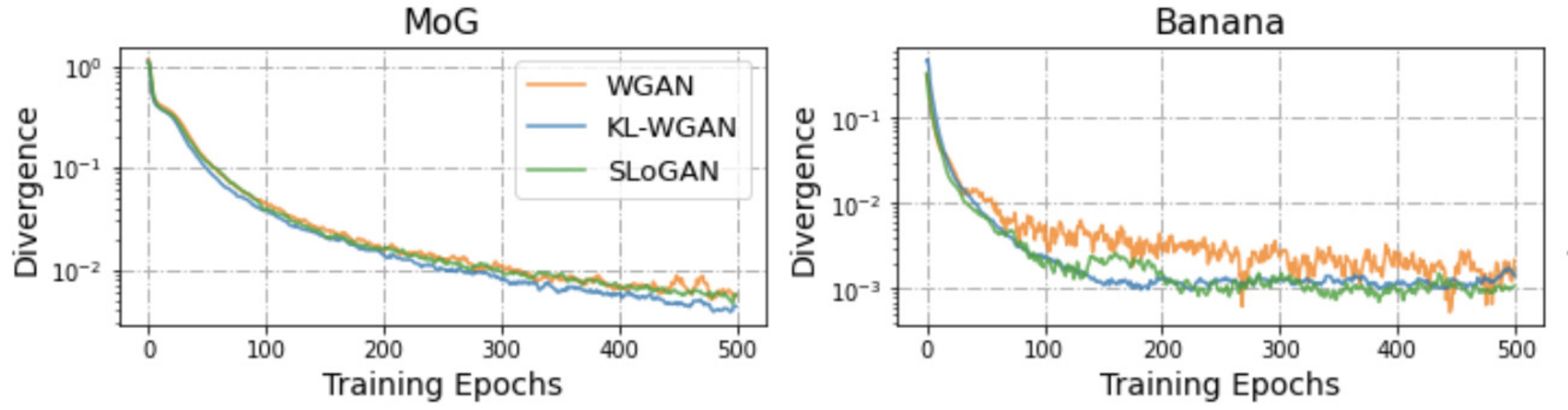}
\end{center}
\caption{Results of Appendix~\ref{appdx:generate}: Estimated divergence with respect to training epochs (smoothed with a window of $10$).}
\label{fig:div_curve}
\end{figure}

\end{document}